\documentclass{article}

\usepackage{style}
\usepackage[utf8]{inputenc} % allow utf-8 input
\usepackage[T1]{fontenc}    % use 8-bit T1 fonts
\usepackage{booktabs}       % professional-quality tables
\usepackage{nicefrac}       % compact symbols for 1/2, etc.
\usepackage{microtype}      % microtypography
\usepackage{url}
\RequirePackage[loading]{tracefnt}

\usepackage{amsmath}
\usepackage{subcaption}
\usepackage{amsfonts}
\usepackage{mathtools}
\usepackage{mathrsfs,bbm}
\usepackage{algorithm}
\usepackage{algorithmic}
\usepackage{comment}
\usepackage{xcolor}
\usepackage[shortlabels]{enumitem}
\usepackage{tikz}
\usetikzlibrary{arrows.meta,positioning,fit,calc}
\usepackage{xcolor}
\usepackage{wrapfig}

\usepackage[textsize=tiny,textwidth=2.0cm]{todonotes}
\newcommand{\up}{^\prime}

\newcommand{\ust}{^\star}

\newcommand{\bE}{\mathbb{E}}
\newcommand{\bN}{\mathbb{N}}
\newcommand{\bP}{\mathbb{P}}

\newcommand{\bR}{\mathbb{R}}

\newcommand{\cA}{\mathcal{A}}

\newcommand{\cF}{\mathcal{F}}
\newcommand{\cG}{\mathcal{G}}

\newcommand{\cO}{\mathcal{O}}
\newcommand{\ctO}{\tilde{\mathcal{O}}}
\newcommand{\cS}{\mathcal{S}}
\newcommand{\cT}{\mathcal{T}}
\newcommand{\cX}{\mathcal{X}}

\newcommand{\flbr}[1]{\left\{#1\right\}}
\newcommand{\sqbr}[1]{\left[#1\right]}
\newcommand{\br}[1]{\left(#1\right)}

\newcommand{\abs}[1]{\left|#1\right|}
\newcommand{\norm}[1]{\left\Vert #1 \right\Vert}

\newtheorem{assumption}{Assumption}
\tikzset{
    state/.style={circle, draw, thick, minimum size=7mm, inner sep=0pt},
    goalstate/.style={circle, draw, thick, fill=yellow!30, minimum size=7mm, inner sep=0pt},
    obslabel/.style={font=\scriptsize},
    wall/.style={fill=black!20, draw=black, thick},
    cell/.style={draw=black, thick, minimum width=8mm, minimum height=8mm},
    rock/.style={circle, draw=black, fill=gray!25, minimum size=5mm, inner sep=0pt},
    goodrock/.style={circle, draw=black, fill=green!30, minimum size=5mm, inner sep=0pt},
    badrock/.style={circle, draw=black, fill=red!25, minimum size=5mm, inner sep=0pt},
}

\makeatletter
\@ifundefined{keywords}{}{}
\makeatother

\title{Policy Gradient Methods for Non-Markovian Reinforcement Learning}
\editor{}
\author{Avik Kar\thanks{Department of Computer Science and Automation, Indian Institute of Science, Bengaluru, India. \email{avikkar@iisc.ac.in}}
, Siddharth Chandak\thanks{Equal contribution.}~\thanks{Department of Electrical Engineering, Stanford University, Stanford, CA, USA.}
, Rahul Singh\footnotemark[2]
, Soumitra Sinhahajari\thanks{Department of Electrical and Electronics Engineering, Nanyang Technological University, Singapore.}
, Eric Moulines\thanks{CMAP, CNRS, \'Ecole polytechnique, Institut Polytechnique \'de Paris, Palaiseau, France.}
, Shalabh Bhatnagar\footnotemark[1]
\and Nicholas Bambos \footnotemark[3]
}
\begin{document}

\maketitle

\begin{abstract}
We study policy gradient methods for reinforcement learning in non-Markovian decision processes (NMDPs), where observations and rewards depend on the entire interaction history. To handle this dependence, the agent maintains an internal state that is recursively updated to provide a compact summary of past observations and actions. In contrast to approaches that treat the agent state dynamics as fixed or learn it via predictive objectives, we propose a reward-centric formulation that jointly optimizes the agent state dynamics and the control policy to maximize the expected cumulative reward. To this end, we consider a class of Agent State-Markov (ASM) policies, comprising an agent state dynamics and a control policy that maps the agent state to actions. We establish a novel policy gradient theorem for ASM policies, extending the classical policy gradient results from the Markovian setting to episodic and infinite-horizon discounted NMDPs. Building on this gradient expression, we propose the Agent State-Markov Policy Gradient (ASMPG) algorithm, which leverages the recursive structure of the agent state dynamics for efficient optimization. We establish finite-time and almost sure convergence guarantees, and empirically demonstrate that, on a range of non-Markovian tasks, ASMPG outperforms baselines that learn state representations via predictive objectives.
\end{abstract}

% \begin{keywords}
%   Reinforcement learning, Non-Markovian decision processes, Policy gradient.
% \end{keywords}

\section{Introduction}
\label{sec:intro}

Reinforcement Learning (RL) provides a general framework for sequential decision-making and has enabled major advances across domains ranging from large language models (LLMs)~\citep{ouyang2022training} to robotics~\citep{levine2016end} and healthcare~\citep{yu2021reinforcement}. A large body of work  models the environment as a Markov decision process (MDP), where the dynamics and rewards depend only on the current state and action, and not on the past history. This assumption enables tractable analysis and algorithm design. However, in many real-world settings, the Markov property fails to hold, as the dynamics and rewards depend on the entire history of observations and actions~\citep{de2009inventory,pang2023condition}. Such environments are inherently ``non-Markovian'' and cannot be accurately captured by standard MDP formulations.

A canonical example of non-Markovianity arises in partially observable environments, where the agent must infer latent states from a history of observations to act optimally~\citep{kaelbling1998planning}. An intuitive instance appears in dialogue systems~\citep{li2016deep}, where appropriate responses depend on the full conversation history rather than the most recent utterance alone. Similar challenges arise in domains such as recommender systems~\citep{wu2021partially}, financial markets~\citep{liu2022finrl}, and healthcare~\citep{strobl2024modulate}, where the underlying dynamics depend on latent factors or long-term dependencies. In such settings, effective control requires extracting and maintaining a representation of relevant information from the trajectory.

Learning such representations, however, poses several challenges. While the full history is sufficient, its size grows with time, rendering direct use of history computationally intractable. Moreover, representation learning is tightly coupled with control, requiring the agent to jointly learn both a state representation and a policy. A common approach in non-Markovian RL is to learn such representations by optimizing them to predict future observations based on the current state \citep{subramanian2022approximate, chandak2024reinforcement}. These predictive representations aim to summarize the history in a way that is sufficient for modeling the environmental dynamics. However, learning such predictive representations can be challenging, as accurate prediction may require a high-dimensional state representation and can be difficult to learn in practice.

In this work, we take a different approach, motivated by the view that predicting future observations is merely an intermediate objective. Instead, we bypass this step and directly learn a state representation and policy optimized for maximizing reward, the ultimate objective in reinforcement learning. To this end, we introduce a class of Agent State-Markov (ASM) policies, where the agent follows a Markov policy with respect to an agent state that summarizes the history and is recursively updated. We extend the policy gradient theorem to this class of policies, which enables us to develop the \textbf{Agent State-Markov Policy Gradient (ASMPG)} algorithm for both episodic and infinite-horizon discounted non-Markovian decision processes (NMDPs). Empirically, we demonstrate that in various non-Markovian environments, ASMPG outperforms methods that learn state representations via predictive objectives.

\subsection{Main contributions} 
We summarize the main contributions of this work below.

\textbf{Reward-centric optimization of agent state dynamics and control policy.} 
We formulate policy search over the joint parameters of the recursive agent-state dynamics and the state-dependent control policy. In contrast to existing approaches that treat the agent state dynamics as fixed \citep{dong2022simple} or learn it via auxiliary predictive objectives \citep{subramanian2022approximate}, we propose a novel reward-centric formulation that departs from predictive objectives and directly optimizes the agent state dynamics and the control policy for the control objective.

\textbf{Policy gradient theorem for NMDPs and ASMPG algorithm.} 
To enable joint optimization, we establish a novel policy gradient theorem for non-Markovian RL, extending classical policy gradient results beyond the Markovian setting to both episodic and infinite-horizon discounted NMDPs. The resulting gradient admits a simple form that exploits the recursive structure of the agent state dynamics, depending on the history only through the previous state. This leads to the ASMPG algorithm, an efficient method for jointly learning the agent state dynamics and control policy.

\textbf{Theoretical guarantees and empirical performance.} 
Under standard assumptions, we establish a finite-time convergence rate of $O(1/\sqrt{K})$ for the time average of the squared gradient norm, together with almost-sure convergence of the gradient norm to zero under decreasing stepsizes. We evaluate ASMPG on five non-Markovian environments and compare it with the AIS-KL and AIS-MMD baselines of~\citet{subramanian2022approximate} under comparable representation sizes. Across these tasks, ASMPG performs favorably, supporting the benefit of a reward-centric formulation.

\subsection{Related work}
\label{sec:lit_survey}

Reinforcement learning in non-Markovian environments is largely organized around how the growing history is compressed. A classical case is partial observability: in a POMDP, the belief over latent states is a sufficient statistic for optimal control when the model is known or accurately estimated~\citep{astrom1965optimal,smallwood1973optimal,kaelbling1998planning,poupart2011closing}. Without such a model, common alternatives include finite observation windows~\citep{loch1998using}, finite-state controllers~\citep{poupart2003bounded}, predictive state representations~\citep{littman2001predictive,singh2003learning}, and recurrent history encoders~\citep{schmidhuber1990reinforcement,bakker2001reinforcement,ni2022recurrent}. These methods reduce history dependence, but typically rely on a fixed representation class or provide limited guarantees for end-to-end reward optimization.

More structured approaches explicitly define agent-side states. \citet{dong2022simple} study recursively updated agent states, but assume the state dynamics are fixed and optimize only the control policy. \citet{sinha2024periodic} take a related fixed agent state approach for POMDPs. Approximate information states~\citep{subramanian2022approximate} and related analyses~\citep{chandak2024reinforcement} provide guarantees when the representation is approximately sufficient for rewards and future prediction. Orthogonal automata-based approaches, such as regular decision processes and reward machines, expose temporal structure through finite-state abstractions~\citep{brafman2019regular, abadi2020learning, icarte2018using}. In contrast, our formulation is reward-centric: the recursive agent-state dynamics and the control policy are optimized jointly for cumulative reward.

The closest literature is policy-gradient learning with memory. GPOMDP estimates gradients for POMDPs without observing the latent state~\citep{baxter2001infinite}; finite-state controllers and structured POMDP policies have been optimized with gradient or actor-critic methods~\citep{meuleau1999learning,yu2005function}; sufficient statistics such as beliefs and PSRs have been combined with policy gradients~\citep{aberdeen2007policy}; and recurrent policy-gradient or natural actor-critic methods optimize RNN-based memory policies~\citep{wierstra2010recurrent, cayci2024finite, cayci2025recurrent}. Policy gradient method for MDPs is now well understood~\citep{sutton1999policy, agarwal2021theory, zhang2021sample}. Our contribution is to derive an ASM policy-gradient theorem for general NMDPs and to use it for joint optimization of the agent-state dynamics and the control policy, rather than optimizing only a fixed memory, a model-based sufficient statistic, or a generic recurrent policy.

% Our contribution is to derive an ASM policy-gradient theorem for general NMDPs and to use it for joint optimization of the agent-state dynamics and the control policy, rather than optimizing a control policy with fixed agent-state dynamics, a model-based sufficient statistic, or a generic recurrent controller.

\textbf{Notation}. $\bN$ denotes the set of natural numbers, and $[N]$ denotes the set consisting of the first $N$ natural numbers. For a set $\cX$, $\Delta_{\cX}$ denotes the space of probability distributions on $\cX$. In general, we use capital letters to denote random variables and small letters to denote their realizations.

\section{Episodic non-Markovian decision process}
\label{subsec:fhnmdp}
An episodic NMDP is specified by a five-tuple $(\mathcal O, \mathcal A, \{p_t\}_{t=1}^{H-1}, \{r_t\}_{t=1}^H, H)$, where $\mathcal O$ and $\mathcal A$ denote the finite observation and action spaces, $H$ is the episode duration, and $\{p_t\}_{t=1}^H$ and $\{r_t\}_{t=1}^H$ are the sequence of transition kernels and reward functions. The initial observation is sampled as $O_1\sim \mu$. At each step $t\in[H]$, the agent observes $O_t\in\cO$ and then chooses $A_t\in\cA$. We write $O_{1:t}\coloneqq (O_1,\ldots,O_t)$ and $A_{1:t}\coloneqq (A_1,\ldots,A_t)$.  The process is non-Markovian because the next observation and the reward function depend on the entire history until the current time-step, i.e., for all $t \in [H-1]$, $O_{t+1} \sim p_t(\cdot \mid O_{1:t}, A_{1:t})$, and the agent receives a reward $r_t(O_{1:t}, A_{1:t})$. We assume throughout that rewards are uniformly bounded, i.e., there exists $r_{\max}>0$ such that $\abs{r_t(\cdot)}\le r_{\max}$ for every $t\in[H]$. For $1\le t_1\le t_2\le H$, define the partial return
\begin{align*}
    R_{t_1:t_2} \coloneqq \sum_{t=t_1}^{t_2} r_t(O_{1:t},A_{1:t}),
\end{align*}
with the convention that an empty sum is zero.

Let $\Pi_{\mathrm{HR}}$ denote the class of history-dependent randomized policies. A policy $\rho\in\Pi_{\mathrm{HR}}$ is a sequence $\rho=\{\rho_t\}_{t=1}^H$ with $\rho_t:(\cO\times\cA)^{t-1}\times\cO\to\Delta_{\cA}$, where $\rho_t(\cdot\mid o_{1:t},a_{1:t-1})$ is the action distribution used after observing $(o_{1:t},a_{1:t-1})$. The benchmark objective is
\begin{align*}
    \rho^\star \in \arg\max_{\rho\in\Pi_{\mathrm{HR}}}
    \bE_{\rho,\mu}\sqbr{R_{1:H}},
\end{align*}
where $\bE_{\rho,\mu}$ denotes expectation under the trajectory distribution
induced by $\rho$ and $O_1\sim\mu$.

\subsection{Agent state and ASM policies}
\label{subsec:infostate}
To avoid conditioning directly on the growing history, the agent maintains an \emph{agent state} $S_t\in\cS$ at each time $t$, where $\cS$ is a finite state space~\citep{dong2022simple}. The state is updated recursively according to a stochastic kernel
\begin{align*}
    \nu_t : \cS\times\cA\times\cO \to \Delta_{\cS},
    \qquad
    S_t \sim \nu_t(\cdot\mid S_{t-1},A_{t-1},O_t),
    \quad t\in[H],
\end{align*}
with initialization $(S_0,A_0)=(s_0,a_0)$, where $(s_0,a_0)$ is a fixed dummy state-action pair used only to start the recursion. We write $\nu=\{\nu_t\}_{t=1}^H$ and call $\nu$ the \textbf{agent state dynamics}. A deterministic update is included as the special case in which $\nu_t(\cdot\mid s,a,o)$ is a Dirac mass. Through the recursion, $S_t$ is a summary of the interaction history available before choosing $A_t$.

Given an agent state, the control policy is Markovian in $S_t$. Specifically, let $\phi=\{\phi_t:\cS\to\Delta_{\cA}\}_{t=1}^H$ and sample
\begin{align*}
    A_t \sim \phi_t(\cdot\mid S_t),
    \quad t\in[H].
\end{align*}
The pair $(\nu,\phi)$ induces a history-dependent policy, denoted $\phi\circ\nu$. We refer to such policies as \textbf{Agent State-Markov (ASM) policies}.

\begin{wrapfigure}{!t}{0.55\textwidth}
\centering
\resizebox{\linewidth}{!}{\begin{tikzpicture}[
    >=Latex,
    x=0.88cm,
    y=0.80cm,
    font=\normalsize,
    obs/.style={draw, rounded corners=2pt, thick, fill=blue!18, minimum width=3.8cm, minimum height=1.45cm, align=center},
    act/.style={draw, rounded corners=2pt, thick, fill=green!20, minimum width=3.5cm, minimum height=1.2cm, align=center},
    st/.style={draw, rounded corners=2pt, thick, fill=orange!20, minimum width=4.0cm, minimum height=1.2cm, align=center},
    upd/.style={draw, rounded corners=2pt, thick, draw=violet!80!black, fill=violet!18, minimum width=3.8cm, minimum height=1.6cm, align=center},
    envbox/.style={draw=blue!80!black, rounded corners=6pt, thick, inner sep=10pt},
    agentbox/.style={draw=orange!85!black, rounded corners=6pt, thick, inner sep=8pt},
    lab/.style={font=\normalsize, align=center}
]

% Loop nodes
\node[act] (A) at (-3.6,0) {Chatbot response\\[1mm] $A_t \sim \phi_t(\cdot \mid S_t)$};
\node[upd] (U) at (3.6,0) {State update, $\nu_t$\\[1mm] $S_{t+1} \sim \nu_{t+1}(S_t, A_t, O_{t+1})$};
\node[obs] (O) at (0,2.75) {Next user utterance\\[1mm] $O_{t+1} \sim p_t(\cdot \mid O_{1:t}, A_{1:t})$};

% Group boxes
\node[blue!80!black, font=\bfseries\normalsize] at ($(O.north)+(0,0.5)$) {Non-Markovian environment};
\coordinate (AGENTLOOPL) at ($ (A.south west) + (-0.1,-1.2) $);
\coordinate (AGENTLOOPR) at ($ (U.south east) + (0.1,-1.2) $);
\node[agentbox, fit=(A)(U)(AGENTLOOPL)(AGENTLOOPR),
      label={[orange!85!black, yshift=-6pt]below:\bfseries\normalsize Agent}] (AGENTLOOP) {};

% Loop arrows
\draw[->, thick]
    ($(A.east)+(0,0.18)$) -- node[above, lab] {$A_t$} ($(U.west)+(0,0.18)$);

\draw[->, thick, rounded corners=10pt]
    (A.north) |- (O.west);

\draw[->, thick, rounded corners=10pt]
    (O.east) -- ++(0.45,0) -| node[pos=0.72, right, lab] {$O_{t+1}$} (U.north);

\draw[->, thick, rounded corners=10pt]
    (U.south) -- ++(0,-1.15) -| node[pos=0.75, left, lab] {$S_t$} (A.south);
\node[font=\normalsize, align=center, fill=white, inner sep=1pt] at (-1.9,-1.35) {Dialogue summary\\state};

\draw[->, thick, rounded corners=10pt]
    (U.south) -- ++(0,-1.15) -- ++(-3.0,0) -- node[pos=0.5, left, lab] {$S_t$} ++(0,1.85) -- ($(U.west)+(0,-0.38)$);
\end{tikzpicture}}
\caption{Chatbot as a non-Markovian environment with agent state.}
\label{fig:dialogue_asd}
\end{wrapfigure}

We illustrate the role of agent state and ASM policies using a chatbot~\citep{young2013pomdp}. At each time step $t$, the user (which is the environment in this example) utters $O_t \in \cO$. The agent selects an action $A_t \in \cA$, a response to the user, and receives a reward $r_t$ based on the user's satisfaction.~The environment is non-Markovian, as both the utterance and the reward depend on the entire conversation history. For example, a response such as ``yes'' can only be interpreted in the context of previous interactions.~To act effectively, the agent maintains a structured state $S_t \in \cS$ that summarizes relevant information from the chat history, such as the user's intent, previously mentioned entities, or the status of the dialogue. The state is updated recursively as $S_t \sim \nu_t(\cdot \mid S_{t-1}, A_{t-1}, O_t)$, where $\nu_t$ extracts information from the latest utterance and updates the state. The agent then selects actions based only on the current state, $A_t \sim \phi_t(\cdot \mid S_t)$. Figure~\ref{fig:dialogue_asd} illustrates this interaction.

\begin{remark}
    \citet{dong2022simple} consider a related notion of agent state-based Markov policies, but assume fixed agent state dynamics and optimize only the control policy. The approximate information state (AIS) framework of \citet{subramanian2022approximate} is defined with respect to predicting future observations; generalizing this notion to arbitrary objectives leads to the class of ASM policies. Thus, the policy classes in both works are subsumed by the class of ASM policies.
\end{remark}

We optimize directly over ASM policies:
\begin{align}
    (\phi^\star,\nu^\star) \in
    \arg\max_{\phi,\nu} \bE_{\phi\circ\nu,\mu}\sqbr{R_{1:H}}.
    \label{def:best_asm}
\end{align}
This restriction is not intrinsically lossy when the class of agent state dynamics is rich enough. In particular, if $\cS$ is allowed to contain all histories $(O_{1:t},A_{1:t-1})$, then the recursion can store the complete history and an ASM policy can emulate any policy in $\Pi_{\mathrm{HR}}$. In Appendix~\ref{app:asd_optimality}, we show that if an ``ideal'' agent state dynamics exists, then an ASM policy that uses it is optimal. In practice, the goal is to learn a much smaller state space that is sufficient for high reward.

\subsection{ASM policy gradient for episodic NMDPs}
\label{subsec:fhnmpg}
We now derive a policy-gradient identity for parameterized ASM policies. It is convenient to combine the agent state dynamics and control policy into a single kernel $\pi_t : \cS\times\cA\times\cO \to \Delta_{\cS\times\cA}$, where $\pi_t(s,a\mid \tilde{s},\tilde{a},o)$ is the conditional distribution of $(S_t,A_t)$ given $(S_{t-1},A_{t-1},O_t)=(\tilde{s},\tilde{a},o)$. Thus, $\pi_t$ can be factorized in the following manner.
\begin{align}\label{eqn-factors}
    \pi_t(s,a\mid \tilde{s},\tilde{a},o)
    = \nu_t(s\mid \tilde{s},\tilde{a},o)\,\phi_t(a\mid s),
    \quad
    s,\tilde{s}\in\cS,\ a,\tilde{a}\in\cA,\ o\in\cO.
\end{align}

Consider a differentiable parametric class
$\pi_\theta=\{\pi_{\theta,t}\}_{t=1}^H$, with $\theta\in\bR^d$, and define 
\begin{align*}
    J(\theta) \coloneqq \bE_{\pi_\theta,\mu}\sqbr{R_{1:H}},
\end{align*}
which is the expected sum of rewards over an episode that we wish to maximize over $\theta\in\bR^d$. We now state the corresponding policy gradient theorem. 
\begin{theorem}[ASM Policy Gradient Theorem for Episodic NMDPs] \label{thm:fhpg}
    Let $\{\pi_\theta = \{\pi_{\theta,t}\}_{t=1}^H \mid \theta \in \bR^d\}$ be a class of parameterized episodic ASM policies that are differentiable with respect to $\theta$, and have support independent of $\theta$ for all $t \in [H]$. Then,
    \begin{align}
        \nabla_\theta J(\theta)
        = \bE_{\pi_\theta,\mu}\sqbr{
            \sum_{t=1}^{H}
            R_{t:H}\,
            \nabla_\theta \log \pi_{\theta,t}(S_t,A_t\mid S_{t-1},A_{t-1},O_t)}.
        \label{eq:fhpg_return_form}
    \end{align}
\end{theorem}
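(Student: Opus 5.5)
We will prove the identity with the likelihood-ratio (score-function) argument applied to the joint trajectory of observations, agent states and actions, followed by a causality argument that discards past rewards. Let $\tau=(o_{1:H},s_{1:H},a_{1:H})$ denote a full trajectory, with the dummy pair $(s_0,a_0)$ fixed. Under $\pi_\theta$ and $O_1\sim\mu$, the chain rule gives
\begin{align*}
    \bP_\theta(\tau)=\mu(o_1)\prod_{t=1}^{H}\pi_{\theta,t}(s_t,a_t\mid s_{t-1},a_{t-1},o_t)\prod_{t=1}^{H-1}p_t(o_{t+1}\mid o_{1:t},a_{1:t}).
\end{align*}
Only the $\pi_{\theta,t}$ factors depend on $\theta$; the environment kernels $p_t$ and the initial law $\mu$ do not. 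Because $\cO,\cS,\cA$ are finite and $H<\infty$, we can write $J(\theta)=\sum_\tau \bP_\theta(\tau)R_{1:H}(\tau)$ as a finite sum. We may therefore differentiate term by term, with no dominated-convergence issue.

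Since the support of $\pi_{\theta,t}$ does not depend on $\theta$, the set of trajectories with positive probability is the same for every $\theta$. On this set $\nabla_\theta\bP_\theta(\tau)=\bP_\theta(\tau)\nabla_\theta\log\bP_\theta(\tau)$. Trajectories outside the set have probability zero for all $\theta$, so their gradient also vanishes. Taking logs of the factorization above, only the policy factors survive differentiation, which gives
\begin{align*}
    \nabla_\theta J(\theta)=\bE_{\pi_\theta,\mu}\Big[R_{1:H}\sum_{t=1}^H\nabla_\theta\log\pi_{\theta,t}(S_t,A_t\mid S_{t-1},A_{t-1},O_t)\Big].
\end{align*}

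It remains to replace $R_{1:H}$ by $R_{t:H}$ in the $t$-th term. To do so, we show that $\bE[r_k(O_{1:k},A_{1:k})\,\nabla_\theta\log\pi_{\theta,t}(S_t,A_t\mid S_{t-1},A_{t-1},O_t)]=0$ for every $k<t$. Let $\cF_t=\sigma(O_{1:t},S_{0:t-1},A_{0:t-1})$. Then $r_k$ is $\cF_t$-measurable for $k<t$. Moreover, conditionally on $\cF_t$, the pair $(S_t,A_t)$ has law $\pi_{\theta,t}(\cdot\mid S_{t-1},A_{t-1},O_t)$. This holds by the definition of the ASM recursion: $\nu_t$ and $\phi_t$ use only $(S_{t-1},A_{t-1},O_t)$ and fresh randomness. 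By the tower property it therefore suffices to compute
\begin{align*}
    \bE\big[\nabla_\theta\log\pi_{\theta,t}(S_t,A_t\mid\cdot)\mid\cF_t\big]=\sum_{(s,a)\in\mathrm{supp}}\nabla_\theta\pi_{\theta,t}(s,a\mid\cdot)=\nabla_\theta\sum_{(s,a)}\pi_{\theta,t}(s,a\mid\cdot)=\nabla_\theta 1=0.
\end{align*}
Here the support condition is used again: summing over the support is the same as summing over all of $\cS\times\cA$. Substituting $R_{1:H}=R_{1:t-1}+R_{t:H}$ in the $t$-th term and dropping the vanishing part then yields \eqref{eq:fhpg_return_form}.

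The main point requiring care is this conditional-independence step. The environment is non-Markovian, so we must make sure that conditioning on the full past (observations, agent states and actions) is legitimate. This works because $p_t$ depends on $(O_{1:t},A_{1:t})$ but never on the agent's internal states. Hence, given $\cF_t$, the conditional law of the new pair $(S_t,A_t)$ is exactly the policy kernel $\pi_{\theta,t}$, and the joint factorization above is valid.
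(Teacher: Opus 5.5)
Your proof is correct and follows the same route as the paper. It applies the log-derivative trick to the factorized trajectory law, where only the $\pi_{\theta,t}$ factors depend on $\theta$ and the $\theta$-independent support justifies the trick, and then uses the tower property over $\cF_t$ to show the score has zero conditional mean so that $R_{1:t-1}$ drops out. The only cosmetic difference is that the paper first proves the identity conditionally on $h_t=(o_{1:t},a_{0:t-1},s_{t-1})$ for every $t$ (Proposition~\ref{prop:fhpg}, reused for the $Q$-form and the smoothness lemma) and then averages over $\mu$ at $t=1$, whereas you work directly with the unconditional law that includes the factor $\mu(o_1)$.
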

See Appendix~\ref{subapp:pg_episodic} for the proof. The above theorem implies that from a single trajectory generated by $\pi_\theta$, the agent can compute 
\begin{align*}
    \sum_{t=1}^{H}
    R_{t:H}\,
    \nabla_\theta \log \pi_{\theta,t}(S_t,A_t\mid S_{t-1},A_{t-1},O_t),
\end{align*}
which is the unbiased stochastic gradient estimator of $\nabla_\theta J(\theta)$. 
This reward-to-go form is similar to REINFORCE~\citep{williams1992simple}, where the rewards earned before time $t$ do not contribute to the conditional score at time $t$. This yields the episodic Agent State-Markov Policy Gradient algorithm (Algorithm~\ref{algo:fhpg}), an online policy gradient algorithm for episodic NMDPs. Due to the factorizability property of ASM policies~\eqref{eqn-factors}, we can decompose the gradient in the following form.
\begin{align*}
    \nabla_\theta \log \pi_{\theta,t}(S_t, A_t \mid S_{t-1}, A_{t-1}, O_t) = \nabla_\theta \log \nu_{\theta,t}(S_t \mid S_{t-1}, A_{t-1}, O_t) + \nabla_\theta \log \phi_{\theta,t}(A_t \mid S_t).
\end{align*}
This enables us to use decoupled kernels for the agent state
dynamics and the control policy, and update both of them separately.

\begin{algorithm}[t]
    \caption{Agent State-Markov Policy Gradient (ASMPG) for Episodic NMDPs}
    \label{algo:fhpg}
    \begin{algorithmic}
        \STATE {\bfseries Input} horizon $H$, stepsizes $\{\alpha_k\}_{k\ge 1}$, initial parameter $\theta_1$
        \FOR{$k=1,2,\ldots$}
            \STATE Play $\pi_{\theta_k}$ for one episode and collect $(o^k_{1:H},s^k_{0:H},a^k_{0:H})$, the trajectory of the $k$-th episode
            \STATE Compute
            \vspace{-4pt}
            \begin{align*}
                \widehat g_k
                = \sum_{t=1}^{H}
                \br{\sum_{\ell=t}^{H} r_\ell(o^k_{1:\ell},a^k_{1:\ell})}
                \nabla_\theta \log \pi_{\theta_k,t}(s^k_t,a^k_t\mid s^k_{t-1},a^k_{t-1},o^k_t).
            \end{align*}
            \vspace{-12pt}
            \STATE Update $\theta_{k+1}=\theta_k+\alpha_k\widehat g_k$
        \ENDFOR
    \end{algorithmic}
\end{algorithm}

\subsubsection{Convergence guarantees}
We next state non-asymptotic convergence guarantees for Algorithm~\ref{algo:fhpg}.
The analysis relies on the following uniform score and Hessian bounds.

\begin{assumption}
\label{assum:smooth_parame_fh}
    The parametrized ASM kernels $\{\pi_{\theta,t}\}_{t=1}^H$ satisfy the
    following property: there exist constants $G,M>0$ such that, for every
    $t\in[H]$, $s,\tilde{s}\in\cS$, $a,\tilde{a}\in\cA$, $o\in\cO$, and
    $\theta\in\bR^d$,
    \begin{align*}
        \norm{\nabla_\theta \log \pi_{\theta,t}(s,a\mid \tilde{s},\tilde{a},o)}_2 \le G,
        \qquad
        \norm{\nabla_\theta^2 \log \pi_{\theta,t}(s,a\mid \tilde{s},\tilde{a},o)}_2 \le M.
    \end{align*}
\end{assumption}

\begin{remark}
    Assumption~\ref{assum:smooth_parame_fh} is standard in finite-time analyses of policy-gradient methods. For example, Lemma~\ref{lem:softmax_asm} in Appendix~\ref{app:aux} verifies the assumption for a tabular softmax parametrization of the combined kernel $\pi_{\theta,t}$, with $G=\sqrt{2}$ and $M=1$. If the combined kernel is instead implemented by separate tabular softmax parametrizations of $\nu_t$ and $\phi_t$, the same assumption holds with different universal constants.
\end{remark}

The next lemma gives a uniform smoothness bound for the non-Markovian objective. 
\begin{lemma}[Smoothness of the episodic ASM objective]\label{lem:finite_smooth}
    Suppose that the setting for Theorem~\ref{thm:fhpg} and Assumption~\ref{assum:smooth_parame_fh} hold. Then $J$ is $\beta$-smooth, i.e., $\left\|\nabla_\theta^2 J(\theta)\right\|_2 \le \beta$ for all $\theta\in\mathbb{R}^d$, where
    \[
        \beta
        :=
        \frac{r_{\max}H(H+1)}{6}
        \left(3M+G^2(2H+1)\right).
    \]
\end{lemma}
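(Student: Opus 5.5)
We will compute the Hessian of $J$ directly from the trajectory-likelihood representation and bound each term, mirroring the standard smoothness argument for REINFORCE-type objectives. The constant $\frac{H(H+1)}{6}(3M+G^2(2H+1))$ is $\frac{M}{2}\sum_{t=1}^H t + G^2\sum_{t=1}^H t^2$, which points to a per-reward decomposition.

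\textbf{Step 1 (per-reward decomposition).} Write $J(\theta)=\sum_{\ell=1}^H J_\ell(\theta)$ with $J_\ell(\theta)=\bE_{\pi_\theta,\mu}[r_\ell(O_{1:\ell},A_{1:\ell})]$. The reward $r_\ell$ depends only on the truncated trajectory $\tau_\ell=(O_{1:\ell},S_{0:\ell},A_{0:\ell})$. Its probability factorizes as
\[
P_\theta(\tau_\ell)=\mu(o_1)\prod_{t=1}^{\ell}\pi_{\theta,t}(s_t,a_t\mid s_{t-1},a_{t-1},o_t)\prod_{t=1}^{\ell-1}p_t(o_{t+1}\mid o_{1:t},a_{1:t}).
\]
Only the $\pi$ factors depend on $\theta$. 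Since all spaces are finite and the support does not depend on $\theta$, $J_\ell$ is a finite sum and we may differentiate term by term. Define $\Psi_\ell(\theta)=\sum_{t=1}^\ell \log\pi_{\theta,t}(S_t,A_t\mid S_{t-1},A_{t-1},O_t)$. Then
\[
\nabla^2 J_\ell(\theta)=\bE\big[r_\ell\big(\nabla^2\Psi_\ell+\nabla\Psi_\ell\nabla\Psi_\ell^\top\big)\big].
\]
This follows from $\nabla^2 P=P(\nabla^2\log P+\nabla\log P\,\nabla\log P^\top)$.

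\textbf{Step 2 (crude bound and refinement).} Assumption~\ref{assum:smooth_parame_fh} gives $\|\nabla^2\Psi_\ell\|\le \ell M$ and $\|\nabla\Psi_\ell\|\le \ell G$. A crude bound is therefore $r_{\max}(\ell M+\ell^2G^2)$. Summing over $\ell$ gives $r_{\max}\big(M\tfrac{H(H+1)}{2}+G^2\tfrac{H(H+1)(2H+1)}{6}\big)$. The stated $\beta$ has $M/2$ in place of $M$, so the Hessian term needs a factor-of-two improvement.

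\textbf{Step 3 (sharpening the Hessian term).} This is the main obstacle. We have not pinned down the exact route to the $M/2$; the options below are candidates. The first thing to try is to exploit the identity $\bE[\nabla^2\log\pi_t+\nabla\log\pi_t\nabla\log\pi_t^\top\mid\text{past}]=0$, which holds because $\sum\nabla^2\pi_t=0$. This lets us rewrite $\bE[r_\ell\nabla^2\log\pi_t]$ and trade part of the Hessian contribution for score outer products. One can also center $r_\ell$ by conditional baselines, using that score terms have conditional mean zero. A further option is to split the two groups of terms symmetrically, with equal weights, so the curvature contribution halves. If none of these yields exactly $M/2$, the crude constant from Step 2 still proves $\beta$-smoothness with a constant of the same order, and we would then reconcile the bookkeeping with the stated formula.

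Finally, sum over $\ell$ using $\sum_{\ell}\ell=H(H+1)/2$ and $\sum_\ell \ell^2=H(H+1)(2H+1)/6$.
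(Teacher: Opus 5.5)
Your Steps 1--2 already prove the lemma with exactly the stated constant. The ``factor-of-two deficit'' that motivates Step 3 comes from an arithmetic slip in your opening rewriting of $\beta$. In fact $\frac{H(H+1)}{6}\cdot 3M=\frac{MH(H+1)}{2}=M\sum_{t=1}^H t$, not $\frac{M}{2}\sum_{t=1}^H t$. Hence
\[
\beta = r_{\max}\Big(M\sum_{\ell=1}^H \ell + G^2\sum_{\ell=1}^H \ell^2\Big) = \sum_{\ell=1}^H r_{\max}\big(\ell M+\ell^2G^2\big),
\]
which is precisely your ``crude'' bound.

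Step 1 is sound. The trajectory space is finite and the support is $\theta$-independent, so $\nabla^2 P_\theta = P_\theta(\nabla^2\log P_\theta+\nabla\log P_\theta\,\nabla\log P_\theta^\top)$ on the support and $\nabla^2 P_\theta=0$ off it. Step 2 correctly uses $\|\nabla\Psi_\ell\nabla\Psi_\ell^\top\|_2=\|\nabla\Psi_\ell\|_2^2\le \ell^2G^2$. Delete Step 3: it pursues a bound strictly stronger than the lemma, and none of the candidate manipulations there is needed.

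Your route differs from the paper's. The paper differentiates the $Q$-form $\nabla J=\mathbb{E}[\sum_t Q_t Z_t]$ and obtains three terms: $\mathbb{E}[\sum_t Q_tB_t]$, $\mathbb{E}[\sum_t Z_t(\nabla Q_t)^\top]$ and $\mathbb{E}[\sum_t Q_tZ_t(\sum_{j\le t}Z_j)^\top]$. It must then bound $\nabla Q_t$ by re-applying the $Q$-form of the episodic policy-gradient proposition to $V_{t+1}$. That step relies on the ASM recursion, under which $(O_{1:t},A_{0:t},S_t)$ suffices to condition the future.

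The paper's bounds are $r_{\max}M\sum_t(H-t+1)$ and $r_{\max}G^2\big[\tfrac{(H-1)H(H+1)}{6}+\tfrac{H(H+1)(H+2)}{6}\big]$. These are the same quantities $r_{\max}M\sum_\ell\ell$ and $r_{\max}G^2\sum_\ell\ell^2$ that you get, counted by decision time $t$ rather than reward time $\ell$.

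Your per-reward decomposition is more elementary and self-contained. It needs only the factorization of the truncated trajectory law: no value functions, no bound on $\nabla Q_t$, and no appeal to the policy-gradient theorem. The paper's route reuses machinery it has already built and yields a bound on $\nabla Q_t$ as a by-product.

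Your argument also transfers directly to the discounted case. There $r_{\max}\sum_{\ell\ge1}\gamma^{\ell-1}(\ell M+\ell^2G^2)=\frac{r_{\max}}{(1-\gamma)^2}\big(M+\frac{G^2(1+\gamma)}{1-\gamma}\big)=\beta_\gamma$. Term-by-term differentiation is justified by the uniform summability of these bounds.
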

Combining Lemma~\ref{lem:finite_smooth} with the boundedness of the stochastic gradient estimator gives the following high-probability convergence bound.
\begin{theorem}[Finite-time convergence of ASMPG]\label{thm:conv_fhpg}
     Consider the iterates $\{\theta_k\}_{k\ge 1}$ generated by Algorithm~\ref{algo:fhpg}. Suppose that the setting for Theorem~\ref{thm:fhpg} and Assumption~\ref{assum:smooth_parame_fh} hold. Let $\Delta_1\coloneqq \sup_{\theta\in\bR^d} J(\theta)-J(\theta_1)$, let $\beta$ be as in Lemma~\ref{lem:finite_smooth}, and set $C_H\coloneqq r_{\max}G H(H+1)$. Fix $K\ge 1$ and $\delta\in(0,1)$. If
    \begin{align*}
        \alpha_k = \min\flbr{\frac{1}{\beta},\frac{1}{C_H}\sqrt{\frac{\Delta_1}{\beta K}}}, \quad k=1,\ldots,K,
    \end{align*}
    then, with probability at least $1-\delta$,
    \begin{align*}
        \frac{1}{K}\sum_{k=1}^{K}\norm{\nabla_\theta J(\theta_k)}_2^2 \leq \frac{2\beta\Delta_1}{K} + 5 C_H\sqrt{\frac{\beta\Delta_1}{K}} + \frac{12C_H^2\log(1/\delta)}{K}.
    \end{align*}
\end{theorem}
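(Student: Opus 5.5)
The argument is the standard nonconvex SGD ascent analysis, combined with a martingale concentration step to get the high-probability statement. By Theorem~\ref{thm:fhpg}, $\widehat g_k$ is an unbiased estimate of $\nabla_\theta J(\theta_k)$ given $\mathcal F_{k-1}$, the sigma-field generated by the first $k-1$ episodes. By Assumption~\ref{assum:smooth_parame_fh} and $|R_{t:H}|\le r_{\max}(H-t+1)$ we get
\[
\|\widehat g_k\|_2\le G r_{\max}\sum_{t=1}^H (H-t+1)=C_H/2.
\]
Hence $\|\nabla J(\theta_k)\|_2\le C_H/2$, and the noise $\xi_k:=\widehat g_k-\nabla J(\theta_k)$ satisfies $\|\xi_k\|_2\le C_H$.

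\textbf{Ascent step.} Lemma~\ref{lem:finite_smooth} gives $\beta$-smoothness, so
\[
J(\theta_{k+1})\ge J(\theta_k)+\alpha\langle\nabla J(\theta_k),\widehat g_k\rangle-\tfrac{\beta\alpha^2}{2}\|\widehat g_k\|_2^2 .
\]
Write the inner product as $\|\nabla J(\theta_k)\|^2+\langle\nabla J(\theta_k),\xi_k\rangle$ and bound $\|\widehat g_k\|^2\le C_H^2/4$. Summing over $k=1,\dots,K$ and using $J(\theta_{K+1})-J(\theta_1)\le\Delta_1$ gives
\[
\alpha\sum_k\|\nabla J(\theta_k)\|^2\le \Delta_1-\alpha\sum_k\langle\nabla J(\theta_k),\xi_k\rangle+\tfrac{\beta\alpha^2 K C_H^2}{8}.
\]

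\textbf{Martingale term.} This is the main obstacle. We need a bound that scales with $\sum_k\|\nabla J(\theta_k)\|^2$, so that it can be absorbed into the left-hand side and the $\log(1/\delta)$ term is not multiplied by $\sqrt K$. The increments $X_k=-\langle\nabla J(\theta_k),\xi_k\rangle$ form a martingale difference sequence with $|X_k|\le C_H\|\nabla J(\theta_k)\|$ and conditional variance at most $C_H^2\|\nabla J(\theta_k)\|^2$. I would apply Freedman's inequality, or equivalently the exponential supermartingale bound $\bE[e^{\lambda X_k-\lambda^2C_H^2\|\nabla J(\theta_k)\|^2}\mid\mathcal F_{k-1}]\le1$ for a suitable fixed $\lambda$ (e.g.\ $\lambda\asymp 1/C_H^2$, using sub-Gaussianity of bounded variables with conditional mean zero). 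With probability $1-\delta$ this yields
\[
\sum_k X_k\le \tfrac12\sum_k\|\nabla J(\theta_k)\|^2+c\,C_H^2\log(1/\delta).
\]
Substituting, absorbing the half, and dividing by $\alpha K/2$:
\[
\frac1K\sum_k\|\nabla J(\theta_k)\|^2\le \frac{2\Delta_1}{\alpha K}+\frac{\beta\alpha C_H^2}{4}+\frac{2cC_H^2\log(1/\delta)}{K}.
\]
Since $\alpha$ cancels in the log term, the constant $12$ should follow from tracking $c$.

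\textbf{Stepsize cases.} If $\alpha=\frac1{C_H}\sqrt{\Delta_1/(\beta K)}$, the first two terms are $2C_H\sqrt{\beta\Delta_1/K}$ and $\tfrac14C_H\sqrt{\beta\Delta_1/K}$. If instead $\alpha=1/\beta$, which happens when $1/\beta\le\frac1{C_H}\sqrt{\Delta_1/(\beta K)}$, the first term is $2\beta\Delta_1/K$ and the second is $\beta C_H^2/(4\beta)\cdot\alpha\beta\le\tfrac14C_H\sqrt{\beta\Delta_1/K}$. Both cases are bounded by $2\beta\Delta_1/K+5C_H\sqrt{\beta\Delta_1/K}$ plus the log term, with slack in the constant $5$ to cover looser bookkeeping.
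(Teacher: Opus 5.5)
Your proposal is correct and follows the paper's proof essentially step for step. The paper uses the same almost-sure bound $\|\widehat g_k\|_2\le C_H/2$ and the same smoothness-based ascent inequality. It then applies the same self-normalized exponential-supermartingale argument (Hoeffding's lemma plus Markov's inequality), with $\lambda$ chosen so that the martingale term absorbs half of $\sum_k\|\nabla_\theta J(\theta_k)\|_2^2$. Two small differences in bookkeeping:
\begin{itemize}
\item The paper keeps $\alpha_k$ inside the martingale sum and uses $\alpha_k^2\le \alpha_k\max_j\alpha_j$. For a constant stepsize this reduces to your choice $\lambda\asymp 1/C_H^2$, and it gives a coefficient on the $\log(1/\delta)$ term well below $12$.
\item The paper treats both branches of the min at once, via $1/\alpha\le\beta+C_H\sqrt{\beta K/\Delta_1}$ and $\alpha\le\frac{1}{C_H}\sqrt{\Delta_1/(\beta K)}$, rather than by cases.
\end{itemize}

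The one thing you omit is the degenerate case $\Delta_1=0$. There $\alpha=0$, so you cannot divide by $\alpha K/2$. The bound still holds trivially: $\theta_1$ is a global maximizer, so $\nabla_\theta J(\theta_1)=0$ and the iterates never move.
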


The proofs of Lemma~\ref{lem:finite_smooth} and Theorem~\ref{thm:conv_fhpg} are deferred to Appendix~\ref{subapp:smooth_episodic} and Appendix~\ref{subapp:proof_conv_episodic}, respectively. Additional in-expectation bounds, as well as finite-time and almost sure convergence results under decreasing stepsizes, are provided in Appendix~\ref{subapp:proof_conv_episodic}. For fixed $r_{\max}$, $G$, $M$, and $\Delta_1$, the leading term in Theorem~\ref{thm:conv_fhpg} scales as $\ctO(H^{3.5}/\sqrt{K})$ for the time-averaged squared gradient norm.
\section{Infinite-horizon discounted non-Markovian decision process}
\label{sec:ihdrnmdp}

An infinite-horizon discounted NMDP is described by a five-tuple $(\cO, \cA, \{p_t\}_{t=1}^\infty, \{r_t\}_{t=1}^\infty, \gamma)$, where $\gamma \in (0,1)$ is the discount factor.~As in the finite-horizon case, $O_t \in \cO$ is the observation and $A_t \in \cA$ is the action at step $t$, but now the process evolves indefinitely. The observation process evolves as $O_{t+1} \sim p_t(\cdot \mid O_{1:t}, A_{1:t})$, for $t = 1,2,\ldots$, and at the $t$-th step the agent receives the reward $r_t(O_{1:t},A_{1:t})$. For $1\le t_1\le t_2 < \infty$, define the discounted partial return
\begin{align*}
    R^\gamma_{t_1:t_2} \coloneqq \sum_{t=t_1}^{t_2}{\gamma^{t-t_1} r_t(O_{1:t},A_{1:t})},
\end{align*}
with the convention that an empty sum is zero, and define $R^\gamma_{t_1:\infty} = \lim_{t_2 \to \infty}{R^\gamma_{t_1:t_2}}$. The limit exists due to discounting and the boundedness of the rewards. The benchmark objective is to maximize the expected value of the cumulative discounted rewards $R^\gamma_{1:\infty}$, i.e., solving the following infinite-horizon discounted optimization problem,
\begin{align*}
    \pi^\star \in \arg\max_{\pi \in \Pi_{\text{HR}}} \bE_{\pi,\mu}\sqbr{R^\gamma_{1:\infty}},
\end{align*}
where once again $\Pi_{\text{HR}}$ is the class of history-dependent policies, and a policy from the set $\Pi_{\text{HR}}$ at each step $t$ prescribes an action as a function of the entire history until $t$. Note that, in this case, employing a general history-dependent policy is practically impossible due to an exponentially growing history space up to an indefinite time. On the other hand, a stationary ASM policy $\pi = \phi \circ \nu$, that generates the agent states and actions using time-independent functions, i.e., $S_t \sim \nu(\cdot \mid S_{t-1}, A_{t-1}, O_t)$ and $A_t \sim \phi(\cdot \mid S_t)$ for all $t \geq 1$, can be efficiently implemented for an infinite-horizon environment.

\subsection{ASM policy gradient for infinite-horizon discounted NMDPs}
\label{subsec:ihdpg}
In this section, we first present the policy gradient theorem for infinite-horizon discounted NMDPs that allows us to maximize the discounted reward over the space of given parametric stationary ASM policies. Under a stationary ASM policy $\pi$, we have $(S_t, A_t) \sim \pi(\cdot,\cdot \mid S_{t-1},A_{t-1},O_t), t\ge 1$. We are interested in maximizing the expected cumulative discounted rewards over the parametric class of stationary ASM policies:
\begin{align}\label{obj:ihdpg}
    \max_{\theta \in \bR^d}{J_\gamma(\theta)}, \quad \mbox{where } J_\gamma(\theta) \coloneqq \bE_{\pi_\theta,\mu}\sqbr{R^\gamma_{1:\infty}},
\end{align}
where $\pi_\theta : \cS \times \cA \times \cO \to \Delta_{\cS \times \cA}$ is the stationary ASM policy parameterized by $\theta \in \bR^d$, $d \in \bN$. For a stationary ASM policy, we can also define the corresponding Q functions.
\begin{align*}
    Q^{\gamma,\pi}_t(o_{1:t},a_{0:t},s_t)
    &\coloneqq \bE_\pi\sqbr{R^\gamma_{t:\infty} \mid O_{1:t}=o_{1:t}, A_{0:t}=a_{0:t}, S_t=s_t},
\end{align*}
where $t\ge 1$, and $\bE_\pi$ denotes expectation with respect to the distribution induced by policy $\pi$. Next, we present the policy gradient theorem for the infinite-horizon discounted NMDPs.
\begin{theorem}[ASM Policy Gradient Theorem for Infinite Horizon Discounted NMDPs] \label{thm:ihpg}
    Let $\{\pi_\theta \mid \theta \in \bR^d\}$ be a class of parameterized stationary ASM policies that are differentiable with respect to the parameter vector $\theta$, and the support of $\pi_\theta$ is independent of $\theta$. Suppose that there exists $r_{\max} > 0$ such that $|r_t(\cdot)| \le r_{\max}$ for every $t \ge 1$, and there exists $G > 0$ such that
    \begin{align*}
        \norm{\nabla_\theta \log\br{\pi_\theta(s,a \mid \tilde{s},\tilde{a},o)}}_2 \le G
    \end{align*}
    for every $s,\tilde{s} \in \cS$, $a,\tilde{a} \in \cA$, $o \in \cO$, and $\theta \in \bR^d$. Then we have,
    \begin{align}
        \nabla_\theta J_\gamma(\theta)
        &= \bE_{\pi_\theta,\mu}\sqbr{\sum_{t=1}^{\infty}{\gamma^{t-1} R^\gamma_{t:\infty} \nabla_\theta \log \pi_\theta(S_t,A_t \mid S_{t-1},A_{t-1},O_t)}} 
        \label{eq:ih_pg_return_form}\\
        &= \bE_{\pi_\theta,\mu}\sqbr{\sum_{t=1}^{\infty} \gamma^{t-1} Q^{\gamma,\pi_\theta}_t(O_{1:t},A_{0:t},S_t) \, \nabla_\theta \log \pi_\theta(S_t,A_t \mid S_{t-1},A_{t-1},O_t)}.
        \label{eq:ih_pg_q_form}
    \end{align}
\end{theorem}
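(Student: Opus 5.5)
The plan is to reduce to the episodic result, Theorem~\ref{thm:fhpg}, by truncation, and then pass to the limit using the uniform score bound $G$ and discounting. For a horizon $T\in\bN$, define the truncated objective $J_T(\theta)\coloneqq \bE_{\pi_\theta,\mu}\sqbr{R^\gamma_{1:T}}$. This is the episodic objective of an NMDP with horizon $T$ and time-dependent rewards $\gamma^{t-1}r_t$, which are bounded by $r_{\max}$. The stationary ASM policy is a special case of an episodic one, with $\pi_{\theta,t}=\pi_\theta$.

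Theorem~\ref{thm:fhpg} then gives
\begin{align*}
    \nabla_\theta J_T(\theta)=\bE_{\pi_\theta,\mu}\sqbr{\sum_{t=1}^T \Big(\sum_{\ell=t}^T\gamma^{\ell-1}r_\ell\Big)\nabla_\theta\log\pi_\theta(S_t,A_t\mid S_{t-1},A_{t-1},O_t)}.
\end{align*}
The inner sum equals $\gamma^{t-1}R^\gamma_{t:T}$.

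Next I show that $J_T\to J_\gamma$ pointwise and that $\nabla J_T$ converges uniformly in $\theta$ to the right-hand side of \eqref{eq:ih_pg_return_form}. The first claim is immediate, since $|J_\gamma-J_T|\le r_{\max}\gamma^T/(1-\gamma)$. For the second, write $F_T$ and $F$ for the random sums inside the truncated and infinite expectations. Their difference splits into two pieces. The first piece is
\begin{align*}
    \sum_{t\le T}\gamma^{t-1}\big(R^\gamma_{t:\infty}-R^\gamma_{t:T}\big)\nabla\log\pi_\theta ,
\end{align*}
whose norm is at most $\sum_{t\le T}\gamma^{t-1}\gamma^{T-t+1}r_{\max}G/(1-\gamma)=T\gamma^T r_{\max}G/(1-\gamma)$. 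The second piece is the tail $\sum_{t>T}\gamma^{t-1}R^\gamma_{t:\infty}\nabla\log\pi_\theta$, with norm at most $\gamma^T r_{\max}G/(1-\gamma)^2$. Both bounds are deterministic and vanish uniformly in $\theta$. The same bound, $\|F\|\le r_{\max}G/(1-\gamma)^2$, shows that the infinite series is absolutely convergent and has finite expectation.

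The standard theorem on interchanging limits and derivatives applies here: the $J_T$ are $C^1$, they converge pointwise, and their gradients converge uniformly. Hence $J_\gamma$ is differentiable with $\nabla J_\gamma=\lim_T\nabla J_T$, which yields \eqref{eq:ih_pg_return_form}. Applied on each segment, the interchange gives the directional-derivative statement, and this is enough. I expect this to be the main obstacle: one must justify differentiating an infinite-horizon expectation. The score bound $G$ is exactly what provides the uniform domination. Continuity of $\nabla J_T$ in $\theta$ follows because $J_T$ is a finite sum of products of differentiable kernels over finite spaces.

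For \eqref{eq:ih_pg_q_form}, I condition each term on $\mathcal{F}_t\coloneqq\sigma(O_{1:t},A_{0:t},S_{0:t})$. The score $\nabla\log\pi_\theta(S_t,A_t\mid S_{t-1},A_{t-1},O_t)$ is $\mathcal{F}_t$-measurable, so each term can be replaced by the score times $\bE[R^\gamma_{t:\infty}\mid\mathcal F_t]$. By the recursive structure, future observations depend on $(O_{1:t},A_{1:t})$ and future agent states depend only on $S_t$ and later inputs. Therefore $R^\gamma_{t:\infty}$ is conditionally independent of $S_{0:t-1}$ given $(O_{1:t},A_{0:t},S_t)$. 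It follows that the conditional expectation equals $Q^{\gamma,\pi_\theta}_t(O_{1:t},A_{0:t},S_t)$.

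Finally, exchanging the sum over $t$ with the expectation is justified by Fubini, using the summable bound $\gamma^{t-1}r_{\max}G/(1-\gamma)$.
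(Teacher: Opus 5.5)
Your proof is correct and follows essentially the same route as the paper: truncate at horizon $T$, obtain the finite-horizon reward-to-go identity, pass to the limit using uniform-in-$\theta$ convergence of the truncated gradients (which the score bound $G$ and discounting supply), and get the $Q$-form from the tower property and the recursive conditional-independence structure of ASM policies. The differences are minor: you apply Theorem~\ref{thm:fhpg} as a black box with time-varying rewards $\gamma^{t-1}r_t$ and work only with $J$, while the paper re-derives the truncated identity for every $V^{\gamma,\pi_\theta}_t(h_t)$, a generalization it reuses in Lemma~\ref{lem:discounted_smooth}; also, the limit--derivative interchange needs only differentiability of the $J_T$, so your claim that they are $C^1$ is unnecessary, and the stated hypotheses (mere differentiability of $\pi_\theta$) do not guarantee it anyway.
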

See Appendix~\ref{subapp:pg_ihd} for the proof of Theorem~\ref{thm:ihpg}.

Equation~\eqref{eq:ih_pg_return_form} yields an unbiased estimator of the gradient of $J_\gamma(\cdot)$, thereby allowing us to propose the ASMPG algorithm (Algorithm~\ref{algo:ihdnmpg}), which is an online policy gradient algorithm for infinite-horizon discounted reward NMDPs. In practice, the infinite-horizon objective can be approximated by truncating trajectories at a sufficiently large time step $t$, chosen such that $\gamma^{t-1}$ is negligibly small. Under this truncation, the contribution of the discarded tail to the objective becomes negligible.

\begin{algorithm}[t]
    \caption{ASMPG for Infinite-Horizon Discounted NMDPs}
    \label{algo:ihdnmpg}
    \begin{algorithmic}
        \STATE {\bfseries Input} Discount factor $\gamma$, stepsizes $\{\alpha_k\}_{k\geq 1}$, initial parameter $\theta_1$
        \FOR{$k=1,2,\ldots$}
            \STATE Run stationary ASM policy $\pi_{\theta_k}$, and collect trajectory $(o^k_{1:\infty}, a^k_{0:\infty}, s^k_{0:\infty})$
            \STATE Compute the gradient estimate \vspace{-5pt}
            $$\widehat{g}_{\gamma,k} = \sum_{t=1}^{\infty}{\sum_{t\up=t}^{\infty}{\gamma^{t\up-1} r_{t\up}(o^k_{1:t\up},a^k_{1:t\up})} \, \nabla_\theta \log \pi_{\theta_k}(s^k_t,a^k_t \mid s^k_{t-1},a^k_{t-1},o^k_t)}$$ \vspace{-5pt}
            \STATE Update $\theta_{k+1} = \theta_k + \alpha_k \widehat{g}_{\gamma,k}$
        \ENDFOR
    \end{algorithmic}
\end{algorithm}

The second expression of the gradient~\eqref{eq:ih_pg_q_form} leads to an actor-critic algorithm, where the agent maintains a critic to estimate $Q^{\gamma,\pi_\theta}_t$. However, for NMDPs, it is not known how to efficiently estimate $Q^{\gamma,\pi_\theta}_t$, since its domain is the space of history sequences that grows exponentially with time. We defer the design of an efficient actor–critic algorithm for ASM policy learning to future work.

\paragraph{Convergence guarantees.} We obtain analogous convergence guarantees for the infinite-horizon discounted setting, where $1/(1-\gamma)$ serves as the effective horizon. In particular, the time-averaged squared gradient norm converges to zero at a rate of $\tilde{O}((1-\gamma)^{-3.5}/\sqrt{K})$. Detailed statements and proofs are deferred to Appendix~\ref{subapp:proof_conv_discounted}.

\section{Simulation experiments}
\label{sec:experiments}

We evaluate ASMPG on five non-Markovian environments and compare it with the AIS-KL and AIS-MMD methods of \citet{subramanian2022approximate}. The goal is to assess whether directly optimizing the recursive agent-state dynamics and the control policy for return improves empirical performance over information-state baselines.

\subsection{Environments}
We briefly describe the NMDP environments considered in our experiments; detailed descriptions are provided in Appendix~\ref{app:simulation_details}.

\textbf{CheeseMaze.} The CheeseMaze environment is a partially-observable navigation problem with masked states proposed in \citet{mccallum1993overcoming}. The environment consists of $11$ states and $7$ aliased observations. The objective is to reach the goal state. The agent only receives a reward of $1$ when the goal state is reached.

\textbf{HallwayNavigation.} The HallwayNavigation task is again a partially-observable navigation problem introduced by \citet{mccallum1995instance}, where the agent can observe the local wall configuration. The objective is to reach the central goal state, which yields a reward of $5.0$; attempting to move into a wall incurs a reward of $-1.0$, while all other transitions incur a reward of $-0.1$.

\textbf{HealthcareTreatment.} We consider a treatment planning environment~\citep{liu2017deep} in which patients' health evolves according to dynamics that depend on two unobserved history-dependent variables, namely, toxicity burden and treatment-resistance. The agent must choose one of three treatment regimes at each decision epoch to maximize the patient's overall health, along with a binary outcome-oriented large terminal reward or penalty.

\textbf{MachineRepair.} We consider a machine maintenance environment~\citep{eckles1968optimum} in which the agent observes only a coarse machine condition, healthy or degraded, and chooses between continuing operation and performing repair. The deterioration of the machine's health and the effectiveness of the repair depend on a hidden wear level that evolves with the usage and repair history. Continuing the operation in the healthy state yields a positive reward, whereas in the degraded state it yields a negative reward, and each repair incurs a fixed cost.

\textbf{VelocityOnlyCartPole.} We consider a partially observed variant of the classic CartPole control problem \citep{barto1983neuronlike} in which the agent observes only the linear velocity of the cart and the angular velocity of the pole, and its objective is to keep the pole balanced upright for as long as possible. At each step, the agent applies a left or right force to the cart. The agent receives a unit reward at each time step for keeping the pole upright and the system within its stability limits.

\subsection{Experiment setup}
ASMPG uses two neural networks: one for the agent-state dynamics and one for the control policy. The state-update network takes $(S_{t-1},A_{t-1},O_t)$ as input and outputs a $|\cS|$-dimensional vector of logits. A softmax over these logits defines the distribution over the discrete agent state $S_t$. This network is a multilayer perceptron (MLP) with three hidden layers of widths $2d_h$, $2d_h$, and $d_h$, where $d_h$ is the base hidden dimension. The policy network maps $S_t$ to action logits using an MLP with two hidden layers of width $d_h$, followed by a softmax over $\cA$. The values of $|\cS|$ and $d_h$ used in each environment are reported in Table~\ref{tab:params} of Appendix~\ref{app:simulation_details}.

We train both networks using the ASMPG estimator for the  discounted return corresponding to Eq.~\eqref{obj:ihdpg}. All methods use discount factor $\gamma=0.99$, and episodes are truncated after $200$ steps if they have not been terminated earlier. The ASMPG parameters are optimized with Adam using a learning rate of $10^ {-3}$. Each update averages gradient estimates over $10$ episodes. We also add a log-barrier regularizer with coefficient $0.01$ to encourage exploration by keeping the learned probabilities away from the boundary.

For AIS-KL and AIS-MMD, we use the implementation and standardized hyperparameters provided by \citet{subramanian2022approximate}. To make the representation sizes comparable, we set the AIS information-state dimension equal to $|\cS|$ in each environment.

\subsection{Performance comparison}
\begin{figure}[t]
    \centering
    \begin{subfigure}[b]{0.31\textwidth}
        \centering
        \includegraphics[width=\textwidth]{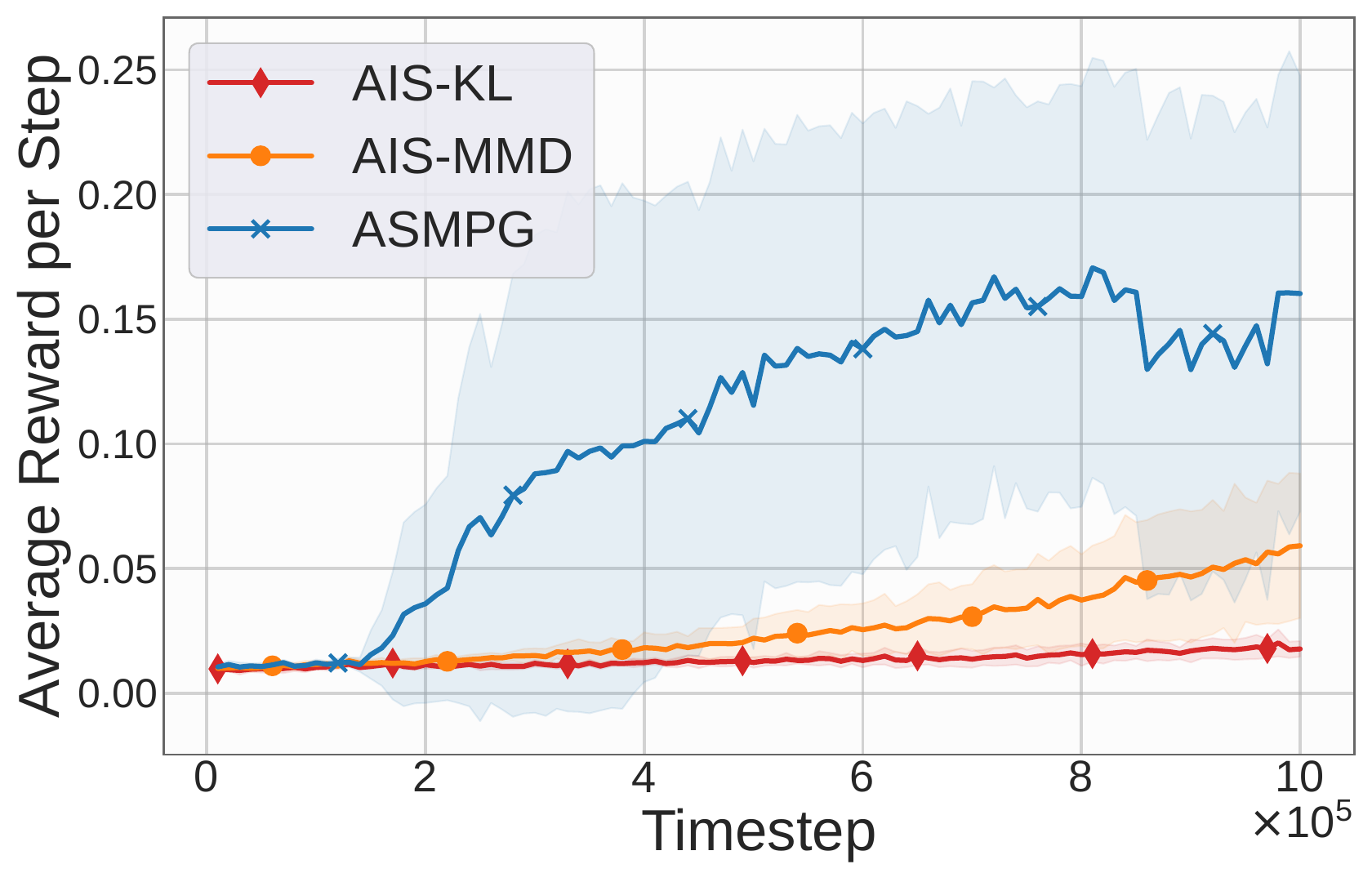}
        \caption{CheeseMaze}
        \label{fig:cheesemaze_p1}
    \end{subfigure}
    \hfill
    \begin{subfigure}[b]{0.31\textwidth}
        \centering
        \includegraphics[width=\textwidth]{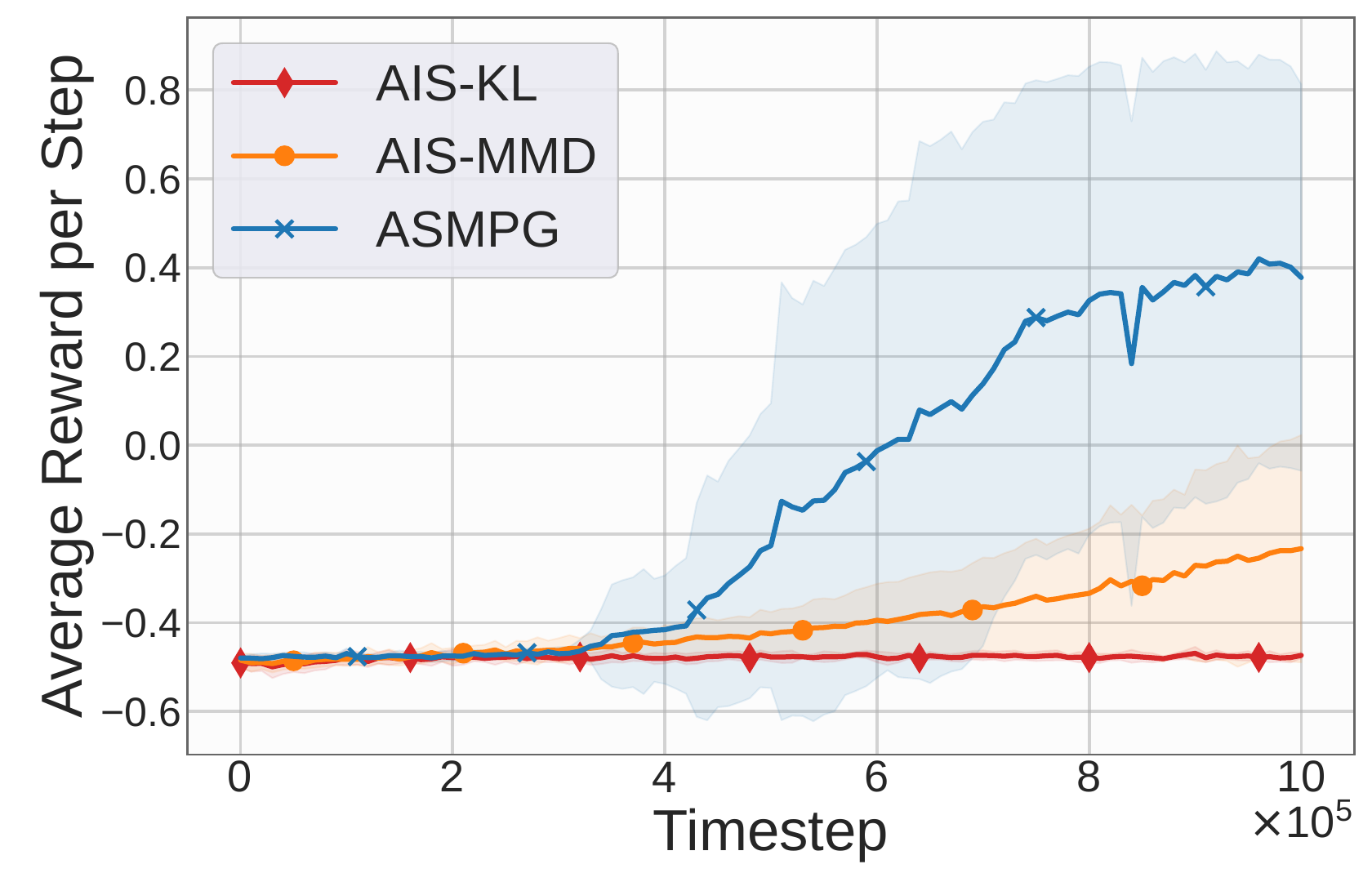}
        \caption{HallwayNavigation}
        \label{fig:hallway_navigation_p1}
    \end{subfigure}
    \hfill
    \begin{subfigure}[b]{0.31\textwidth}
        \centering
        \includegraphics[width=\textwidth]{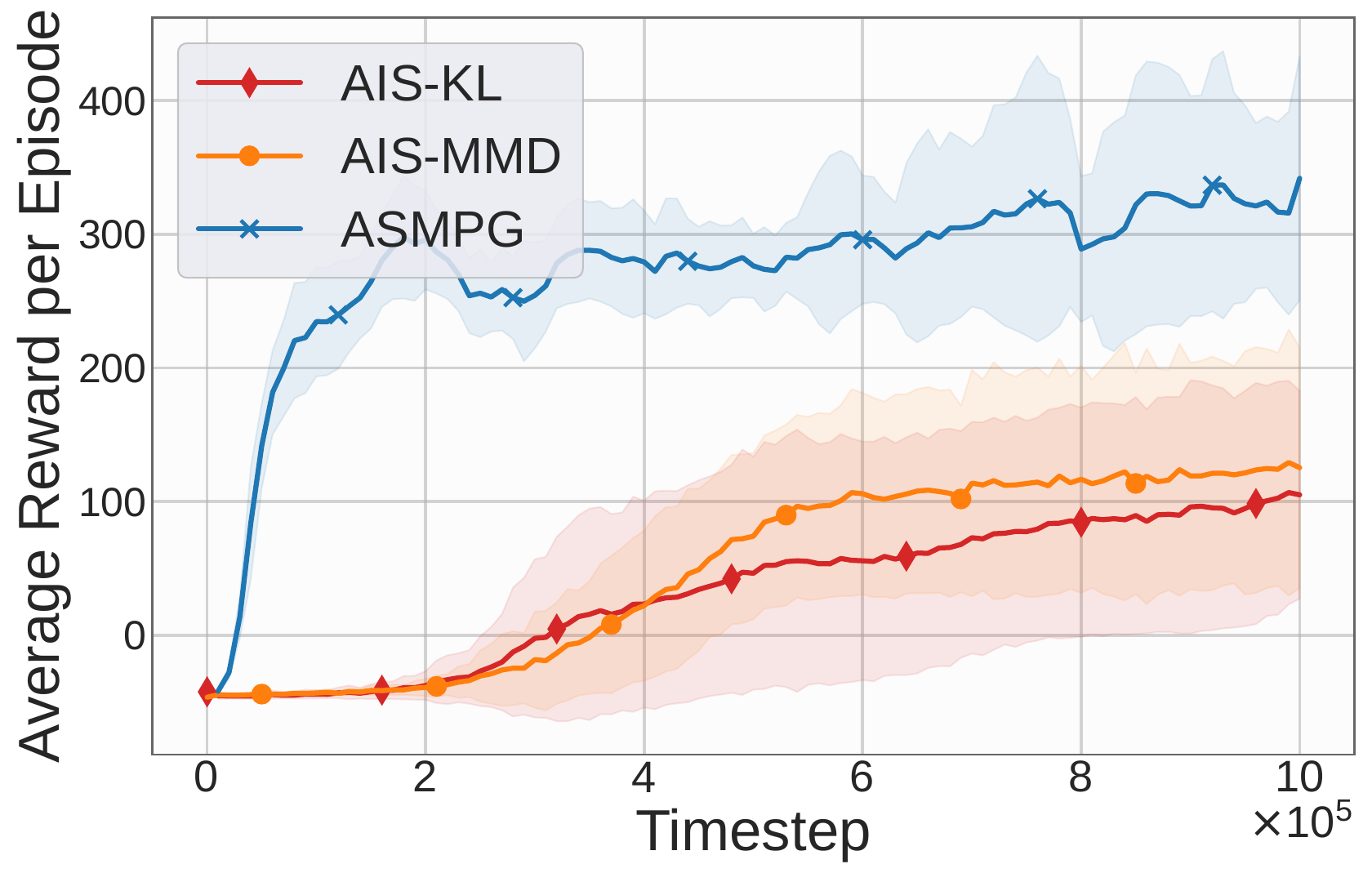}
        \caption{HealthcareTreatment}
        \label{fig:treatment_planning_p1}
    \end{subfigure}

    \vspace{0.5em}

    \begin{subfigure}[b]{0.31\textwidth}
        \centering
        \includegraphics[width=\textwidth]{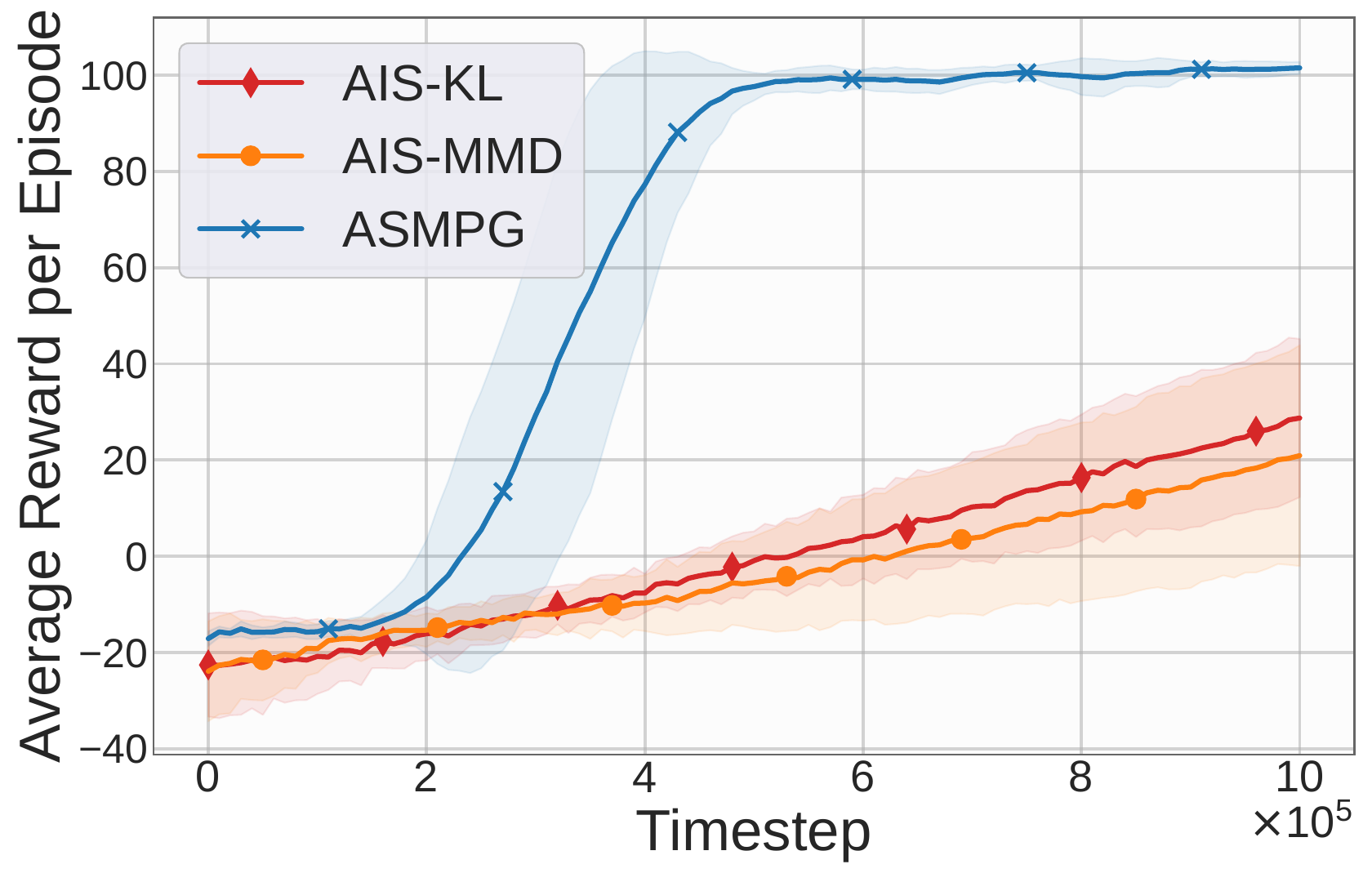}
        \caption{MachineRepair}
        \label{fig:machine_repair_p1}
    \end{subfigure}
    \hspace{30pt}
    \begin{subfigure}[b]{0.31\textwidth}
        \centering
        \includegraphics[width=\textwidth]{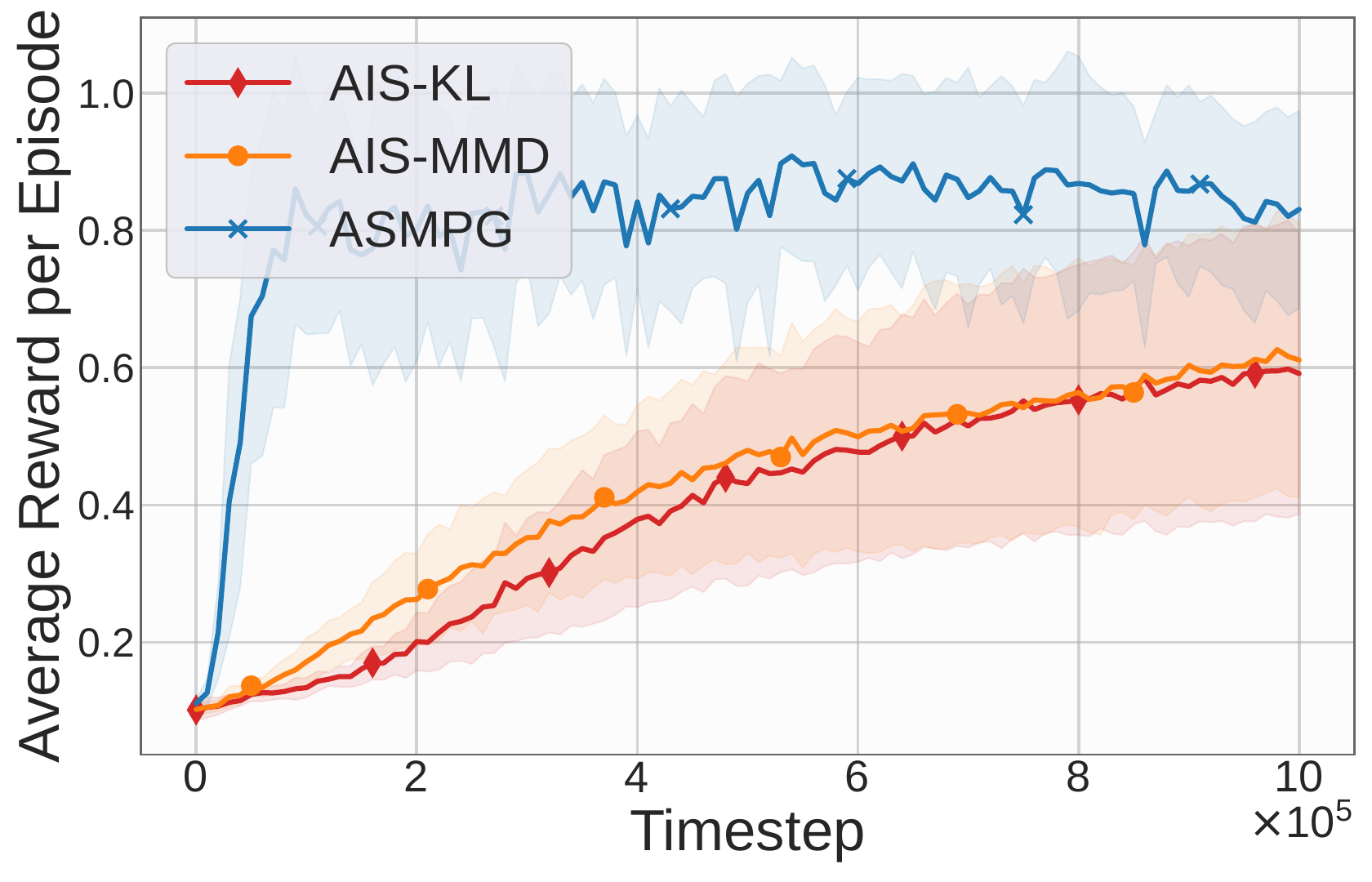}
        \caption{VelocityOnlyCartPole}
        \label{fig:velocity_cartpole_p1}
    \end{subfigure}
    \caption{Learning curves for ASMPG, AIS-KL, and AIS-MMD on the five environments. Curves show averages over $10$ random seeds; shaded bands indicate variability across seeds.}
    \label{fig:learning_curves}
\end{figure}
\begin{figure}[t]
    \centering
    \begin{subfigure}[b]{0.18\textwidth}
        \centering
        \includegraphics[width=\textwidth]{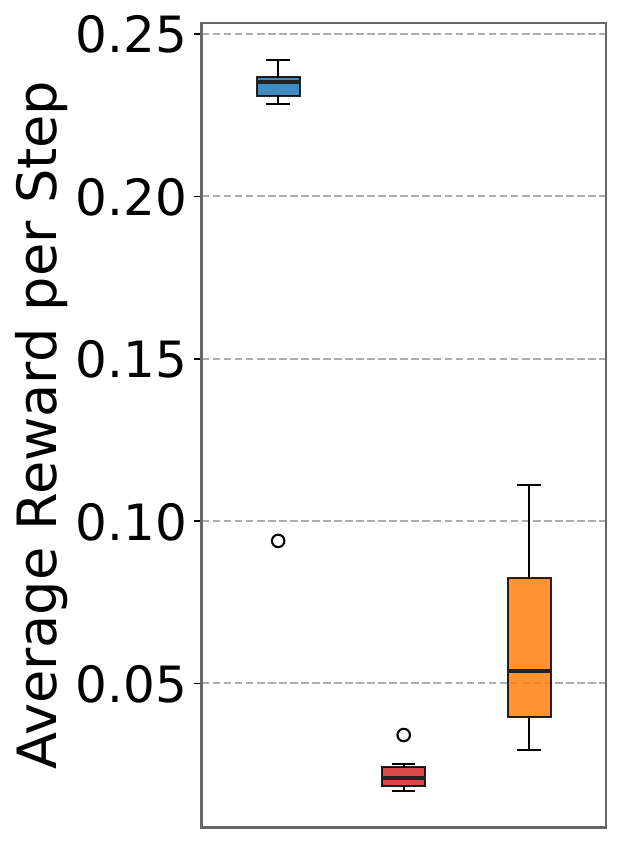}
        \caption{\centering CheeseMaze\\~}
        \label{fig:cheesemaze_p2}
    \end{subfigure}
    \hfill
    \begin{subfigure}[b]{0.19\textwidth}
        \centering
        \includegraphics[width=\textwidth]{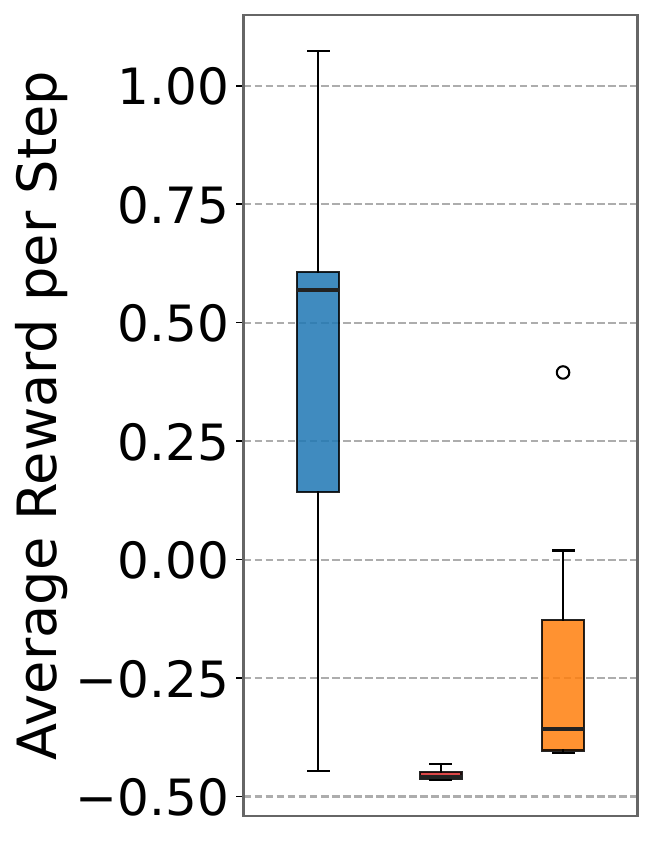}
        \caption{\centering HallwayNavigation}
        \label{fig:hallway_navigation_p2}
    \end{subfigure}
    \hfill
    \begin{subfigure}[b]{0.18\textwidth}
        \centering
        \includegraphics[width=\textwidth]{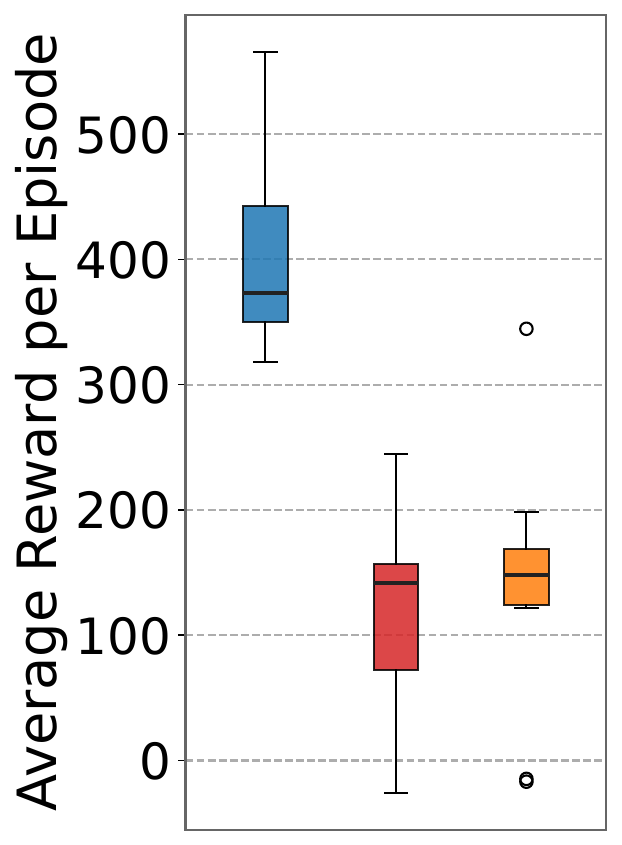}
        \caption{\centering HealthcareTreatment}
        \label{fig:treatment_planning_p2}
    \end{subfigure}
    \hfill
    \begin{subfigure}[b]{0.18\textwidth}
        \centering
        \includegraphics[width=\textwidth]{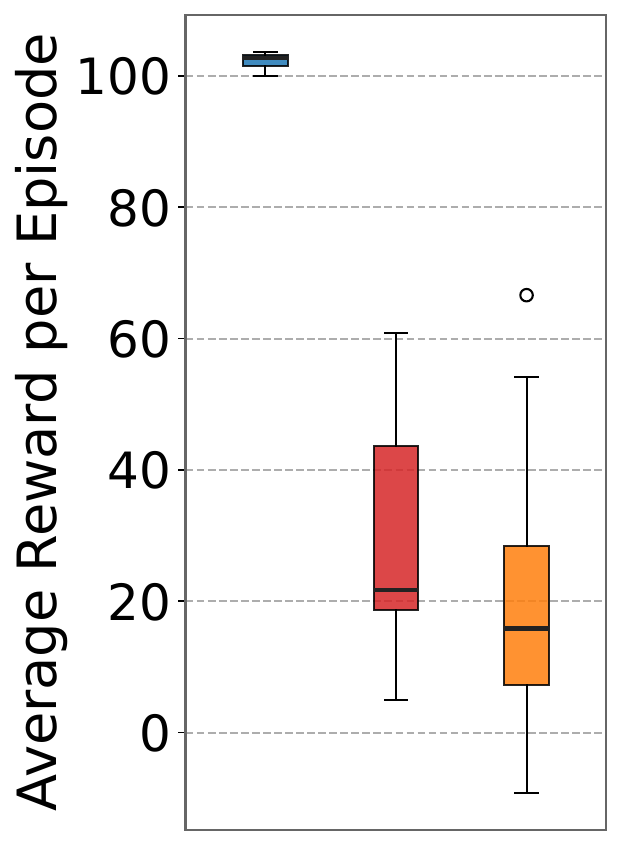}
        \caption{\centering MachineRepair\\~}
        \label{fig:machine_repair_p2}
    \end{subfigure}
    \hfill
    \begin{subfigure}[b]{0.18\textwidth}
        \centering
        \includegraphics[width=\textwidth]{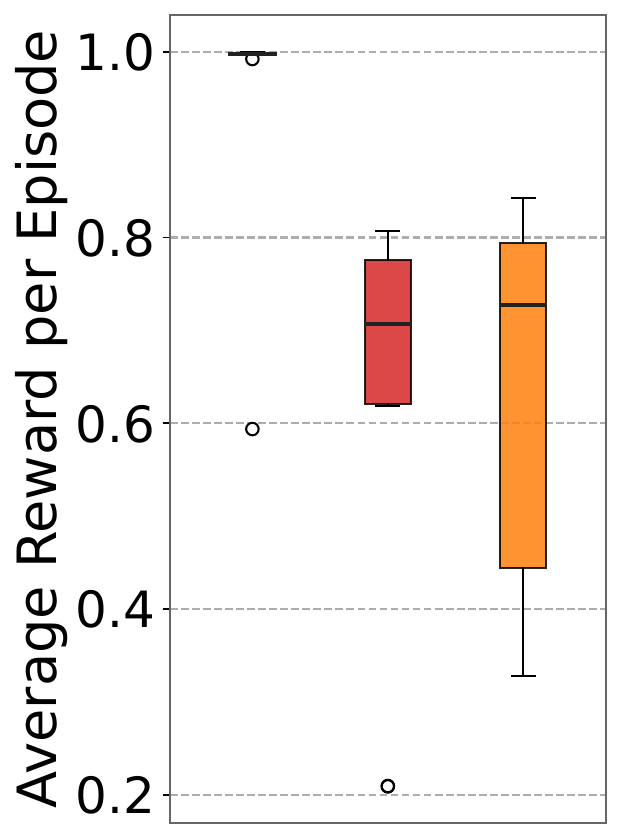}
        \caption{\centering VelocityOnlyCartPole}
        \label{fig:velocity_cartpole_p2}
    \end{subfigure}
    \caption{Best-checkpoint performance over $10$ random seeds. The following colors indicate the associated algorithms: \raisebox{0.25ex}{\colorbox[HTML]{1f77b4}{\color[HTML]{1f77b4}\rule{0.4cm}{0.08cm}}} -- ASMPG, \raisebox{0.25ex}{\colorbox[HTML]{d62728}{\color[HTML]{d62728}\rule{0.4cm}{0.08cm}}} -- AIS-KL, \raisebox{0.25ex}{\colorbox[HTML]{ff7f0e}{\color[HTML]{ff7f0e}\rule{0.4cm}{0.08cm}}} -- AIS-MMD. The box denotes the interquartile range, and the whiskers indicate the range of non-outlier values. Points beyond the whiskers are considered outliers.}
    \label{fig:best_model}
\end{figure}

We train each method for $10^6$ environment steps and evaluate performance every $10{,}000$ steps. The results are averaged over 10 random seeds. For CheeseMaze and HallwayNavigation, the reward per episode is the same for all successful trajectories, regardless of how long the agent takes to reach the goal; thus, the primary challenge is to reach the destination as quickly as possible. Hence, to better differentiate the algorithms' performance, we report average reward per step. For the other environments, we report average reward per episode.

Figure~\ref{fig:learning_curves} shows that ASMPG achieves better performance than AIS-KL and AIS-MMD across all five environments. The larger fluctuations in some ASMPG curves, especially in the navigation tasks, are consistent with the high variance of REINFORCE-type estimators.

Figure~\ref{fig:best_model} compares the best evaluation checkpoint obtained during training for each seed. ASMPG has the highest median performance in all five environments and shows especially strong best-checkpoint performance in CheeseMaze, MachineRepair, and VelocityOnlyCartPole. Overall, Figures~\ref{fig:learning_curves} and~\ref{fig:best_model} support the benefit of directly optimizing the agent-state dynamics and the control policy for the reward objective, rather than learning the representation through an auxiliary predictive criterion.

\section{Conclusion}
\label{sec:conclusion}

We studied policy-gradient methods for reinforcement learning in NMDPs, where observations and rewards may depend on the full interaction history. To handle this history dependence, we introduced Agent State-Markov policies, in which the agent maintains a recursively updated internal state and acts in a Markovian manner with respect to this state. Unlike approaches that fix the agent-state dynamics or learn representations through auxiliary predictive objectives, our formulation directly optimizes both the agent-state dynamics and the control policy for the expected-return objective.

We derived policy-gradient identities for episodic and infinite-horizon discounted NMDPs and used them to develop the ASMPG algorithm. We also established finite-time convergence bounds and almost sure convergence to the set of stationary points under standard regularity assumptions. The simulation results show that ASMPG performs favorably compared with baselines across several non-Markovian environments, supporting the benefit of reward-centric joint optimization of agent state dynamics and control policy.

The proposed method is based on REINFORCE-type gradient estimators and may therefore suffer from high variance. A natural direction for future work is to develop actor-critic variants for ASM policies that reduce variance while preserving the same reward-centric objective. Another important direction is to study conditions under which policy-gradient methods for non-Markovian reinforcement learning converge to globally optimal policies, extending recent global convergence results from the Markovian setting.
\appendix
\section{Extended related work}
\label{app:extended_related_work}
Non-Markovian reinforcement learning has been studied through several complementary notions of state, memory, and temporal structure. The classical example is the POMDP, where the environment has a Markov latent state, but the agent observes only a noisy signal. If the model is known, the belief over latent states is a sufficient statistic, reducing the problem to a fully observed belief-state MDP~\citep{astrom1965optimal, smallwood1973optimal, kaelbling1998planning, poupart2011closing}. This reduction is conceptually powerful, but belief construction requires a known or accurately estimated transition-observation model and may be computationally expensive.

When the latent model is unavailable, one can replace the full history with a learned or hand-designed memory. Early approaches include finite observation windows~\citep{loch1998using} and finite-state controllers~\citep{poupart2003bounded}. Predictive state representations avoid latent variables by representing state through predictions of future tests~\citep{littman2001predictive, singh2003learning}. In deep RL, recurrent architectures provide a flexible way to encode histories into latent states~\citep{rumelhart1986learning, hochreiter1997long, schmidhuber1990reinforcement, bakker2001reinforcement}. Recent empirical work shows that recurrent model-free RL can be a strong baseline for many POMDPs when architectures and hyperparameters are carefully tuned~\citep{ni2022recurrent}, and POPGym provides a benchmark suite for comparing memory mechanisms in partially observable RL~\citep{morad2023popgym}. These approaches are expressive, but their learned memory is often difficult to analyze.

A more structured line of work studies agent-side state representations. \citet{dong2022simple} introduce recursively updated agent states and analyze efficient learning when the state dynamics are fixed. \citet{sinha2024periodic} also use fixed agent states and learn periodic policies for POMDPs. Approximate information states (AIS)~\citep{subramanian2022approximate} provide a general framework in which a history representation is approximately sufficient for rewards and future prediction, yielding approximate planning and learning guarantees. \citet{chandak2024reinforcement} use related information-state ideas to quantify the error caused by non-Markovianity in Q-learning. Our approach is closest in spirit to these agent-state and information-state methods, but differs in the optimization objective: rather than fixing the state dynamics or learning them through an auxiliary sufficiency/prediction criterion, we optimize the recursive state dynamics directly for cumulative reward.

Another complementary direction uses automata or formal languages to encode temporal structure. Regular decision processes and their offline/online variants model history-dependent behavior using regular-language structure~\citep{brafman2019regular,abadi2020learning,ronca2021efficient,cipollone2023provably,deb2024tractable}. Reward machines encode non-Markovian reward structure as finite-state machines and can expose decompositions useful for learning~\citep{icarte2018using}. These approaches are most natural when the relevant temporal abstraction is specified, symbolic, or inferred as an automaton. ASM policies instead treat the agent state as a stochastic, differentiable memory mechanism that is optimized jointly with the policy.

Our analysis is also closely related to policy-gradient methods under partial observability. GPOMDP provides simulation-based gradient estimates for average-reward POMDPs without access to the latent state~\citep{baxter2001infinite}. Gradient methods for finite-state controllers learn both action choices and stochastic memory updates~\citep{meuleau1999learning}, while actor-critic methods for structured POMDP policies estimate policy gradients using value-function approximation~\citep{yu2005function}. \citet{aberdeen2007policy} combine policy gradients with sufficient-statistic tracking, including beliefs and predictive state representations. Recurrent policy gradients optimize RNN or LSTM policies directly through memory~\citep{wierstra2010recurrent}, and recent work gives finite-time analyses for natural actor-critic methods with finite memory or recurrent policies in POMDPs~\citep{cayci2024finite,cayci2025recurrent}. These works show that policy-gradient methods can be applied under partial observability, but they are formulated for POMDPs, sufficient statistics, finite-state controllers, or generic recurrent policies. The present work targets general NMDPs and derives a policy-gradient theorem for the ASM class, enabling joint reward-centric optimization of the agent-state dynamics and the control policy.

Finally, our convergence analysis builds on the mature theory of policy-gradient methods for fully observed MDPs, including classical likelihood-ratio estimators~\citep{williams1992simple, sutton1999policy} and recent non-asymptotic/global convergence analyses~\citep{agarwal2021theory, zhang2021sample, mei2020global, mei2023stochastic}. The main technical difference is that in NMDPs, rewards and observations depend on the entire history; the ASM recursion provides enough structure to obtain a usable gradient expression and finite-time stationarity guarantees without reducing the environment to a Markov state observed by the learner. 
\section{Results related to episodic NMDPs}
\label{app:episodic_nmdp}
\subsection{Policy gradient theorem}
\label{subapp:pg_episodic}
For a fixed episodic ASM policy $\pi$, define the Q-function and value function as follows.
\begin{align*}
    Q^{\pi}_t(o_{1:t},a_{0:t},s_t)
    &\coloneqq \bE_\pi\sqbr{R_{t:H} \mid O_{1:t}=o_{1:t}, A_{0:t}=a_{0:t}, S_t=s_t}, \\
    V^{\pi}_t(o_{1:t},a_{0:t-1},s_{t-1})
    &\coloneqq \bE_\pi\sqbr{R_{t:H} \mid O_{1:t}=o_{1:t}, A_{0:t-1}=a_{0:t-1}, S_{t-1}=s_{t-1}},
\end{align*}
for $t=1,2,\ldots, H$, where $R_{t:H} = \sum_{t\up=t}^{H}{r_{t\up}(O_{1:t\up},A_{1:t\up})}$ and $\bE_\pi$ denotes the expectation taken considering that the agent states and actions from time $t$ onward are generated by $\pi$.

\begin{proposition}
    \label{prop:fhpg}
    Let $\{\pi_\theta = \{\pi_{\theta,t}\}_{t=1}^H \mid \theta \in \bR^d\}$ be a class of parameterized episodic ASM policies that are differentiable with respect to the parameter vector $\theta$ and the support of $\pi_{\theta,t}$ is independent of $\theta$ for all $t \in [H]$ and $\theta \in \bR^d$. Then, for every $t \in [H]$, $o_{1:t} \in \cO^t, a_{0:t-1} \in \cA^t$, and $s_{t-1} \in \cS$,
    \begin{align}
        &\nabla_\theta{V^{\pi_\theta}_t(o_{1:t}, a_{0:t-1},s_{t-1})} \notag\\
        &= \bE_{\pi_\theta}\sqbr{\sum_{t\up=t}^{H}{R_{t\up:H} \nabla_\theta \log\br{\pi_{\theta,t\up}(S_{t\up}, A_{t\up} \mid S_{t\up-1}, A_{t\up-1}, O_{t\up})}} \mid o_{1:t}, a_{0:t-1}, s_{t-1}} \label{pg:1}\\
        &= \bE_{\pi_\theta}\sqbr{\sum_{t\up=t}^{H}{Q^{\pi_\theta}_{t\up}(O_{1:t\up}, A_{0:t\up}, S_{t\up}) \nabla_\theta \log\br{\pi_{\theta,t\up}(S_{t\up}, A_{t\up} \mid S_{t\up-1}, A_{t\up-1}, O_{t\up})}} \mid o_{1:t}, a_{0:t-1}, s_{t-1}}. \label{pg:2}
    \end{align}
\end{proposition}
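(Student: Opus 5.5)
We prove \eqref{pg:2} by backward induction on $t$, from $t=H$ down to $t=1$, and then derive \eqref{pg:1} from \eqref{pg:2} using the tower property. The basic tool is the one-step Bellman-type decomposition of $V^{\pi_\theta}_t$. Conditioning on $(O_{1:t},A_{0:t-1},S_{t-1})=(o_{1:t},a_{0:t-1},s_{t-1})$, the pair $(S_t,A_t)$ is drawn from $\pi_{\theta,t}(\cdot,\cdot\mid s_{t-1},a_{t-1},o_t)$. Hence
\begin{align*}
V^{\pi_\theta}_t(o_{1:t},a_{0:t-1},s_{t-1})=\sum_{s,a}\pi_{\theta,t}(s,a\mid s_{t-1},a_{t-1},o_t)\,Q^{\pi_\theta}_t(o_{1:t},(a_{0:t-1},a),s).
\end{align*}
We also need the companion identity for $Q$:
\begin{align*}
Q^{\pi_\theta}_t(o_{1:t},a_{0:t},s_t)=r_t(o_{1:t},a_{1:t})+\sum_{o}p_t(o\mid o_{1:t},a_{1:t})\,V^{\pi_\theta}_{t+1}((o_{1:t},o),a_{0:t},s_t),
\end{align*}
with $V_{H+1}\equiv 0$. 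This identity relies on the Markov-like structure of the joint process. Given $(O_{1:t+1},A_{0:t},S_t)$, the future of $(S,A,O)$ does not depend on $S_{0:t-1}$, because $\nu_{t+1}$ uses only $S_t$, and $p$ and $r$ never see agent states. Thus conditioning on $S_t$ alone, rather than on all of $S_{0:t}$, is legitimate. I would verify this carefully first, since it is the one place where the non-Markovian environment and the agent state interact. Once it holds, $V$ and $Q$ are well-defined and the recursions are exact.

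Next we differentiate. Neither $r_t$ nor $p_t$ depends on $\theta$, so $\nabla_\theta Q^{\pi_\theta}_t=\sum_o p_t(o\mid\cdot)\nabla_\theta V^{\pi_\theta}_{t+1}$. Applying the product rule to the $V$-recursion gives
\begin{align*}
\nabla_\theta V_t=\sum_{s,a}\pi_{\theta,t}\,\nabla_\theta\log\pi_{\theta,t}\,Q_t+\sum_{s,a}\pi_{\theta,t}\sum_o p_t\,\nabla_\theta V_{t+1}.
\end{align*}
Here the log-derivative trick is valid on the support of $\pi_{\theta,t}$, which does not depend on $\theta$; off the support the terms vanish identically. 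All sums are finite, so exchanging differentiation and summation raises no issue. The first term equals $\bE_{\pi_\theta}[Q_t\nabla\log\pi_{\theta,t}\mid o_{1:t},a_{0:t-1},s_{t-1}]$. The second term equals $\bE_{\pi_\theta}[\nabla_\theta V_{t+1}(O_{1:t+1},A_{0:t},S_t)\mid\cdot]$. Substituting the induction hypothesis for $\nabla_\theta V_{t+1}$ and applying the tower property, again using the conditional-independence fact above, yields \eqref{pg:2}. The base case $t=H$ is just the first term, since $V_{H+1}=0$.

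To obtain \eqref{pg:1}, we write
\begin{align*}
R_{t':H}=Q_{t'}(O_{1:t'},A_{0:t'},S_{t'})+\big(R_{t':H}-Q_{t'}\big).
\end{align*}
The score $\nabla\log\pi_{\theta,t'}(S_{t'},A_{t'}\mid S_{t'-1},A_{t'-1},O_{t'})$ is measurable with respect to $\sigma(O_{1:t'},A_{0:t'},S_{t'-1},S_{t'})$. By the defining property of $Q$ together with the conditional independence of the future from $S_{t'-1}$ given $(O_{1:t'},A_{0:t'},S_{t'})$, the conditional expectation of $R_{t':H}$ given this $\sigma$-field equals $Q_{t'}$. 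The cross term therefore vanishes, and the two forms agree. I expect the main obstacle to be stating and checking this conditional-independence claim cleanly, namely that conditioning on $(O_{1:t},A_{0:t},S_t)$ is equivalent to conditioning on the full past including $S_{0:t-1}$. It follows by writing out the joint trajectory density as a product of factors $p_t$ and $\pi_{\theta,t}$, and observing that every factor after time $t$ depends on the agent states only through $S_t$ onward.
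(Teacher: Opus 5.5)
Your proof is correct, but it runs in the opposite direction from the paper's. The paper does not use Bellman recursions here. It writes $V^{\pi_\theta}_t(h_t)$ as a finite sum over future trajectories of $R_{t:H}$ times the trajectory probability, and applies the log-derivative trick once to the whole trajectory law, so the $p_{t'}$ factors drop out of the score. It then deletes the past-reward terms $R_{t:t'-1}$, because the score at time $t'$ has zero conditional mean given $\cF_{t'}$. This yields \eqref{pg:1} directly. Then \eqref{pg:2} follows by the tower property on $\sigma(\cF_{t'},S_{t'},A_{t'})$, which is your final step run in reverse.

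As a result, the paper needs the fact that the future is conditionally independent of earlier agent states only once, for the $Q$-form. Your Sutton-style backward induction uses it at every step: $\sigma(h_t)\not\subseteq\sigma(h_{t+1})$, so the step $\bE[\nabla_\theta V_{t+1}(\cdot)\mid h_t]$ is not a plain nested tower, as you correctly flag. The trajectory route also carries over to the discounted setting by truncating at $T$ and letting $T\to\infty$ (Proposition~\ref{prop:ih_value_grad_t}). Backward induction needs a terminal time, so there it would have to become an unrolling argument with a tail bound. In return, your route delivers $\nabla_\theta Q^{\pi_\theta}_t=\sum_{o}p_t\,\nabla_\theta V^{\pi_\theta}_{t+1}$ as a by-product, which the paper re-derives separately in the proof of Lemma~\ref{lem:finite_smooth}.

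One precision point: when deriving \eqref{pg:1} from \eqref{pg:2}, condition on the full past $\sigma(\cF_{t'},S_{t'},A_{t'})$ rather than on $\sigma(O_{1:t'},A_{0:t'},S_{t'-1},S_{t'})$. Only the former contains $S_{t-1}$ when $t'>t$. Your conditional-independence claim is exactly what gives $\bE[R_{t':H}\mid \cF_{t'},S_{t'},A_{t'}]=Q^{\pi_\theta}_{t'}$.
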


\begin{proof}
    Fix $t \in [H]$ and let
    \begin{align*}
        h_t \coloneqq (o_{1:t}, a_{0:t-1}, s_{t-1}).
    \end{align*}
    Note that
    \begin{align*}
        V^{\pi_\theta}_t(h_t) = \sum_{\substack{\tau = (o_{t+1:H}, a_{t:H}, s_{t:H}) \\ \in \cO^{H-t} \times \cA^{H-t+1} \times \cS^{H-t+1}}}{\sum_{t\up=t}^{H}{r_{t\up}(o_{1:t\up},a_{1:t\up})} \bP^{\pi_\theta}(\tau \mid h_t)},
    \end{align*}
    where $\bP^{\pi_\theta}(\cdot \mid h_t)$ denotes the probability measure on the trajectory space induced by the application of policy $\pi_\theta$ and conditioned on $(O_{1:t}, A_{0:t-1}, S_{t-1}) = h_t$. We rewrite the gradient of the value function as follows.
    \begin{align}
        \nabla_\theta V^{\pi_\theta}_t(h_t) &\stackrel{(a)}{=} \sum_{\tau}{\sum_{t\up=t}^{H} r_{t\up}(o_{1:t\up},a_{1:t\up}) \nabla_\theta\bP^{\pi_\theta}(\tau \mid h_t)} \notag\\
        &\stackrel{(b)}{=} \sum_{\substack{\tau:\\ \bP^{\pi_\theta}(\tau \mid h_t)>0}}{\sum_{t\up=t}^{H} r_{t\up}(o_{1:t\up},a_{1:t\up}) \nabla_\theta\log\br{\bP^{\pi_\theta}(\tau \mid h_t)} \bP^{\pi_\theta}(\tau \mid h_t)}, \label{eq:fh_log_derivative}
    \end{align}
    where $(a)$ follows from the fact that $\sum_{t\up=t}^{H}{r_{t\up}(o_{1:t\up},a_{1:t\up})}$ does not depend on $\theta$, and $(b)$ uses the log-derivative trick~\citep{williams1992simple}. The equality in $(b)$ is valid because the support of the policy, and hence the support of the finite trajectory law, does not depend on $\theta$.

    The probability of trajectory $\tau$ conditioned on $h_t$, under the probability measure induced by the application of policy $\pi_\theta$, can be written as
    \begin{align*}
        \bP^{\pi_\theta}(\tau \mid h_t)
        =
        \br{\prod_{t\up=t}^{H}{\pi_{\theta,t\up}(s_{t\up}, a_{t\up} \mid s_{t\up-1}, a_{t\up-1}, o_{t\up})}}
        \br{\prod_{t\up=t}^{H-1}{p_{t\up}(o_{t\up+1} \mid o_{1:t\up}, a_{1:t\up})}}.
    \end{align*}
    On the support of this conditional trajectory law, its logarithm simplifies into a sum of two terms as follows:
    \begin{align*}
        \log\br{\bP^{\pi_\theta}(\tau \mid h_t)} = \sum_{t\up=t}^{H}{\log\br{\pi_{\theta,t\up}(s_{t\up}, a_{t\up} \mid s_{t\up-1}, a_{t\up-1}, o_{t\up})}} + \sum_{t\up=t}^{H-1}{\log\br{p_{t\up}(o_{t\up+1} \mid o_{1:t\up}, a_{1:t\up})}},
    \end{align*}
    where only the first term depends on $\theta$. Thus,
    \begin{align*}
        \nabla_\theta \log\br{\bP^{\pi_\theta}(\tau \mid h_t)}
        =
        \sum_{t\up=t}^{H}{\nabla_\theta \log\br{\pi_{\theta,t\up}(s_{t\up}, a_{t\up} \mid s_{t\up-1}, a_{t\up-1}, o_{t\up})}}.
    \end{align*}
    Invoking the above expression in \eqref{eq:fh_log_derivative}, we obtain that
    \begin{align*}
        \nabla_\theta V^{\pi_\theta}_t(h_t)
        =
        \bE_{\pi_\theta}\sqbr{\sum_{t\up=t}^{H}{R_{t:H} \nabla_\theta \log\br{\pi_{\theta,t\up}(S_{t\up}, A_{t\up} \mid S_{t\up-1}, A_{t\up-1}, O_{t\up})}} \mid h_t}.
    \end{align*}
    Note that $R_{t:H} = R_{t:t\up-1} + R_{t\up:H}$. Hence, the proof of \eqref{pg:1} will be complete if we show that
    \begin{align*}
        \bE_{\pi_\theta}\sqbr{\sum_{t\up=t}^{H}{R_{t:t\up-1} \nabla_\theta \log\br{\pi_{\theta,t\up}(S_{t\up}, A_{t\up} \mid S_{t\up-1}, A_{t\up-1}, O_{t\up})}} \mid h_t} = 0.
    \end{align*}
    Denote the $\sigma$-field
    \begin{align*}
        \cF_{t\up}
        \coloneqq
        \sigma(O_1, S_1, A_1, O_2, S_2, A_2, \ldots, O_{t\up-1}, S_{t\up-1}, A_{t\up-1}, O_{t\up}).
    \end{align*}
    Note that $R_{t:t\up-1}$ is $\cF_{t\up}$-measurable. For every $t\up \in \{t,\ldots,H\}$, using the tower property of conditional expectation~\citep{billingsley2017probability}, we have
    \begin{align*}
        &\bE_{\pi_\theta}\sqbr{R_{t:t\up-1} \nabla_\theta \log\br{\pi_{\theta,t\up}(S_{t\up}, A_{t\up} \mid S_{t\up-1}, A_{t\up-1}, O_{t\up})} \mid h_t} \\
        &= \bE_{\pi_\theta}\sqbr{R_{t:t\up-1} \bE_{\pi_\theta}\sqbr{\nabla_\theta \log\br{\pi_{\theta,t\up}(S_{t\up}, A_{t\up} \mid S_{t\up-1}, A_{t\up-1}, O_{t\up})} \mid \cF_{t\up}} \mid h_t} \\
        &= \bE_{\pi_\theta}\Bigg[R_{t:t\up-1} \nabla_\theta \sum_{\substack{(s,a) \in \cS \times \cA:\\ \pi_{\theta,t\up}(s,a \mid S_{t\up-1}, A_{t\up-1}, O_{t\up})>0}}{\pi_{\theta,t\up}(s,a \mid S_{t\up-1}, A_{t\up-1}, O_{t\up})} \mid h_t\Bigg] \\
        &= 0.
    \end{align*}
    The last equality follows because the support of $\pi_{\theta,t\up}(\cdot,\cdot \mid S_{t\up-1}, A_{t\up-1}, O_{t\up})$ is independent of $\theta$, and the sum of the policy probabilities over this support is one. Summing the above identity over $t\up=t,\ldots,H$ gives
    \begin{align*}
        \bE_{\pi_\theta}\sqbr{\sum_{t\up=t}^{H}{R_{t:t\up-1} \nabla_\theta \log\br{\pi_{\theta,t\up}(S_{t\up}, A_{t\up} \mid S_{t\up-1}, A_{t\up-1}, O_{t\up})}} \mid h_t} = 0.
    \end{align*}
    Thus, we have \eqref{pg:1} proved. Now, we will prove~\eqref{pg:2}. By definition,
    \begin{align*}
        Q^{\pi_\theta}_{t\up}(O_{1:t\up}, A_{0:t\up}, S_{t\up})
        =
        \bE_{\pi_\theta}\sqbr{R_{t\up:H} \mid O_{1:t\up}, A_{0:t\up}, S_{t\up}}.
    \end{align*}
    By the recursive structure of ASM policies, conditioning on the earlier agent-state history beyond $(O_{1:t\up}, A_{0:t\up}, S_{t\up})$ does not change the conditional distribution of the future trajectory. Hence,
    \begin{align*}
        \bE_{\pi_\theta}\sqbr{R_{t\up:H} \mid \cF_{t\up}, S_{t\up}, A_{t\up}} = Q^{\pi_\theta}_{t\up}(O_{1:t\up}, A_{0:t\up}, S_{t\up}).
    \end{align*}
    Also, note that $\nabla_\theta \log\br{\pi_{\theta,t\up}(S_{t\up}, A_{t\up} \mid S_{t\up-1}, A_{t\up-1}, O_{t\up})}$ is $\sigma(\cF_{t\up}, S_{t\up}, A_{t\up})$-measurable.~Now, again using the tower property of conditional expectation in \eqref{pg:1}, we obtain that
    \begin{align*}
        &\bE_{\pi_\theta}\sqbr{\sum_{t\up=t}^{H}{R_{t\up:H} \nabla_\theta \log\br{\pi_{\theta,t\up}(S_{t\up}, A_{t\up} \mid S_{t\up-1}, A_{t\up-1}, O_{t\up})}} \mid h_t} \\
        &= \bE_{\pi_\theta}\sqbr{\sum_{t\up=t}^{H}{\bE_{\pi_\theta}\sqbr{R_{t\up:H} \nabla_\theta \log\br{\pi_{\theta,t\up}(S_{t\up}, A_{t\up} \mid S_{t\up-1}, A_{t\up-1}, O_{t\up})} \mid \cF_{t\up}, S_{t\up}, A_{t\up}}} \mid h_t} \\
        &= \bE_{\pi_\theta}\sqbr{\sum_{t\up=t}^{H}{\bE_{\pi_\theta}\sqbr{R_{t\up:H} \mid \cF_{t\up}, S_{t\up}, A_{t\up}} \nabla_\theta \log\br{\pi_{\theta,t\up}(S_{t\up}, A_{t\up} \mid S_{t\up-1}, A_{t\up-1}, O_{t\up})}} \mid h_t} \\
        &= \bE_{\pi_\theta}\sqbr{\sum_{t\up=t}^{H}{Q^{\pi_\theta}_{t\up}(O_{1:t\up}, A_{0:t\up}, S_{t\up}) \nabla_\theta \log\br{\pi_{\theta,t\up}(S_{t\up}, A_{t\up} \mid S_{t\up-1}, A_{t\up-1}, O_{t\up})}} \mid h_t}.
    \end{align*}
    This completes the proof.
\end{proof}

\begin{theorem}[Theorem~\ref{thm:fhpg} restated]
    \label{thm:pg_restate}
    Let $\{\pi_\theta = \{\pi_{\theta,t}\}_{t=1}^H \mid \theta \in \bR^d\}$ be a class of parameterized episodic ASM policies that are differentiable with respect to the parameter vector $\theta$ and the support of $\pi_{\theta,t}$ is independent of $\theta$ for all $t \in [H]$ and $\theta \in \bR^d$. Then
    \begin{align*}
        \nabla_\theta J(\theta) &= \bE_{\pi_\theta}\sqbr{\sum_{t=1}^{H}{R_{t:H} \nabla_\theta \log \pi_{\theta,t}(S_t,A_t \mid S_{t-1},A_{t-1},O_t)}} \\
        &= \bE_{\pi_\theta}\sqbr{\sum_{t=1}^{H}{Q^{\pi_\theta}_t(O_{1:t},A_{0:t},S_t) \, \nabla_\theta \log \pi_{\theta,t}(S_t,A_t \mid S_{t-1},A_{t-1},O_t)}}.
    \end{align*}
\end{theorem}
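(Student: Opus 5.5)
The plan is to obtain the theorem as the $t=1$ case of Proposition~\ref{prop:fhpg}, averaged over the initial observation. The key observation is that $J(\theta)$ is the expectation of $V^{\pi_\theta}_1$ over $O_1\sim\mu$, with the dummy pair $(S_0,A_0)=(s_0,a_0)$ fixed:
\begin{align*}
    J(\theta)=\sum_{o_1\in\cO}\mu(o_1)\,V^{\pi_\theta}_1(o_1,a_0,s_0).
\end{align*}
Here $h_1=(o_1,a_{0},s_0)$ is exactly the conditioning tuple of Proposition~\ref{prop:fhpg} at $t=1$. This identity follows from the tower property once we note that $O_1$ is drawn from $\mu$ independently of $\theta$, and that $(s_0,a_0)$ is deterministic.

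Since $\cO$ is finite and $\mu$ does not depend on $\theta$, the gradient passes through this finite sum with no regularity issues. It remains to check that each $V^{\pi_\theta}_1(h_1)$ is differentiable. This holds because it is a finite sum of products of the differentiable kernels $\pi_{\theta,t}$ multiplied by $\theta$-independent quantities. We then substitute \eqref{pg:1} with $t=1$ for each $o_1$. This gives $\sum_{o_1}\mu(o_1)\,\bE_{\pi_\theta}[\sum_{t=1}^H R_{t:H}\nabla_\theta\log\pi_{\theta,t}(\cdot)\mid h_1]$, which recombines by the tower property into the unconditional expectation $\bE_{\pi_\theta,\mu}[\cdot]$. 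That is the first claimed identity.

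For the second identity, we repeat the same averaging with \eqref{pg:2} in place of \eqref{pg:1}.

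The only point needing care is to confirm that conditioning on $h_1$ in Proposition~\ref{prop:fhpg} matches the law under $\mu$ for every $o_1$ in the support of $\mu$. Values of $o_1$ with $\mu(o_1)=0$ contribute nothing, so this is a bookkeeping issue rather than a real obstacle.
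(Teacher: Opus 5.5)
Your proposal is correct and matches the paper's proof essentially step for step. The paper likewise writes $J(\theta)=\sum_{o\in\cO}\mu(o)V^{\pi_\theta}_1(o,a_0,s_0)$, differentiates through the finite sum, and applies \eqref{pg:1} and \eqref{pg:2} of Proposition~\ref{prop:fhpg} at $t=1$; your remarks on the differentiability of $V^{\pi_\theta}_1$ and on observations with $\mu(o_1)=0$ are harmless details the paper leaves implicit.
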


\begin{proof}
    Observe that, 
    \begin{align*}
        J(\theta) &= \bE\sqbr{V^{\pi_\theta}_1(O_1,A_0,S_0)} = \sum_{o \in \cO}{\mu(o) V^{\pi_\theta}_1(o,a_0,s_0)},
    \end{align*}
    where $\mu(\cdot)$ is the distribution of the initial observation $O_1$. Hence,
    \begin{align*}
        \nabla_\theta{J(\theta)} = \sum_{o \in \cO}{\mu(o) \nabla_{\theta}V^{\pi_\theta}_1(o,a_0,s_0)}.
    \end{align*}
    The proof is complete by using~\eqref{pg:1} and~\eqref{pg:2} from Proposition~\ref{prop:fhpg} with $t$ set equal to $1$.
\end{proof}

\subsection{Smoothness of policy gradient objective}
\label{subapp:smooth_episodic}

\begin{proof}[\textbf{Proof of Lemma~\ref{lem:finite_smooth}}]
For notational compactness, define, for $t\in[H]$,
\[
    Z_t(\theta)
    :=
    \nabla_\theta
    \log
    \pi_{\theta,t}
    \left(
        S_t,A_t
        \mid
        S_{t-1},A_{t-1},O_t
    \right),
\]
and
\[
    B_t(\theta)
    :=
    \nabla_\theta^2
    \log
    \pi_{\theta,t}
    \left(
        S_t,A_t
        \mid
        S_{t-1},A_{t-1},O_t
    \right).
\]
By Assumption~\ref{assum:smooth_parame_fh},
\[
    \|Z_t(\theta)\|_2\le G,
    \qquad
    \|B_t(\theta)\|_2\le M
\]
for every $t\in[H]$ and every value of the variables on the support of the policy.

By the $Q$-function form of the episodic ASM policy-gradient theorem,
\[
    \nabla_\theta J(\theta)
    =
    \bE_{\pi_\theta}
    \left[
        \sum_{t=1}^H
        Q_t^{\pi_\theta}(O_{1:t},A_{0:t},S_t)
        Z_t(\theta)
    \right].
\]
We now differentiate this identity. To make the differentiation precise, write the expectation over
the full prefix
\[
    \mathcal{H}_t := (O_{1:t},S_{0:t},A_{0:t}).
\]
The probability of such a prefix under $\pi_\theta$ is a product of the environment kernels, which
do not depend on $\theta$, and the ASM policy kernels
\[
    \prod_{j=1}^t
    \pi_{\theta,j}
    \left(
        S_j,A_j
        \mid
        S_{j-1},A_{j-1},O_j
    \right).
\]
Hence, on the support of the prefix law,
\[
    \nabla_\theta
    \log
    P_{\pi_\theta}(\mathcal{H}_t)
    =
    \sum_{j=1}^t Z_j(\theta).
\]
Using the regularity assumptions from Theorem~\ref{thm:fhpg}, we may differentiate under the finite trajectory
expectation. Therefore,
\begin{align*}
    \nabla_\theta^2 J(\theta) &= \bE_{\pi_\theta}\sqbr{\sum_{t=1}^H Q_t^{\pi_\theta}(O_{1:t},A_{0:t},S_t) B_t(\theta)} +
    \bE_{\pi_\theta}\sqbr{\sum_{t=1}^H Z_t(\theta) \nabla_\theta Q_t^{\pi_\theta}(O_{1:t},A_{0:t},S_t)^\top} \\
    &\quad + \bE_{\pi_\theta}\sqbr{\sum_{t=1}^{H}{Q_t^{\pi_\theta}(O_{1:t},A_{0:t},S_t) Z_t(\theta) \br{\sum_{j=1}^t Z_j(\theta)}^\top}}.
\end{align*}
Here and below, gradients are viewed as column vectors. If the opposite Jacobian convention is used, the displayed matrices are transposed, which does not affect the spectral-norm bounds.

Taking spectral norms and using the triangle inequality together with $\|uv^\top\|_2=\|u\|_2\|v\|_2$, we get
\[
\begin{aligned}
    &\left\|\nabla_\theta^2 J(\theta)\right\|_2 \\
    &\leq \bE_{\pi_\theta}\sqbr{\sum_{t=1}^H \left|Q_t^{\pi_\theta}(O_{1:t},A_{0:t},S_t)\right| \|B_t(\theta)\|_2}
     + \bE_{\pi_\theta}\sqbr{\sum_{t=1}^H \|Z_t(\theta)\|_2 \left\|\nabla_\theta Q_t^{\pi_\theta}(O_{1:t},A_{0:t},S_t)\right\|_2} \\
    &\quad + \bE_{\pi_\theta}\sqbr{\sum_{t=1}^H \left| Q_t^{\pi_\theta}(O_{1:t},A_{0:t},S_t) \right| \|Z_t(\theta)\|_2 \sum_{j=1}^t \|Z_j(\theta)\|_2}.
\end{aligned}
\]

We next bound each term on the right-hand side. Since each reward is bounded by $r_{\max}$ in
absolute value,
\[
    \left|
        Q_t^{\pi_\theta}(O_{1:t},A_{0:t},S_t)
    \right|
    \leq
    r_{\max}(H-t+1).
\]
Therefore the first term is bounded by
\[
    T_1
    :=
    \sum_{t=1}^H
    r_{\max}(H-t+1)M
    =
    r_{\max}M\frac{H(H+1)}{2}.
\]

It remains to bound $\nabla_\theta Q_t^{\pi_\theta}$. For $t=H$,
\[
    Q_H^{\pi_\theta}(o_{1:H},a_{0:H},s_H)
    =
    r_H(o_{1:H},a_{1:H}),
\]
which does not depend on $\theta$, and hence
\[
    \nabla_\theta Q_H^{\pi_\theta}(o_{1:H},a_{0:H},s_H)=0.
\]
For $t\leq H-1$, by the definitions of $Q_t^\pi$ and $V_{t+1}^\pi$,
\[
    Q_t^{\pi_\theta}(o_{1:t},a_{0:t},s_t)
    =
    r_t(o_{1:t},a_{1:t})
    +
    \sum_{o_{t+1}\in\mathcal{O}}
    p_t(o_{t+1}\mid o_{1:t},a_{1:t})
    V_{t+1}^{\pi_\theta}(o_{1:t+1},a_{0:t},s_t).
\]
The reward function and the environment kernel do not depend on $\theta$. Thus,
\[
    \nabla_\theta
    Q_t^{\pi_\theta}(o_{1:t},a_{0:t},s_t)
    =
    \sum_{o_{t+1}\in\mathcal{O}}
    p_t(o_{t+1}\mid o_{1:t},a_{1:t})
    \nabla_\theta
    V_{t+1}^{\pi_\theta}(o_{1:t+1},a_{0:t},s_t).
\]
Applying the $Q$-function form of Proposition~\ref{prop:fhpg} to $V_{t+1}^{\pi_\theta}$ gives
\[
\begin{aligned}
    \nabla_\theta
    Q_t^{\pi_\theta}(o_{1:t},a_{0:t},s_t)
    =
    \bE_{\pi_\theta}
    \Bigg[
        \sum_{\ell=t+1}^H
        Q_\ell^{\pi_\theta}(O_{1:\ell},A_{0:\ell},S_\ell)
        Z_\ell(\theta)
        \,\Bigg|\,
        O_{1:t}=o_{1:t},A_{0:t}=a_{0:t},S_t=s_t
    \Bigg].
\end{aligned}
\]
Consequently,
\[
\begin{aligned}
    \left\|
        \nabla_\theta
        Q_t^{\pi_\theta}(o_{1:t},a_{0:t},s_t)
    \right\|_2
    &\leq
    \bE_{\pi_\theta}
    \Bigg[
        \sum_{\ell=t+1}^H
        \left|
            Q_\ell^{\pi_\theta}(O_{1:\ell},A_{0:\ell},S_\ell)
        \right|
        \|Z_\ell(\theta)\|_2
        \,\Bigg|\,
        O_{1:t}=o_{1:t},A_{0:t}=a_{0:t},S_t=s_t
    \Bigg]                                                        \\
    &\leq
    \sum_{\ell=t+1}^H
    r_{\max}(H-\ell+1)G.
\end{aligned}
\]
The same bound also holds for $t=H$, with the convention that the empty sum is zero. Hence, the second term in the Hessian bound is at most
\[
\begin{aligned}
    T_2
    &:=
    \sum_{t=1}^H
    G
    \sum_{\ell=t+1}^H
    r_{\max}(H-\ell+1)G                                            \\
    &=
    r_{\max}G^2
    \sum_{t=1}^H
    \sum_{\ell=t+1}^H
    (H-\ell+1).
\end{aligned}
\]
We compute
\[
\begin{aligned}
    \sum_{t=1}^H
    \sum_{\ell=t+1}^H
    (H-\ell+1)
    &=
    \sum_{\ell=1}^H
    (\ell-1)(H-\ell+1)                                             \\
    &=
    \frac{H(H-1)(H+1)}{6}.
\end{aligned}
\]
Therefore,
\[
    T_2
    =
    r_{\max}G^2
    \frac{H(H-1)(H+1)}{6}.
\]

Finally, the third term is bounded by
\[
\begin{aligned}
    T_3
    &:=
    \sum_{t=1}^H
    r_{\max}(H-t+1)G
    \sum_{j=1}^t G                                                  \\
    &=
    r_{\max}G^2
    \sum_{t=1}^H
    t(H-t+1).
\end{aligned}
\]
Using
\[
    \sum_{t=1}^H
    t(H-t+1)
    =
    \frac{H(H+1)(H+2)}{6},
\]
we obtain
\[
    T_3
    =
    r_{\max}G^2
    \frac{H(H+1)(H+2)}{6}.
\]

Combining the three bounds,
\[
\begin{aligned}
    \left\|\nabla_\theta^2 J(\theta)\right\|_2
    &\leq
    T_1+T_2+T_3                                                     \\
    &=
    r_{\max}M\frac{H(H+1)}{2}
    +
    r_{\max}G^2\frac{H(H-1)(H+1)}{6}
    +
    r_{\max}G^2\frac{H(H+1)(H+2)}{6}                                \\
    &=
    \frac{r_{\max}H(H+1)}{6}
    \left(
        3M
        +
        G^2\bigl((H-1)+(H+2)\bigr)
    \right)                                                        \\
    &=
    \frac{r_{\max}H(H+1)}{6}
    \left(
        3M+G^2(2H+1)
    \right).
\end{aligned}
\]
Thus,
\[
    \left\|\nabla_\theta^2 J(\theta)\right\|_2
    \leq
    \beta,
    \qquad
    \beta
    :=
    \frac{r_{\max}H(H+1)}{6}
    \left(3M+G^2(2H+1)\right).
\]
This proves that $J$ is $\beta$-smooth.
\end{proof}

\subsection{Convergence results}
\label{subapp:proof_conv_episodic}
\begin{theorem}[Extended version of Theorem~\ref{thm:conv_fhpg}]
Consider the iterates $\{\theta_k\}_{k\geq 1}$ generated by Algorithm~\ref{assum:smooth_parame_fh}. Suppose that the
episodic ASM policy parameterization satisfies the regularity conditions of Theorem~\ref{thm:fhpg} and
Assumption~\ref{assum:smooth_parame_fh}. Let
\[
    \Delta_1 := \sup_{\theta\in\mathbb{R}^d} J(\theta) - J(\theta_1),
\]
and let $\beta$ be the smoothness constant from Lemma~\ref{lem:finite_smooth}. Define
\[
    C_H := r_{\max} G H(H+1).
\]
Then the following statements hold.

\begin{enumerate}
    \item[(a)] Fix $K\geq 1$. If
    \[
        \alpha_k = \alpha
        :=
        \min\left\{
            \frac{1}{\beta},
            \frac{2}{C_H}\sqrt{\frac{2\Delta_1}{\beta K}}
        \right\},
        \qquad k=1,\ldots,K,
    \]
    then
    \[
        \frac{1}{K}\sum_{k=1}^K
        \bE\left[
            \left\|\nabla_\theta J(\theta_k)\right\|_2^2
        \right]
        \leq
        \frac{2\beta\Delta_1}{K}
        +
        C_H\sqrt{\frac{\beta\Delta_1}{2K}}.
    \]

    \item[(b)] Fix $K\geq 1$ and $\delta\in(0,1)$. With probability at least $1-\delta$,
    the following bounds hold for the corresponding choices of stepsizes.

    \begin{enumerate}
        \item[(i)] If
        \[
            \alpha_k = \alpha
            :=
            \min\left\{
                \frac{1}{\beta},
                \frac{1}{C_H}\sqrt{\frac{\Delta_1}{\beta K}}
            \right\},
            \qquad k=1,\ldots,K,
        \]
        then
        \[
            \frac{1}{K}\sum_{k=1}^K
            \left\|\nabla_\theta J(\theta_k)\right\|_2^2
            \leq
            \frac{2\beta\Delta_1}{K}
            +
            5C_H\sqrt{\frac{\beta\Delta_1}{K}}
            +
            \frac{12C_H^2\log(1/\delta)}{K}.
        \]

        \item[(ii)] If
        \[
            \alpha_k = \frac{1}{\beta\sqrt{k}},
            \qquad k=1,\ldots,K,
        \]
        then
        \[
            \frac{1}{K}\sum_{k=1}^K
            \left\|\nabla_\theta J(\theta_k)\right\|_2^2
            \leq
            \frac{
                2\beta\Delta_1
                +
                3C_H^2(1+\log K)
                +
                12C_H^2\log(1/\delta)
            }{\sqrt{K}}.
        \]
    \end{enumerate}

    \item[(c)] If $\{\alpha_k\}_{k\geq 1}$ is any deterministic positive stepsize sequence satisfying
    \[
        \sum_{k=1}^{\infty}\alpha_k=\infty,
        \qquad
        \sum_{k=1}^{\infty}\alpha_k^2<\infty,
    \]
    then
    \[
        \left\|\nabla_\theta J(\theta_k)\right\|_2 \longrightarrow 0
        \qquad\text{almost surely.}
    \]
\end{enumerate}
\end{theorem}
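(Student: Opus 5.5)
The plan follows the standard nonconvex SGD template. Write $g_k := \nabla_\theta J(\theta_k)$ and let $\widehat g_k = g_k + \xi_k$, where $\xi_k$ is a martingale difference with respect to $\cG_k := \sigma(\theta_1,\ldots,\theta_k)$. Theorem~\ref{thm:fhpg} gives $\bE[\widehat g_k\mid\cG_k]=g_k$.

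\textbf{Bounds on the estimator and the ascent inequality.} Assumption~\ref{assum:smooth_parame_fh} gives $\|\widehat g_k\|_2 \le \sum_{t=1}^H r_{\max}(H-t+1)G = r_{\max}GH(H+1)/2 = C_H/2$. Hence $\|g_k\|_2\le C_H/2$ and $\|\xi_k\|_2\le C_H$. Lemma~\ref{lem:finite_smooth} then gives the ascent inequality
\[
J(\theta_{k+1}) \ge J(\theta_k) + \alpha_k\|g_k\|_2^2 + \alpha_k\langle g_k,\xi_k\rangle - \tfrac{\beta\alpha_k^2}{2}\|\widehat g_k\|_2^2 .
\]
Summing over $k$ and using $J(\theta_{K+1})-J(\theta_1)\le\Delta_1$ yields the master inequality
\[
\sum_{k=1}^K \alpha_k\|g_k\|_2^2 \le \Delta_1 - \sum_{k=1}^K\alpha_k\langle g_k,\xi_k\rangle + \tfrac{\beta}{2}\sum_{k=1}^K\alpha_k^2\|\widehat g_k\|_2^2 .
\]

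\textbf{Part (a).} Take expectations; the martingale term vanishes. Bound $\bE\|\widehat g_k\|^2 = \bE\|g_k\|^2+\bE\|\xi_k\|^2$, and use $\alpha\le 1/\beta$ to absorb the $\|g_k\|^2$ piece, giving a factor $\tfrac12$ on the left. The variance is bounded by $C_H^2/4$. This leaves $\frac1K\sum\bE\|g_k\|^2 \le \frac{2\Delta_1}{\alpha K} + \frac{\beta\alpha C_H^2}{4}$. Split into the two cases of the minimum: $2\Delta_1/(\alpha K)\le 2\beta\Delta_1/K$ when $\alpha=1/\beta$, and otherwise the other branch gives the $\sqrt{\beta\Delta_1/K}$ term. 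I expect the stated constant $C_H\sqrt{\beta\Delta_1/(2K)}$ to come out after bookkeeping; I will adjust the constant if it does not match exactly.

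\textbf{Part (b).} This part needs a high-probability bound on $M_K := -\sum_k \alpha_k\langle g_k,\xi_k\rangle$. Its increments satisfy $|\alpha_k\langle g_k,\xi_k\rangle|\le \alpha_k C_H\|g_k\|$, and the conditional variance is at most $\alpha_k^2 (C_H^2/4)\|g_k\|^2$. I would apply Freedman's inequality, or the self-normalized exponential supermartingale bound $\bE\exp(\lambda X - \lambda^2 c\,\sigma^2)\le 1$, to get, with probability $1-\delta$,
\[
M_K \le \tfrac{1}{4}\sum_k \alpha_k\|g_k\|^2 + c\,C_H^2\max_k\alpha_k\log(1/\delta).
\]
The point is that the variance proxy is proportional to $\sum\alpha_k^2\|g_k\|^2 \le \max_k \alpha_k\sum\alpha_k\|g_k\|^2$, so choosing $\lambda \asymp 1/(\max\alpha_k C_H^2)$ lets it be absorbed into the left side. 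This is the main obstacle: the absorption must be done carefully, and the increments must also be bounded, to obtain the constant $12$. The second-order term is handled deterministically with $\|\widehat g_k\|^2\le C_H^2/4$.

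With that in place, (i) follows by dividing by $\alpha K$ and performing the same case analysis on the minimum as in (a), which produces $5C_H\sqrt{\beta\Delta_1/K}$. For (ii), use $\alpha_k\ge \alpha_K = 1/(\beta\sqrt K)$ on the left and $\sum_k\alpha_k^2 = \beta^{-2}\sum 1/k\le \beta^{-2}(1+\log K)$. Then multiply through by $\beta\sqrt K/K$.

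\textbf{Part (c).} Use Robbins--Siegmund on the one-step inequality. In expectation, $J(\theta_{k+1})$ is a near-submartingale with summable error $\beta\alpha_k^2C_H^2/8$, and $J$ is bounded. This gives $\sum\alpha_k\|g_k\|^2<\infty$ a.s. and convergence of $J(\theta_k)$; the martingale $\sum\alpha_k\langle g_k,\xi_k\rangle$ converges a.s. because $\sum\alpha_k^2 C_H^4<\infty$. To upgrade $\liminf\|g_k\|=0$ to a limit, I would use the standard argument of Bertsekas--Tsitsiklis. $\nabla J$ is $\beta$-Lipschitz and $\|\theta_{k+1}-\theta_k\|\le\alpha_k C_H/2$. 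Suppose $\|g_k\|$ crossed from below $\epsilon/2$ to above $\epsilon$ infinitely often. Each crossing would then require $\sum\alpha_k\gtrsim \epsilon/(\beta C_H)$ over a stretch where $\|g_k\|\ge\epsilon/2$, and these contributions to $\sum\alpha_k\|g_k\|^2$ would sum to infinity, a contradiction. The noise along each crossing is controlled by the a.s. convergence of the martingale tails.
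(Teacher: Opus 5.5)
\textbf{Parts (b) and (c).} These follow the paper's proof. The paper likewise builds an exponential supermartingale: it uses Hoeffding's lemma with only the range bound $|\langle g_k,\xi_k\rangle|\le C_H\|g_k\|_2$, and absorbs the quadratic term through $\alpha_k^2\le\eta_K\alpha_k$ with $\lambda=(C_H^2\eta_K)^{-1}$. Your Bernstein/Freedman-type variant with $\lambda\asymp(C_H^2\eta_K)^{-1}$ also keeps $\lambda b\le 1$ and lands within the (loose) constants $5$ and $12$. For (c), the paper also uses Robbins--Siegmund followed by an upcrossing argument. There you need no control of martingale tails along the crossings. Since $\|\widehat g_k\|_2\le C_H/2$ surely, you get $\bigl|\|g_{k+1}\|_2^2-\|g_k\|_2^2\bigr|\le\frac{\beta C_H^2}{2}\alpha_k$ deterministically. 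The argument is then purely pathwise on the event $\sum_k\alpha_k\|g_k\|_2^2<\infty$.

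\textbf{Part (a): the stated bound does not follow.} You split $\bE\|\widehat g_k\|_2^2=\bE\|g_k\|_2^2+\bE\|\xi_k\|_2^2$ and then bound $\bE\|\xi_k\|_2^2\le C_H^2/4$ separately. This discards the fact that the two pieces together are already at most $C_H^2/4$, because $\bE[\|\xi_k\|_2^2\mid\cG_k]=\bE[\|\widehat g_k\|_2^2\mid\cG_k]-\|g_k\|_2^2$. The absorption step then doubles both terms.

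Your resulting bound is $\frac{2\Delta_1}{\alpha K}+\frac{\beta\alpha C_H^2}{4}$. Evaluate it at $\alpha=\bar\alpha:=\frac{2}{C_H}\sqrt{\frac{2\Delta_1}{\beta K}}$, the active branch whenever $\bar\alpha\le 1/\beta$, which covers all large $K$. It equals $2C_H\sqrt{\beta\Delta_1/(2K)}$. In that regime the condition $\bar\alpha\le1/\beta$ is equivalent to $2\sqrt{2\beta\Delta_1/K}\le C_H$, which gives $\frac{2\beta\Delta_1}{K}\le C_H\sqrt{\beta\Delta_1/(2K)}$. So the claimed right-hand side is strictly smaller than yours except at the boundary $\bar\alpha=1/\beta$, and your inequality does not imply it. ``Adjusting the constant'' would prove a weaker statement.

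\textbf{Fix.} Use the same device you already use in (b): bound $\|\widehat g_k\|_2^2\le C_H^2/4$ surely. Then no absorption and no use of $\alpha\le 1/\beta$ is needed, and you get $\frac1K\sum_k\bE\|g_k\|_2^2\le\frac{\Delta_1}{\alpha K}+\frac{\beta C_H^2}{8}\alpha$. Now apply $1/\alpha\le\beta+1/\bar\alpha$ and $\alpha\le\bar\alpha$. Each of the two $\bar\alpha$-terms contributes exactly $\frac{C_H}{2}\sqrt{\beta\Delta_1/(2K)}$. This yields the stated bound, even with $\beta\Delta_1/K$ in place of $2\beta\Delta_1/K$, and it is precisely the paper's computation.
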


\begin{proof}
Let $\tau_k := (O^k_{1:H},S^k_{0:H},A^k_{0:H})$ denote the trajectory generated during
episode $k$, and let
\[
    \mathcal{G}_k := \sigma(\tau_1,\ldots,\tau_{k-1})
\]
be the sigma-field generated by all trajectories sampled before episode $k$. In particular,
$\theta_k$ is $\mathcal{G}_k$-measurable.

Let $\widehat g_k$ denote the stochastic gradient estimator computed in Algorithm~\ref{assum:smooth_parame_fh}, namely
\[
    \widehat g_k
    =
    \sum_{t=1}^H
    \left(
        \sum_{\ell=t}^H
        r_\ell(O^k_{1:\ell},A^k_{1:\ell})
    \right)
    \nabla_\theta
    \log
    \pi_{\theta_k,t}
    \left(
        S_t^k,A_t^k
        \mid
        S_{t-1}^k,A_{t-1}^k,O_t^k
    \right).
\]
By Theorem~\ref{thm:fhpg}, conditionally on $\mathcal{G}_k$, the estimator is unbiased:
\[
    \bE\left[\widehat g_k\mid \mathcal{G}_k\right]
    =
    \nabla_\theta J(\theta_k).
\]
For notational simplicity, write
\[
    g_k := \nabla_\theta J(\theta_k),
    \qquad
    \xi_k := \widehat g_k - g_k.
\]
Then
\[
    \bE\left[\xi_k\mid \mathcal{G}_k\right]=0.
\]

We first record deterministic bounds on the estimator and on the noise. Since
$|r_t|\leq r_{\max}$ and Assumption~\ref{assum:smooth_parame_fh} gives
\[
    \left\|
    \nabla_\theta
    \log
    \pi_{\theta_k,t}
    \left(
        S_t^k,A_t^k
        \mid
        S_{t-1}^k,A_{t-1}^k,O_t^k
    \right)
    \right\|_2
    \leq G,
\]
we have, almost surely,
\[
\begin{aligned}
    \left\|\widehat g_k\right\|_2
    &\leq
    \sum_{t=1}^H
    \left|
        \sum_{\ell=t}^H
        r_\ell(O^k_{1:\ell},A^k_{1:\ell})
    \right|
    \left\|
    \nabla_\theta
    \log
    \pi_{\theta_k,t}
    \left(
        S_t^k,A_t^k
        \mid
        S_{t-1}^k,A_{t-1}^k,O_t^k
    \right)
    \right\|_2                                                        \\
    &\leq
    \sum_{t=1}^H r_{\max}(H-t+1)G                                      \\
    &=
    \frac{r_{\max}GH(H+1)}{2}
    =
    \frac{C_H}{2}.
\end{aligned}
\]
Consequently, by Jensen's inequality and conditional unbiasedness,
\[
    \left\|g_k\right\|_2
    =
    \left\|
        \bE\left[\widehat g_k\mid \mathcal{G}_k\right]
    \right\|_2
    \leq
    \bE\left[
        \left\|\widehat g_k\right\|_2
        \mid \mathcal{G}_k
    \right]
    \leq
    \frac{C_H}{2}.
\]
Therefore, almost surely,
\[
    \left\|\xi_k\right\|_2
    \leq
    \left\|\widehat g_k\right\|_2+\left\|g_k\right\|_2
    \leq C_H.
\]
Moreover,
\[
\begin{aligned}
    \bE\left[
        \left\|\xi_k\right\|_2^2
        \mid \mathcal{G}_k
    \right]
    &=
    \bE\left[
        \left\|\widehat g_k-g_k\right\|_2^2
        \mid \mathcal{G}_k
    \right]                                                        \\
    &=
    \bE\left[
        \left\|\widehat g_k\right\|_2^2
        \mid \mathcal{G}_k
    \right]
    -
    \left\|g_k\right\|_2^2                                        \leq
    \left(\frac{C_H}{2}\right)^2.
\end{aligned}
\]

Since $J$ is $\beta$-smooth by Lemma~\ref{lem:finite_smooth}, for every $k$,
\[
    J(\theta_{k+1})
    \geq
    J(\theta_k)
    +
    \left\langle
        g_k,\theta_{k+1}-\theta_k
    \right\rangle
    -
    \frac{\beta}{2}
    \left\|
        \theta_{k+1}-\theta_k
    \right\|_2^2.
\]
Using the ASMPG update $\theta_{k+1}=\theta_k+\alpha_k\widehat g_k$ and
$\widehat g_k=g_k+\xi_k$, this gives
\[
\begin{aligned}
    J(\theta_{k+1})
    &\geq
    J(\theta_k)
    +
    \alpha_k
    \left\langle
        g_k,\widehat g_k
    \right\rangle
    -
    \frac{\beta\alpha_k^2}{2}
    \left\|\widehat g_k\right\|_2^2                                  \\
    &=
    J(\theta_k)
    +
    \alpha_k\left\|g_k\right\|_2^2
    +
    \alpha_k
    \left\langle
        g_k,\xi_k
    \right\rangle
    -
    \frac{\beta\alpha_k^2}{2}
    \left\|\widehat g_k\right\|_2^2                                  \\
    &\geq
    J(\theta_k)
    +
    \alpha_k\left\|g_k\right\|_2^2
    +
    \alpha_k
    \left\langle
        g_k,\xi_k
    \right\rangle
    -
    \frac{\beta C_H^2}{8}\alpha_k^2.
\end{aligned}
\]
Equivalently,
\[
    \alpha_k\left\|g_k\right\|_2^2
    \leq
    J(\theta_{k+1})-J(\theta_k)
    -
    \alpha_k
    \left\langle
        g_k,\xi_k
    \right\rangle
    +
    \frac{\beta C_H^2}{8}\alpha_k^2.
\]
Summing from $k=1$ to $K$ and using
$J(\theta_{K+1})\leq \sup_{\theta\in\mathbb{R}^d}J(\theta)$, we obtain
\[
    \sum_{k=1}^K
    \alpha_k\left\|g_k\right\|_2^2
    \leq
    \Delta_1
    -
    \sum_{k=1}^K
    \alpha_k
    \left\langle
        g_k,\xi_k
    \right\rangle
    +
    \frac{\beta C_H^2}{8}
    \sum_{k=1}^K\alpha_k^2.
\]

We now prove part~(a). Taking expectations in the preceding inequality, the martingale
difference term vanishes because $g_k$ is $\mathcal{G}_k$-measurable and
$\bE[\xi_k\mid\mathcal{G}_k]=0$. Hence, for every deterministic stepsize sequence,
\[
    \sum_{k=1}^K
    \alpha_k\,
    \bE
    \left[
        \left\|g_k\right\|_2^2
    \right]
    \leq
    \Delta_1
    +
    \frac{\beta C_H^2}{8}
    \sum_{k=1}^K\alpha_k^2.
\]
For the constant stepsize in part~(a), this implies
\[
    \frac{1}{K}\sum_{k=1}^K
    \bE
    \left[
        \left\|g_k\right\|_2^2
    \right]
    \leq
    \frac{\Delta_1}{\alpha K}
    +
    \frac{\beta C_H^2}{8}\alpha.
\]
If $\Delta_1=0$, then $\theta_1$ is a global maximizer of the differentiable function $J$,
so $g_1=0$; with the displayed constant stepsize the iterates do not move, and the claim is
immediate. Thus assume $\Delta_1>0$. Define
\[
    \overline{\alpha}
    :=
    \frac{2}{C_H}
    \sqrt{\frac{2\Delta_1}{\beta K}},
    \qquad
    \alpha
    =
    \min\left\{
        \frac{1}{\beta},
        \overline{\alpha}
    \right\}.
\]
Then
\[
    \frac{1}{\alpha}
    \leq
    \beta + \frac{1}{\overline{\alpha}}
    =
    \beta
    +
    \frac{C_H}{2}
    \sqrt{\frac{\beta K}{2\Delta_1}},
\]
and $\alpha\leq\overline{\alpha}$. Therefore,
\[
\begin{aligned}
    \frac{1}{K}\sum_{k=1}^K
    \bE
    \left[
        \left\|g_k\right\|_2^2
    \right]
    &\leq
    \frac{\Delta_1}{K}
    \left(
        \beta
        +
        \frac{C_H}{2}
        \sqrt{\frac{\beta K}{2\Delta_1}}
    \right)
    +
    \frac{\beta C_H^2}{8}
    \overline{\alpha}                                                \\
    &=
    \frac{\beta\Delta_1}{K}
    +
    \frac{C_H}{2}
    \sqrt{\frac{\beta\Delta_1}{2K}}
    +
    \frac{C_H}{2}
    \sqrt{\frac{\beta\Delta_1}{2K}}                                  \\
    &=
    \frac{\beta\Delta_1}{K}
    +
    C_H
    \sqrt{\frac{\beta\Delta_1}{2K}}                                  \\
    &\leq
    \frac{2\beta\Delta_1}{K}
    +
    C_H
    \sqrt{\frac{\beta\Delta_1}{2K}}.
\end{aligned}
\]
This proves part~(a).

We next prove the high-probability bounds in part~(b). Fix $K\geq 1$ and let
\[
    \eta_K := \max_{1\leq k\leq K}\alpha_k.
\]
Define
\[
    S_K :=
    \sum_{k=1}^K
    \alpha_k\left\|g_k\right\|_2^2,
    \qquad
    A_K :=
    \sum_{k=1}^K\alpha_k^2,
    \qquad
    Z_K :=
    -
    \sum_{k=1}^K
    \alpha_k
    \left\langle
        g_k,\xi_k
    \right\rangle.
\]
The preceding descent inequality can be written as
\[
    S_K
    \leq
    \Delta_1 + Z_K + \frac{\beta C_H^2}{8}A_K.
\]

We now bound $Z_K$. Conditionally on $\mathcal{G}_k$, the scalar random variable
$\langle g_k,\xi_k\rangle$ has mean zero and satisfies
\[
    \left|\left\langle g_k,\xi_k\right\rangle\right|
    \leq
    C_H\left\|g_k\right\|_2.
\]
Hence, by Hoeffding's lemma, for every $\lambda>0$,
\[
    \bE
    \left[
        \exp\left(
            -\lambda\alpha_k
            \left\langle
                g_k,\xi_k
            \right\rangle
        \right)
        \mid \mathcal{G}_k
    \right]
    \leq
    \exp\left(
        \frac{\lambda^2\alpha_k^2 C_H^2}{2}
        \left\|g_k\right\|_2^2
    \right).
\]
Iterating conditional expectations gives
\[
    \bE
    \left[
        \exp\left(
            \lambda Z_K
            -
            \frac{\lambda^2 C_H^2}{2}
            \sum_{k=1}^K
            \alpha_k^2
            \left\|g_k\right\|_2^2
        \right)
    \right]
    \leq 1.
\]
Therefore, by Markov's inequality, with probability at least $1-\delta$,
\[
    Z_K
    \leq
    \frac{\lambda C_H^2}{2}
    \sum_{k=1}^K
    \alpha_k^2
    \left\|g_k\right\|_2^2
    +
    \frac{\log(1/\delta)}{\lambda}.
\]
Since $\alpha_k^2\leq \eta_K\alpha_k$ for all $k\leq K$, we have
\[
    Z_K
    \leq
    \frac{\lambda C_H^2\eta_K}{2}S_K
    +
    \frac{\log(1/\delta)}{\lambda}.
\]
Choosing $\lambda=(C_H^2\eta_K)^{-1}$ yields
\[
    Z_K
    \leq
    \frac{1}{2}S_K
    +
    C_H^2\eta_K\log(1/\delta).
\]
Combining this estimate with the bound on $S_K$, we obtain, with probability at least $1-\delta$,
\[
    S_K
    \leq
    2\Delta_1
    +
    \frac{\beta C_H^2}{4}A_K
    +
    2C_H^2\eta_K\log(1/\delta).
\]

We first apply this bound to the constant stepsize in part~(b)(i). Let
\[
    \overline{\alpha}
    :=
    \frac{1}{C_H}
    \sqrt{\frac{\Delta_1}{\beta K}},
    \qquad
    \alpha
    =
    \min\left\{
        \frac{1}{\beta},
        \overline{\alpha}
    \right\}.
\]
If $\Delta_1=0$, the claim is immediate as above. Thus assume $\Delta_1>0$.
For constant stepsize, $\eta_K=\alpha$, $A_K=K\alpha^2$, and $S_K=\alpha
\sum_{k=1}^K\|g_k\|_2^2$. Hence, with probability at least $1-\delta$,
\[
    \frac{1}{K}\sum_{k=1}^K
    \left\|g_k\right\|_2^2
    \leq
    \frac{2\Delta_1}{\alpha K}
    +
    \frac{\beta C_H^2}{4}\alpha
    +
    \frac{2C_H^2\log(1/\delta)}{K}.
\]
Moreover,
\[
    \frac{1}{\alpha}
    \leq
    \beta + \frac{1}{\overline{\alpha}}
    =
    \beta + C_H\sqrt{\frac{\beta K}{\Delta_1}},
    \qquad
    \alpha\leq\overline{\alpha}.
\]
Therefore,
\[
\begin{aligned}
    \frac{1}{K}\sum_{k=1}^K
    \left\|g_k\right\|_2^2
    &\leq
    \frac{2\beta\Delta_1}{K}
    +
    2C_H\sqrt{\frac{\beta\Delta_1}{K}}
    +
    \frac{C_H}{4}\sqrt{\frac{\beta\Delta_1}{K}}
    +
    \frac{2C_H^2\log(1/\delta)}{K}                                  \\
    &\leq
    \frac{2\beta\Delta_1}{K}
    +
    5C_H\sqrt{\frac{\beta\Delta_1}{K}}
    +
    \frac{12C_H^2\log(1/\delta)}{K}.
\end{aligned}
\]
This proves part~(b)(i).

We now prove part~(b)(ii). Let $\alpha_k=1/(\beta\sqrt{k})$ for $k=1,\ldots,K$.
Then
\[
    \eta_K = \frac{1}{\beta},
    \qquad
    A_K
    =
    \frac{1}{\beta^2}
    \sum_{k=1}^K \frac{1}{k}
    \leq
    \frac{1+\log K}{\beta^2}.
\]
The high-probability bound on $S_K$ gives
\[
    S_K
    \leq
    2\Delta_1
    +
    \frac{C_H^2}{4\beta}(1+\log K)
    +
    \frac{2C_H^2}{\beta}\log(1/\delta).
\]
Also, since $\alpha_k\geq \alpha_K=1/(\beta\sqrt{K})$ for $k=1,\ldots,K$,
\[
    \frac{1}{\beta\sqrt{K}}
    \sum_{k=1}^K
    \left\|g_k\right\|_2^2
    \leq
    S_K.
\]
Consequently, with probability at least $1-\delta$,
\[
\begin{aligned}
    \frac{1}{K}\sum_{k=1}^K
    \left\|g_k\right\|_2^2
    &\leq
    \frac{\beta S_K}{\sqrt{K}}                                      \\
    &\leq
    \frac{
        2\beta\Delta_1
        +
        \frac{C_H^2}{4}(1+\log K)
        +
        2C_H^2\log(1/\delta)
    }{\sqrt{K}}                                                     \\
    &\leq
    \frac{
        2\beta\Delta_1
        +
        3C_H^2(1+\log K)
        +
        12C_H^2\log(1/\delta)
    }{\sqrt{K}}.
\end{aligned}
\]
This proves part~(b)(ii).

It remains to prove part~(c). Assume that
\[
    \sum_{k=1}^{\infty}\alpha_k=\infty,
    \qquad
    \sum_{k=1}^{\infty}\alpha_k^2<\infty.
\]
Taking conditional expectation in the smoothness inequality gives
\[
    \bE
    \left[
        J(\theta_{k+1})
        \mid \mathcal{G}_k
    \right]
    \geq
    J(\theta_k)
    +
    \alpha_k\left\|g_k\right\|_2^2
    -
    \frac{\beta C_H^2}{8}\alpha_k^2.
\]
Let
\[
    J^\star := \sup_{\theta\in\mathbb{R}^d}J(\theta),
    \qquad
    Y_k := J^\star - J(\theta_k).
\]
Then $Y_k\geq 0$ and
\[
    \bE
    \left[
        Y_{k+1}
        \mid \mathcal{G}_k
    \right]
    \leq
    Y_k
    -
    \alpha_k\left\|g_k\right\|_2^2
    +
    \frac{\beta C_H^2}{8}\alpha_k^2.
\]
Since $\sum_{k=1}^{\infty}\alpha_k^2<\infty$, the Robbins--Siegmund almost
supermartingale theorem implies
\[
    \sum_{k=1}^{\infty}
    \alpha_k\left\|g_k\right\|_2^2
    <\infty
    \qquad\text{almost surely.}
\]

We now show that this implies $\|g_k\|_2\to 0$ almost surely. Since $J$ is
$\beta$-smooth,
\[
    \left\|g_{k+1}-g_k\right\|_2
    =
    \left\|
        \nabla_\theta J(\theta_{k+1})
        -
        \nabla_\theta J(\theta_k)
    \right\|_2
    \leq
    \beta
    \left\|
        \theta_{k+1}-\theta_k
    \right\|_2.
\]
Using $\theta_{k+1}-\theta_k=\alpha_k\widehat g_k$ and
$\|\widehat g_k\|_2\leq C_H/2$, we get
\[
    \left\|g_{k+1}-g_k\right\|_2
    \leq
    \frac{\beta C_H}{2}\alpha_k.
\]
Let
\[
    v_k := \left\|g_k\right\|_2^2.
\]
Since $\|g_k\|_2\leq C_H/2$ for all $k$, we have
\[
\begin{aligned}
    |v_{k+1}-v_k|
    &=
    \left|
        \left\|g_{k+1}\right\|_2^2
        -
        \left\|g_k\right\|_2^2
    \right|                                                         \\
    &\leq
    \left(
        \left\|g_{k+1}\right\|_2
        +
        \left\|g_k\right\|_2
    \right)
    \left\|g_{k+1}-g_k\right\|_2                                    \\
    &\leq
    \frac{\beta C_H^2}{2}\alpha_k.
\end{aligned}
\]
Set $L:=\beta C_H^2/2$. Thus
\[
    |v_{k+1}-v_k|\leq L\alpha_k.
\]
We already know that
\[
    \sum_{k=1}^{\infty}\alpha_k v_k < \infty
    \qquad\text{almost surely.}
\]
Fix an outcome on which this finite-sum property holds. Suppose, for contradiction, that
$v_k$ does not converge to $0$. Then there exists $\varepsilon>0$ such that
$v_k\geq\varepsilon$ for infinitely many $k$. Since
$\sum_k\alpha_k=\infty$ while $\sum_k\alpha_k v_k<\infty$, the sequence $v_k$ cannot be
eventually larger than or equal to $\varepsilon/2$. Hence there are infinitely many
upcrossings from below $\varepsilon/2$ to above $\varepsilon$.

Choose one such upcrossing interval $[r,q]$, with $r<q$, such that
\[
    v_r < \frac{\varepsilon}{2},
    \qquad
    v_q \geq \varepsilon,
\]
and $r$ is the last index before $q$ for which $v_r<\varepsilon/2$. Then
$v_j\geq \varepsilon/2$ for all $j=r+1,\ldots,q$. Also,
\[
    \frac{\varepsilon}{2}
    \leq
    v_q-v_r
    \leq
    \sum_{j=r}^{q-1}|v_{j+1}-v_j|
    \leq
    L\sum_{j=r}^{q-1}\alpha_j.
\]
Because $\sum_k\alpha_k^2<\infty$, we have $\alpha_k\to 0$. Hence, for all sufficiently
large upcrossings,
\[
    \alpha_r \leq \frac{\varepsilon}{4L},
\]
and therefore
\[
    \sum_{j=r+1}^{q-1}\alpha_j
    \geq
    \frac{\varepsilon}{4L}.
\]
On the same indices $j=r+1,\ldots,q-1$, we have $v_j\geq \varepsilon/2$. Thus each
sufficiently late upcrossing contributes at least
\[
    \sum_{j=r+1}^{q-1}\alpha_j v_j
    \geq
    \frac{\varepsilon}{2}
    \sum_{j=r+1}^{q-1}\alpha_j
    \geq
    \frac{\varepsilon^2}{8L}
\]
to the sum $\sum_k\alpha_k v_k$. Infinitely many disjoint upcrossings would therefore force
$\sum_k\alpha_k v_k=\infty$, contradicting the finite-sum property. Hence
$v_k\to 0$ almost surely, i.e.,
\[
    \left\|\nabla_\theta J(\theta_k)\right\|_2
    =
    \left\|g_k\right\|_2
    \longrightarrow 0
    \qquad\text{almost surely.}
\]
This proves part~(c), and completes the proof.
\end{proof}

\section{Results related to infinite-horizon discounted NMDPs}
\label{app:ihd_nmdp}
\subsection{Policy gradient theorem}
\label{subapp:pg_ihd}
Recall the definition of the discounted reward $Q$-functions for an ASM policy $\pi$,
\begin{align*}
    Q^{\gamma,\pi}_t(o_{1:t},a_{0:t},s_t)
    = \bE_\pi\sqbr{R^\gamma_{t:\infty} \mid O_{1:t}=o_{1:t}, A_{0:t}=a_{0:t}, S_t=s_t},
\end{align*}
where $t\ge 1$, $R^\gamma_{t:\infty} = \sum_{t\up=t}^{\infty}{\gamma^{t\up - t} r_{t\up}(O_{1:t\up}, A_{1:t\up})}$ and $\bE_\pi$ denotes the expectation taken considering that the agent states and actions from time $t$ onward are generated by $\pi$. Likewise, define the discounted-reward value function for a stationary ASM policy $\pi$, as follows:
\begin{align*}
    V^{\gamma,\pi}_t(o_{1:t}, a_{0:t-1}, s_{t-1}) \coloneqq \bE_\pi\sqbr{R^\gamma_{t:\infty} \mid O_{1:t} = o_{1:t}, A_{0:t-1} = a_{0:t-1}, S_{t-1} = s_{t-1}}.
\end{align*}
for $t \geq 1$. The next proposition provides expressions for the gradient of the value functions with respect to policy parameters.

\begin{proposition}
    \label{prop:ih_value_grad_t}
    Let $\{\pi_\theta \mid \theta \in \bR^d\}$ be a class of parameterized stationary ASM policies that are differentiable with respect to the parameter vector $\theta$, and the support of $\pi_\theta$ is independent of $\theta$. Suppose that there exists $r_{\max} > 0$ such that $|r_t(\cdot)| \le r_{\max}$ for every $t \ge 1$, and there exists $G > 0$ such that
    \begin{align*}
        \norm{\nabla_\theta \log\br{\pi_\theta(s,a \mid \tilde{s},\tilde{a},o)}}_2 \le G
    \end{align*}
    for every $s,\tilde{s} \in \cS$, $a,\tilde{a} \in \cA$, $o \in \cO$, and $\theta \in \bR^d$. Then, for every $t \ge 1$, $o_{1:t} \in \cO^t, a_{0:t-1} \in \cA^t$, and $s_{t-1} \in \cS$,
    \begin{align}
        &\nabla_\theta{V^{\gamma,\pi_\theta}_t(o_{1:t}, a_{0:t-1}, s_{t-1})} \notag\\
        &= \bE_{\pi_\theta}\sqbr{\sum_{t\up=t}^{\infty}{\gamma^{t\up-t} R^\gamma_{t\up:\infty} \nabla_\theta \log\br{\pi_\theta(S_{t\up}, A_{t\up} \mid S_{t\up-1}, A_{t\up-1}, O_{t\up})}} \mid o_{1:t}, a_{0:t-1}, s_{t-1}} \label{pg:ih_1}\\
        &= \bE_{\pi_\theta}\sqbr{\sum_{t\up=t}^{\infty}{\gamma^{t\up-t} Q^{\gamma,\pi_\theta}_{t\up}(O_{1:t\up}, A_{0:t\up}, S_{t\up}) \nabla_\theta \log\br{\pi_\theta(S_{t\up}, A_{t\up} \mid S_{t\up-1}, A_{t\up-1}, O_{t\up})}} \mid o_{1:t}, a_{0:t-1}, s_{t-1}}. \label{pg:ih_2}
    \end{align}
\end{proposition}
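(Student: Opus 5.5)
Note first that the right-hand side of \eqref{pg:ih_1} needs a normalization check: under the paper's definition $R^\gamma_{t\up:\infty}=\sum_{\ell\ge t\up}\gamma^{\ell-t\up}r_\ell$, the product $\gamma^{t\up-t}R^\gamma_{t\up:\infty}$ equals $\sum_{\ell\ge t\up}\gamma^{\ell-t}r_\ell$. This is exactly the reward-to-go discounted from time $t$, so the claimed identity is the natural discounted analogue of \eqref{pg:1}.

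The plan is to reduce to the episodic argument of Proposition~\ref{prop:fhpg} by truncation and then pass to the limit. Fix $t$ and $h_t=(o_{1:t},a_{0:t-1},s_{t-1})$, and for $T\ge t$ define the truncated value
\[
V^{T}_t(h_t)\coloneqq \bE_{\pi_\theta}\sqbr{\textstyle\sum_{\ell=t}^{T}\gamma^{\ell-t}r_\ell(O_{1:\ell},A_{1:\ell})\mid h_t}.
\]
This is a finite sum over trajectories $(o_{t+1:T},a_{t:T},s_{t:T})$. Repeating verbatim the log-derivative computation of Proposition~\ref{prop:fhpg} (the support is independent of $\theta$, and the environment kernels do not depend on $\theta$) gives
\[
\nabla_\theta V^T_t(h_t)=\bE_{\pi_\theta}\sqbr{\textstyle\sum_{t\up=t}^{T}\Big(\sum_{\ell=t\up}^{T}\gamma^{\ell-t}r_\ell\Big)\nabla_\theta\log\pi_\theta(S_{t\up},A_{t\up}\mid S_{t\up-1},A_{t\up-1},O_{t\up})\mid h_t}.
\]
To obtain this form, we use the same tower-property argument with the filtration $\cF_{t\up}$ to drop the terms $\sum_{\ell=t}^{t\up-1}\gamma^{\ell-t}r_\ell$. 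These terms are $\cF_{t\up}$-measurable, and the conditional expectation of the score given $\cF_{t\up}$ vanishes.

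Next we let $T\to\infty$. First, $|V^T_t-V^{\gamma,\pi_\theta}_t|\le r_{\max}\gamma^{T-t+1}/(1-\gamma)\to0$ uniformly in $\theta$. Second, $\nabla_\theta V^T_t$ converges uniformly in $\theta$ to the right-hand side of \eqref{pg:ih_1}, denoted $g_t(\theta)$. To see this, bound the difference by
\[
\sum_{t\up=t}^{T}G\,r_{\max}\frac{\gamma^{T+1-t}}{1-\gamma}+\sum_{t\up>T}G\,r_{\max}\frac{\gamma^{t\up-t}}{1-\gamma}\le \frac{G r_{\max}\big((T-t+1)\gamma^{T+1-t}+\gamma^{T+1-t}/(1-\gamma)\big)}{1-\gamma}\to0.
\]
The same bound shows that the infinite series in \eqref{pg:ih_1} converges absolutely, being dominated by $G r_{\max}\sum_{t\up}\gamma^{t\up-t}/(1-\gamma)<\infty$, so $g_t(\theta)$ is well defined and dominated convergence lets us exchange the sum and the expectation. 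Since $V^T_t\to V^{\gamma,\pi_\theta}_t$ and the gradients converge uniformly, the standard theorem on uniform convergence of derivatives gives $V^{\gamma,\pi_\theta}_t$ differentiable with gradient $g_t(\theta)$. We expect this interchange of limit and gradient to be the main obstacle, and the bounded-score assumption with constant $G$ is exactly what makes it go through. One small point needs care: uniform convergence of $\nabla V^T_t$ requires continuity of the score in $\theta$, or else one can argue along line segments with the mean value theorem. We plan to argue along segments, $V^T_t(\theta')-V^T_t(\theta)=\int_0^1\nabla V^T_t(\theta+u(\theta'-\theta))\cdot(\theta'-\theta)\,du$, and pass to the limit inside.

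For \eqref{pg:ih_2}, condition each summand on $\sigma(\cF_{t\up},S_{t\up},A_{t\up})$, as in the proof of \eqref{pg:2}. The score is measurable with respect to this $\sigma$-field. By the recursive ASM structure and stationarity, the future law depends only on $(O_{1:t\up},A_{0:t\up},S_{t\up})$, so $\bE[R^\gamma_{t\up:\infty}\mid\cF_{t\up},S_{t\up},A_{t\up}]=Q^{\gamma,\pi_\theta}_{t\up}$. Applying the tower property termwise is justified by the absolute summability established above, which yields \eqref{pg:ih_2}.
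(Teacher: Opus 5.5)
Your proposal is correct and follows the paper's proof essentially step for step. The shared steps are: truncation at $T$; the finite-trajectory log-derivative trick combined with the tower property over $\cF_{t\up}$ to discard past rewards; uniform-in-$\theta$ convergence of the truncated values and gradients to pass to $T\to\infty$; and a termwise tower-property argument, justified by absolute summability, for the $Q$-form.

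The differences are cosmetic. You bound $\nabla_\theta V^T_t - g_t$ directly, whereas the paper shows the truncated gradients are uniformly Cauchy and then identifies the limit by dominated convergence. Your worry about needing continuity of the score is unnecessary: your direct bound already gives uniform convergence of the gradients, and the uniform-convergence-of-derivatives theorem relies only on the mean value theorem.
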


\begin{proof}
    Fix $t \ge 1$ and let
    \begin{align*}
        h_t \coloneqq (o_{1:t}, a_{0:t-1}, s_{t-1}).
    \end{align*}
    For $T \ge t$, define the truncated return and the truncated value function as
    \begin{align*}
        V^{\gamma,\pi_\theta}_{t,T}(h_t) \coloneqq \bE_{\pi_\theta}\sqbr{R^\gamma_{t:T} \mid h_t},
    \end{align*}
    where $R^\gamma_{t:T} = \sum_{\ell=t}^{T}{\gamma^{\ell-t}r_\ell(O_{1:\ell},A_{1:\ell})}$. We first prove the desired identity for $V^{\gamma,\pi_\theta}_{t,T}(h_t)$ and then let $T \to \infty$.
    
    Denote $\bP^{\pi_\theta}_T(\cdot \mid h_t)$ the conditional law of the finite-length trajectory $(O_{t+1:T}, A_{t:T}, S_{t:T})$ induced by $\pi_\theta$ given $h_t$. For a finite-length trajectory $\tau_T \coloneqq (o_{t+1:T}, a_{t:T}, s_{t:T})$,
    \begin{align*}
        \bP^{\pi_\theta}_T(\tau_T \mid h_t) = \br{\prod_{t\up=t}^{T}{\pi_\theta(s_{t\up}, a_{t\up} \mid s_{t\up-1}, a_{t\up-1}, o_{t\up})}}
        \br{\prod_{t\up=t}^{T-1}{p_{t\up}(o_{t\up+1} \mid o_{1:t\up}, a_{1:t\up})}},
    \end{align*}
    where the second product is interpreted as one when $T=t$. On the support of this finite-length trajectory law, its logarithm satisfies
    \begin{align*}
        \log\br{\bP^{\pi_\theta}_T(\tau_T \mid h_t)} &= \sum_{t\up=t}^{T}{\log\br{\pi_\theta(s_{t\up}, a_{t\up} \mid s_{t\up-1}, a_{t\up-1}, o_{t\up})}} + \sum_{t\up=t}^{T-1}{\log\br{p_{t\up}(o_{t\up+1} \mid o_{1:t\up}, a_{1:t\up})}},
    \end{align*}
    where only the first term depends on $\theta$. Thus,
    \begin{align*}
        \nabla_\theta \log\br{\bP^{\pi_\theta}_T(\tau_T \mid h_t)} = \sum_{t\up=t}^{T}{\nabla_\theta \log\br{\pi_\theta(s_{t\up}, a_{t\up} \mid s_{t\up-1}, a_{t\up-1}, o_{t\up})}}.
    \end{align*}
    Since the support of $\pi_\theta$ is independent of $\theta$, the support of $\bP^{\pi_\theta}_T(\cdot \mid h_t)$ is also independent of $\theta$. Hence, applying the log-derivative trick to the finite-length trajectory $\tau_T$, we have
    \begin{align}
        \nabla_\theta V^{\gamma,\pi_\theta}_{t,T}(h_t) = \bE_{\pi_\theta}\sqbr{R^\gamma_{t:T}\sum_{t\up=t}^{T}{\nabla_\theta \log\br{\pi_\theta(S_{t\up}, A_{t\up} \mid S_{t\up-1}, A_{t\up-1}, O_{t\up})}} \mid h_t}.
        \label{eq:ih_finite_log_derivative}
    \end{align}

    Note that, for every $t\up \in \{t,\ldots,T\}$,
    \begin{align*}
        R^\gamma_{t:T} = R^\gamma_{t:t\up-1} + \gamma^{t\up-t}R^\gamma_{t\up:T}.
    \end{align*}
    Denote the $\sigma$-field
    \begin{align*}
        \cF_{t\up}
        \coloneqq
        \sigma(O_1, S_1, A_1, O_2, S_2, A_2, \ldots, O_{t\up-1}, S_{t\up-1}, A_{t\up-1}, O_{t\up}).
    \end{align*}
    Note that $R^\gamma_{t:t\up-1}$ is $\cF_{t\up}$-measurable. For every $t\up \in \{t,\ldots,T\}$, using the tower property of conditional expectation, we have
    \begin{align*}
        &\bE_{\pi_\theta}\sqbr{R^\gamma_{t:t\up-1} \nabla_\theta \log\br{\pi_\theta(S_{t\up}, A_{t\up} \mid S_{t\up-1}, A_{t\up-1}, O_{t\up})} \mid h_t} \\
        &= \bE_{\pi_\theta}\sqbr{R^\gamma_{t:t\up-1} \bE_{\pi_\theta}\sqbr{\nabla_\theta \log\br{\pi_\theta(S_{t\up}, A_{t\up} \mid S_{t\up-1}, A_{t\up-1}, O_{t\up})} \mid \cF_{t\up}} \mid h_t} \\
        &= \bE_{\pi_\theta}\Bigg[R^\gamma_{t:t\up-1} \nabla_\theta \sum_{\substack{(s,a) \in \cS \times \cA:\\ \pi_\theta(s,a \mid S_{t\up-1}, A_{t\up-1}, O_{t\up})>0}}{\pi_\theta(s,a \mid S_{t\up-1}, A_{t\up-1}, O_{t\up})} \mid h_t\Bigg] \\
        &= 0.
    \end{align*}
    The last equality follows because the support of $\pi_\theta(\cdot,\cdot \mid S_{t\up-1}, A_{t\up-1}, O_{t\up})$ is independent of $\theta$, and the sum of the policy probabilities over this support is one. Summing the above identity over $t\up=t,\ldots,T$ and using \eqref{eq:ih_finite_log_derivative}, we obtain
    \begin{align}
        \nabla_\theta V^{\gamma,\pi_\theta}_{t,T}(h_t) = \bE_{\pi_\theta}\sqbr{\sum_{t\up=t}^{T}
        {\gamma^{t\up-t} R^\gamma_{t\up:T} \nabla_\theta \log\br{\pi_\theta(S_{t\up}, A_{t\up} \mid S_{t\up-1}, A_{t\up-1}, O_{t\up})}} \mid h_t}. \label{eq:ih_finite_reward_to_go}
    \end{align}

    It remains to pass to the limit $T \to \infty$. First, by boundedness of the rewards,
    \begin{align*}
        \abs{V^{\gamma,\pi_\theta}_{t}(h_t) - V^{\gamma,\pi_\theta}_{t,T}(h_t)} \leq \sum_{\ell=T+1}^{\infty}{\gamma^{\ell-t}r_{\max}} =
        \frac{r_{\max}\gamma^{T+1-t}}{1-\gamma}.
    \end{align*}
    Hence $V^{\gamma,\pi_\theta}_{t,T}(h_t) \to V^{\gamma,\pi_\theta}_{t}(h_t)$ uniformly in $\theta$.

    Next, for $L > T$, applying the same finite-length trajectory log-derivative argument to $V^{\gamma,\pi_\theta}_{t,L}(h_t)-V^{\gamma,\pi_\theta}_{t,T}(h_t)$ gives
    \begin{align*}
        \norm{\nabla_\theta V^{\gamma,\pi_\theta}_{t,L}(h_t) - \nabla_\theta V^{\gamma,\pi_\theta}_{t,T}(h_t)}_2 \leq G r_{\max}
        \sum_{\ell=T+1}^{L}{(\ell-t+1)\gamma^{\ell-t}}.
    \end{align*}
    The right-hand side goes to zero as $T \to \infty$, uniformly in $L$ and $\theta$. Therefore, $\{\nabla_\theta V^{\gamma,\pi_\theta}_{t,T}(h_t)\}_{T\ge t}$ is uniformly Cauchy. Since $V^{\gamma,\pi_\theta}_{t,T}(h_t)$ converges uniformly to $V^{\gamma,\pi_\theta}_{t}(h_t)$, the limit $V^{\gamma,\pi_\theta}_{t}(h_t)$ is differentiable and
    \begin{align*}
        \nabla_\theta V^{\gamma,\pi_\theta}_{t}(h_t) = \lim_{T\to\infty} \nabla_\theta V^{\gamma,\pi_\theta}_{t,T}(h_t).
    \end{align*}

    Finally, by the boundedness of the rewards and the score,
    \begin{align*}
        \norm{\sum_{t\up=t}^{T}{\gamma^{t\up-t} R^\gamma_{t\up:T}
        \nabla_\theta \log\br{\pi_\theta(S_{t\up}, A_{t\up} \mid S_{t\up-1}, A_{t\up-1}, O_{t\up})}}}_2 \leq \sum_{t\up=t}^{T}{\gamma^{t\up-t}\frac{r_{\max}}{1-\gamma}G} \leq \frac{G r_{\max}}{(1-\gamma)^2}.
    \end{align*}
    Therefore, by dominated convergence in \eqref{eq:ih_finite_reward_to_go}, we obtain
    \begin{align*}
        \nabla_\theta V^{\gamma,\pi_\theta}_{t}(h_t)
        =
        \bE_{\pi_\theta}\sqbr{
        \sum_{t\up=t}^{\infty}
        {\gamma^{t\up-t} R^\gamma_{t\up:\infty}
        \nabla_\theta \log\br{\pi_\theta(S_{t\up}, A_{t\up} \mid S_{t\up-1}, A_{t\up-1}, O_{t\up})}}
        \mid h_t}.
    \end{align*}
    This proves \eqref{pg:ih_1}. Now, we will prove~\eqref{pg:ih_2}. By definition,
    \begin{align*}
        Q^{\gamma,\pi_\theta}_{t\up}(O_{1:t\up}, A_{0:t\up}, S_{t\up}) = \bE_{\pi_\theta}\sqbr{R^\gamma_{t\up:\infty} \mid O_{1:t\up}, A_{0:t\up}, S_{t\up}}.
    \end{align*}
    Moreover, by the recursive structure of ASM policies, conditioning on the earlier agent-state history beyond $(O_{1:t\up}, A_{0:t\up}, S_{t\up})$ does not change the conditional distribution of the future trajectory. Hence,
    \begin{align*}
        \bE_{\pi_\theta}\sqbr{R^\gamma_{t\up:\infty} \mid \cF_{t\up}, S_{t\up}, A_{t\up}} = Q^{\gamma,\pi_\theta}_{t\up}(O_{1:t\up}, A_{0:t\up}, S_{t\up}).
    \end{align*}
    Also, note that $\nabla_\theta \log\br{\pi_\theta(S_{t\up}, A_{t\up} \mid S_{t\up-1}, A_{t\up-1}, O_{t\up})}$ is $\sigma(\cF_{t\up}, S_{t\up}, A_{t\up})$-measurable. Since the series in \eqref{pg:ih_1} is absolutely integrable, we may apply the tower property term by term. Thus,
    \begin{align*}
        &\bE_{\pi_\theta}\sqbr{\sum_{t\up=t}^{\infty}{\gamma^{t\up-t} R^\gamma_{t\up:\infty} \nabla_\theta \log\br{\pi_\theta(S_{t\up}, A_{t\up} \mid S_{t\up-1}, A_{t\up-1}, O_{t\up})}} \mid h_t} \\
        &= \bE_{\pi_\theta}\sqbr{\sum_{t\up=t}^{\infty}{\gamma^{t\up-t} \bE_{\pi_\theta}\sqbr{R^\gamma_{t\up:\infty} \nabla_\theta \log\br{\pi_\theta(S_{t\up}, A_{t\up} \mid S_{t\up-1}, A_{t\up-1}, O_{t\up})} \mid \cF_{t\up}, S_{t\up}, A_{t\up}}} \mid h_t} \\
        &= \bE_{\pi_\theta}\sqbr{\sum_{t\up=t}^{\infty}{\gamma^{t\up-t} \bE_{\pi_\theta}\sqbr{R^\gamma_{t\up:\infty} \mid \cF_{t\up}, S_{t\up}, A_{t\up}} \nabla_\theta \log\br{\pi_\theta(S_{t\up}, A_{t\up} \mid S_{t\up-1}, A_{t\up-1}, O_{t\up})}} \mid h_t} \\
        &= \bE_{\pi_\theta}\sqbr{\sum_{t\up=t}^{\infty}{\gamma^{t\up-t} Q^{\gamma,\pi_\theta}_{t\up}(O_{1:t\up}, A_{0:t\up}, S_{t\up}) \nabla_\theta \log\br{\pi_\theta(S_{t\up}, A_{t\up} \mid S_{t\up-1}, A_{t\up-1}, O_{t\up})}} \mid h_t}.
    \end{align*}
    This completes the proof.
\end{proof}

\begin{proof}[\textbf{Proof of Theorem~\ref{thm:ihpg}}]
    Note that $J_\gamma(\theta)$~\eqref{obj:ihdpg} can equivalently be written as follows,
    \begin{align*}
        J_\gamma(\theta) = \bE\sqbr{V^{\gamma,\pi_\theta}_1(O_1,A_0,S_0)} = \sum_{o \in \cO}{\mu(o) V^{\gamma,\pi_\theta}_1(o,a_0,s_0)},
    \end{align*}
    where $\mu(\cdot)$ is the distribution of the initial observation $O_1$. Hence,
    \begin{align*}
        \nabla_\theta{J_\gamma(\theta)} = \sum_{o \in \cO}{\mu(o) \nabla_{\theta} V^{\gamma,\pi_\theta}_1(o,a_0,s_0)}.
    \end{align*}
    The proof follows by using~\eqref{pg:ih_1} and~\eqref{pg:ih_2} from Proposition~\ref{prop:ih_value_grad_t} with $t$ set equal to $1$.
\end{proof}

\subsection{Smoothness of policy gradient objective}
\label{subapp:smooth_discounted}
We make the following standard smoothness assumption on the class of parametrized stationary ASM policies. 
\begin{assumption}\label{assum:smooth_parame_ih}
    The class of parameterized stationary ASM policies, $\{\pi_{\theta}, \theta \in \bR^d\}$ satisfies the following: There exists $G, M > 0$ such that for every $s,\tilde{s} \in \cS$, $a, \tilde{a} \in \cA$, and $o \in \cO$, $$\|\nabla_\theta \log \pi_{\theta}(s,a \mid \tilde{s},\tilde{a},o)\|_2 \le G, \mbox{ and } \|\nabla_\theta^2 \log \pi_{\theta}(s,a \mid \tilde{s},\tilde{a},o)\|_2 \le M.$$
\end{assumption}

Under the above assumption, the parameterized infinite-horizon discounted policy gradient objective~\eqref{obj:ihdpg} is uniformly smooth, which is stated in the following lemma.

\begin{lemma}\label{lem:discounted_smooth}
    Let the ASM policy parameterization satisfy Assumption~\ref{assum:smooth_parame_ih} and $r_{\max}>0$ be such that $|r_t(\cdot)|\le r_{\max}$, for all $t \geq 1$. Then, $J_\gamma(\theta)$ is $\beta_\gamma$-smooth in $\theta$, i.e., $\|\nabla_\theta^2 J_\gamma(\theta)\|_2 \le \beta_\gamma$, where
    \begin{align}
        \beta_\gamma \coloneqq \frac{r_{\max}}{(1-\gamma)^2} \br{M + \frac{G^2(1+\gamma)}{1-\gamma}}.
    \end{align}
\end{lemma}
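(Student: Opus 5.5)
The plan mirrors the proof of Lemma~\ref{lem:finite_smooth}, with discounting in place of the finite horizon. First I will establish the Hessian identity on truncated objectives and then pass to the limit $T\to\infty$. For $T\ge 1$ let $J_{\gamma,T}(\theta):=\bE_{\pi_\theta,\mu}[R^\gamma_{1:T}]$, and write $Z_t(\theta)$ and $B_t(\theta)$ for the score and Hessian of $\log\pi_\theta(S_t,A_t\mid S_{t-1},A_{t-1},O_t)$. The $Q$-form of the truncated gradient, obtained exactly as in Proposition~\ref{prop:ih_value_grad_t} with finite-horizon $Q$-functions $Q^{\gamma}_{t,T}$, is
\[
\nabla_\theta J_{\gamma,T}(\theta)=\bE_{\pi_\theta}\Big[\sum_{t=1}^T\gamma^{t-1}Q^{\gamma}_{t,T}Z_t(\theta)\Big].
\]
This is a finite sum over finite trajectories. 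Differentiating it once more, using $\nabla_\theta\log P_{\pi_\theta}(\mathcal H_t)=\sum_{j\le t}Z_j$, gives three terms:
\[
\bE\Big[\sum_t\gamma^{t-1}Q^{\gamma}_{t,T}B_t\Big],\qquad
\bE\Big[\sum_t\gamma^{t-1}Z_t(\nabla_\theta Q^{\gamma}_{t,T})^\top\Big],\qquad
\bE\Big[\sum_t\gamma^{t-1}Q^{\gamma}_{t,T}Z_t\Big(\sum_{j\le t}Z_j\Big)^\top\Big].
\]

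Next I will bound each term uniformly in $T$ and $\theta$. Bounded rewards give $|Q^{\gamma}_{t,T}|\le r_{\max}/(1-\gamma)$.
\begin{itemize}
\item The first term is at most $\sum_t\gamma^{t-1}Mr_{\max}/(1-\gamma)=Mr_{\max}/(1-\gamma)^2$.
\item The third term is at most $\frac{r_{\max}G^2}{1-\gamma}\sum_t t\gamma^{t-1}=\frac{r_{\max}G^2}{(1-\gamma)^3}$.
\item For the second term, $Q^{\gamma}_{t,T}=r_t+\gamma\sum_{o}p_t(o\mid\cdot)V^{\gamma}_{t+1,T}$, and rewards and kernels are $\theta$-free. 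So, by the $Q$-form gradient of $V^{\gamma}_{t+1,T}$, $\nabla_\theta Q^{\gamma}_{t,T}=\gamma\,\bE\big[\sum_{\ell>t}\gamma^{\ell-t-1}Q^{\gamma}_{\ell,T}Z_\ell\mid\cdot\big]$. Its norm is at most $\gamma\cdot\frac{1}{1-\gamma}\cdot\frac{r_{\max}G}{1-\gamma}=\frac{\gamma r_{\max}G}{(1-\gamma)^2}$. The second term is therefore at most $\frac{G}{1-\gamma}\cdot\frac{\gamma r_{\max}G}{(1-\gamma)^2}=\frac{\gamma r_{\max}G^2}{(1-\gamma)^3}$.
\end{itemize}
Summing the three bounds gives $\frac{r_{\max}}{(1-\gamma)^2}\big(M+\frac{G^2(1+\gamma)}{1-\gamma}\big)=\beta_\gamma$, which is exactly the target constant.

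Finally, I pass to the limit. From Proposition~\ref{prop:ih_value_grad_t}, $J_{\gamma,T}\to J_\gamma$ and $\nabla J_{\gamma,T}\to\nabla J_\gamma$ uniformly in $\theta$. The bound $\|\nabla^2J_{\gamma,T}\|_2\le\beta_\gamma$ uniform in $T$ implies that each $\nabla J_{\gamma,T}$ is $\beta_\gamma$-Lipschitz. Uniform limits preserve the Lipschitz constant, so $\nabla J_\gamma$ is $\beta_\gamma$-Lipschitz. This is the smoothness claim; it also gives the Hessian bound wherever $\nabla^2 J_\gamma$ exists, and one can show existence by a Cauchy argument on the Hessians analogous to the gradient one.

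I expect the main obstacle to be this limit step, together with justifying the second differentiation. I will therefore work only with the finite-$T$ identities, where differentiation under finite sums is trivial, and deduce the statement for $J_\gamma$ via the Lipschitz-gradient formulation rather than by differentiating infinite series directly.
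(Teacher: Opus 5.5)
Your proposal is correct and follows the paper's proof: the same three-term decomposition of the Hessian obtained by differentiating the $Q$-form gradient, with the same bounds $Mr_{\max}/(1-\gamma)^2$, $\gamma r_{\max}G^2/(1-\gamma)^3$ (via $\|\nabla_\theta Q\|_2\le \gamma r_{\max}G/(1-\gamma)^2$) and $r_{\max}G^2/(1-\gamma)^3$. The only difference is that you do the computation on the truncated objectives $J_{\gamma,T}$ and transfer the $T$-uniform bound to $J_\gamma$ as a $\beta_\gamma$-Lipschitz gradient; this justifies the interchange of differentiation with the infinite series, which the paper performs directly without comment.
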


\begin{proof}[\textbf{Proof of Lemma~\ref{lem:discounted_smooth}}]
    From the policy gradient theorem (Theorem~\ref{thm:ihpg}),
    \begin{align*}
        \nabla_\theta J_\gamma(\theta) =
        \bE_{\pi_\theta}\sqbr{\sum_{t=1}^{\infty}{\gamma^{t-1} Q^{\gamma,\pi_\theta}_t(O_{1:t},A_{0:t},S_t) \nabla_\theta \log \pi_{\theta}(S_t, A_t \mid S_{t-1}, A_{t-1}, O_t)}}.
    \end{align*}
    Differentiating the above, we get,
    \begin{align*}
        \nabla_\theta^2 J_\gamma(\theta) &=
        \bE_{\pi_\theta}\sqbr{\sum_{t=1}^{\infty} \gamma^{t-1} Q^{\gamma,\pi_\theta}_t \nabla_\theta^2 \log \br{\pi_{\theta}}} + \bE_{\pi_\theta}\sqbr{\sum_{t=1}^{\infty}
        \gamma^{t-1} \nabla_\theta \log \br{\pi_{\theta}} (\nabla_\theta Q^{\gamma,\pi_\theta}_t)^\top} \\
        &\quad+ \bE_{\pi_\theta}\sqbr{\sum_{t=1}^{\infty}
        \gamma^{t-1} Q^{\gamma,\pi_\theta}_t \nabla_\theta \log \br{\pi_{\theta}}
        \br{\sum_{t\up=1}^t \nabla_\theta \log \br{\pi_{\theta}}}^\top
       }.
    \end{align*}
    Applying $2$-norm above,
    \begin{align}
        \norm{\nabla_\theta^2 J_\gamma(\theta)}_2 &\leq
        \bE\sqbr{\sum_{t=1}^{\infty} \gamma^{t-1} \abs{Q^{\gamma,\pi_\theta}_t} \norm{\nabla_\theta^2 \log \br{\pi_{\theta}}}_2} + \bE\sqbr{\sum_{t=1}^{\infty} \gamma^{t-1} \norm{\nabla_\theta \log \br{\pi_{\theta}}}_2 \norm{\nabla_\theta Q^{\gamma,\pi_\theta}_t}_2} \notag \\
        &\quad+ \bE\sqbr{\sum_{t=1}^{\infty} \gamma^{t-1} \abs{Q^{\gamma,\pi_\theta}_t} \norm{\nabla_\theta \log \br{\pi_{\theta}}}_2
        \sum_{t\up=1}^t \norm{\nabla_\theta \log \br{\pi_{\theta}}}_2}.\label{bdd:hess2norm_ih}
    \end{align}
    We will produce a uniform upper bound on the r.h.s. of the above inequality. From Assumption~\ref{assum:smooth_parame_ih}, we have $\norm{\nabla_\theta \log\br{\pi_{\theta}}}_2 \leq G$ and $\norm{\nabla^2_\theta \log\br{\pi_{\theta}}}_2 \leq M$. Also, note that $\abs{Q^{\gamma,\pi_\theta}_t} \le \frac{r_{\max}}{1-\gamma}$ which follows from the assumption $|r_t| \leq r_{\max}$. It remains to bound $\norm{\nabla_\theta Q^{\gamma,\pi_\theta}_t(o_{1:t}, a_{0:t}, s_t)}_2$ in order to derive a uniform bound on $\norm{\nabla^2_\theta J_\gamma(\theta)}$. 

    Since
    \begin{align*}
        R^\gamma_{t:\infty}
        = r_t(o_{1:t}, a_{1:t}) + \gamma R^\gamma_{t+1:\infty},
    \end{align*}
    we have
    \begin{align*}
        Q^{\gamma,\pi_\theta}_t(o_{1:t}, a_{0:t}, s_t)
        &= r_t(o_{1:t}, a_{1:t}) \\
        &\quad + \gamma \sum_{o_{t+1} \in \cO}{p_t(o_{t+1} \mid o_{1:t}, a_{1:t}) V^{\gamma,\pi_\theta}_{t+1}(o_{1:t+1}, a_{0:t}, s_t)}.
    \end{align*}
    Differentiating both sides with respect to $\theta$, and using that $r_t$ and $p_t$ do not depend on $\theta$, gives
    \begin{align*}
        \nabla_\theta Q^{\gamma,\pi_\theta}_t(o_{1:t}, a_{0:t}, s_t)
        = \gamma \sum_{o_{t+1} \in \cO}{p_t(o_{t+1} \mid o_{1:t}, a_{1:t}) \nabla_\theta V^{\gamma,\pi_\theta}_{t+1}(o_{1:t+1}, a_{0:t}, s_t)}.
    \end{align*}
    Applying Proposition~\ref{prop:ih_value_grad_t} in the above, we obtain,
    \begin{align}
        &\nabla_\theta Q^{\gamma,\pi_\theta}_t(o_{1:t}, a_{0:t}, s_t) \notag\\
        &= \bE_{\pi_\theta}\sqbr{\sum_{t\up=t+1}^{\infty}{\gamma^{t\up-t} Q^{\gamma,\pi_\theta}_{t\up}(O_{1:t\up}, A_{0:t\up}, S_{t\up}) \nabla_\theta \log\br{\pi_\theta(S_{t\up}, A_{t\up} \mid S_{t\up-1}, A_{t\up-1}, O_{t\up})}} \mid o_{1:t}, a_{0:t}, s_t}.
    \end{align}
    Using Jensen's inequality, Assumption~\ref{assum:smooth_parame_ih} and the bound on $\abs{Q^{\gamma,\pi_\theta}_{t\up}}$, we get
    \begin{align*}
        &\norm{\nabla_\theta Q^{\gamma,\pi_\theta}_t(o_{1:t}, a_{0:t}, s_t)}_2 \\
        &\le \bE_{\pi_\theta}\sqbr{\sum_{t\up=t+1}^{\infty}{\gamma^{t\up-t} \abs{Q^{\gamma,\pi_\theta}_{t\up}(O_{1:t\up}, A_{0:t\up}, S_{t\up})} \norm{\nabla_\theta \log\br{\pi_\theta(S_{t\up}, A_{t\up} \mid S_{t\up-1}, A_{t\up-1}, O_{t\up})}}_2} \mid o_{1:t}, a_{0:t}, s_t} \\
        &\le \sum_{t\up=t+1}^{\infty}{\gamma^{t\up-t} \cdot \frac{r_{\max}}{1-\gamma} \cdot G} \\
        &= \frac{\gamma G r_{\max}}{(1-\gamma)^2}.
    \end{align*}
    
    Substituting the bounds on $\norm{\nabla_\theta \log\br{\pi_{\theta}}}_2$ and $\norm{\nabla^2_\theta \log\br{\pi_{\theta}}}_2$, $\abs{Q^{\gamma,\pi_\theta}_t}$ and $\norm{\nabla_\theta Q^{\gamma,\pi_\theta}_t}_2$ in \eqref{bdd:hess2norm_ih}, we obtain,
    \begin{align*}
        \|\nabla_\theta^2 J_\gamma(\theta)\|_2 &\leq \frac{r_{\max} M}{1-\gamma} \sum_{t=1}^{\infty}{\gamma^{t-1}} + \frac{r_{\max} G^2 \gamma}{(1-\gamma)^2} \sum_{t=1}^{\infty}{\gamma^{t-1}} + \frac{r_{\max} G^2}{1-\gamma} \sum_{t=1}^{\infty}{t\, \gamma^{t-1}} \\
        &\leq \frac{r_{\max}M}{(1-\gamma)^2} + \frac{r_{\max} G^2\gamma}{(1-\gamma)^3} + \frac{r_{\max} G^2}{(1-\gamma)^3} = \beta_\gamma,
    \end{align*}
    which completes the proof.
\end{proof}

\subsection{Convergence results}
\label{subapp:proof_conv_discounted}
Next, we establish finite-time expected and high-probability convergence rates for the time-averaged gradient norm of ASMPG~(Algorithm~\ref{algo:ihdnmpg}) iterates with constant stepsize, along with finite-time and almost sure convergence results under decreasing stepsizes.
\begin{theorem}[Finite-time convergence of discounted ASM policy gradient]
    \label{thm:conv_ihpg}
    Consider the iterates $\{\theta_k\}_{k\ge 1}$ generated by Algorithm~\ref{algo:ihdnmpg}. Suppose that the stationary ASM policy parameterization satisfies Assumption~\ref{assum:smooth_parame_ih}. Let $\Delta_1 := \sup_{\theta\in\bR^d} J_\gamma(\theta)-J_\gamma(\theta_1)$, and let $\beta_\gamma$ be the smoothness constant in Lemma~\ref{lem:discounted_smooth}. Let $C_\gamma := \frac{2 r_{\max}G}{(1-\gamma)^2}$.
    Then, the following statements hold.
    \begin{enumerate}[a)]
        \item If $\alpha_k = \min\left\{\frac{1}{\beta_\gamma}, \frac{2}{C_\gamma}\sqrt{\frac{2\Delta_1}{\beta_\gamma K}}\right\},~ \forall k$, then
        \begin{align*}
            \frac{1}{K}\sum_{k=1}^K \mathbb{E}\!\left[\|\nabla_\theta J_\gamma(\theta_k)\|_2^2\right] \leq \frac{2\beta_\gamma\Delta_1}{K} + C_\gamma\sqrt{\frac{\beta_\gamma\Delta_1}{2K}}.
        \end{align*}
        \item With probability at least $1-\delta$, the iterates satisfy the following bounds:
        \begin{enumerate}[(i)]
            \item If $\alpha_k = \min\left\{\frac{1}{\beta_\gamma}, \frac{1}{C_\gamma}\sqrt{\frac{\Delta_1}{\beta_\gamma K}}\right\},~\forall k$, then
            \begin{align*}
                \frac{1}{K}\sum_{k=1}^{K}{\norm{\nabla_\theta J_\gamma(\theta_k)}_2^2} \leq \frac{2 \beta_\gamma \Delta_1}{K} + 5 C_\gamma \sqrt{\frac{\beta_\gamma\Delta_1}{K}} + \frac{12 C_\gamma^2 \log(1/\delta)}{K}.
            \end{align*}
            \item If $\alpha_k=\frac{1}{\beta_\gamma\sqrt{k}},~\forall k$, then
            \begin{align*}
                \frac{1}{K}\sum_{k=1}^{K}{\norm{\nabla_\theta J_\gamma(\theta_k)}_2^2} \leq \frac{2 \beta_\gamma \Delta_1 + 3 C_\gamma^2 (1+\log K) + 12 C_\gamma^2 \log(1/\delta)}{\sqrt{K}}.
            \end{align*}
        \end{enumerate}
        \item If $\{\alpha_k\}$ is any positive sequence satisfying $\sum_{k}{\alpha_k} = \infty$ and $\sum_{k}{\alpha^2_k} < \infty$, then with probability $1$, $\norm{\nabla_\theta J_\gamma(\theta_k)} \to 0$ as $k \to \infty$.
    \end{enumerate}
\end{theorem}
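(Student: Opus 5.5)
The plan is to reuse the episodic convergence proof (extended version of Theorem~\ref{thm:conv_fhpg}) essentially verbatim, with $J$, $\beta$, $C_H$ replaced by $J_\gamma$, $\beta_\gamma$, $C_\gamma$. That proof used only four ingredients: $\beta$-smoothness of the objective, conditional unbiasedness of the stochastic gradient, an almost-sure bound $\|\widehat g_k\|_2\le C/2$, and the resulting bound $\|\xi_k\|_2\le C$. So the work reduces to verifying these four facts in the discounted setting; the remaining algebra carries over unchanged.

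\textbf{Step 1 (ingredients).} Let $\mathcal{G}_k=\sigma(\tau_1,\dots,\tau_{k-1})$, so that $\theta_k$ is $\mathcal{G}_k$-measurable.
\begin{itemize}
\item \emph{Unbiasedness.} By \eqref{eq:ih_pg_return_form} of Theorem~\ref{thm:ihpg}, $\bE[\widehat g_{\gamma,k}\mid\mathcal{G}_k]=\nabla_\theta J_\gamma(\theta_k)$. The estimator in Algorithm~\ref{algo:ihdnmpg} is $\sum_t \gamma^{t-1}R^\gamma_{t:\infty}\nabla_\theta\log\pi_{\theta_k}$, since $\sum_{t'\ge t}\gamma^{t'-1}r_{t'}=\gamma^{t-1}R^\gamma_{t:\infty}$.
\item \emph{Almost-sure bound.} Using $|R^\gamma_{t:\infty}|\le r_{\max}/(1-\gamma)$ and Assumption~\ref{assum:smooth_parame_ih},
\[
\|\widehat g_{\gamma,k}\|_2\le \sum_{t\ge1}\gamma^{t-1}\frac{r_{\max}G}{1-\gamma}=\frac{r_{\max}G}{(1-\gamma)^2}=\frac{C_\gamma}{2}.
\]
By Jensen, $\|g_k\|_2\le C_\gamma/2$, hence $\|\xi_k\|_2\le C_\gamma$ and $\bE[\|\xi_k\|_2^2\mid\mathcal{G}_k]\le C_\gamma^2/4$.
\item \emph{Smoothness.} Lemma~\ref{lem:discounted_smooth} gives $\beta_\gamma$-smoothness.
\end{itemize}
The infinite series defining the estimator converges absolutely pointwise, so the estimator is well defined and the Theorem~\ref{thm:ihpg} expectation applies directly.

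\textbf{Step 2 (descent inequality).} The ascent lemma yields
\[
\alpha_k\|g_k\|_2^2\le J_\gamma(\theta_{k+1})-J_\gamma(\theta_k)-\alpha_k\langle g_k,\xi_k\rangle+\tfrac{\beta_\gamma C_\gamma^2}{8}\alpha_k^2 .
\]
Summing over $k$ and bounding $J_\gamma(\theta_{K+1})\le\sup J_\gamma$ (finite, since $|J_\gamma|\le r_{\max}/(1-\gamma)$, so $\Delta_1<\infty$) gives $S_K\le\Delta_1+Z_K+\frac{\beta_\gamma C_\gamma^2}{8}A_K$.

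\textbf{Step 3 (the three parts).}
\begin{itemize}
\item \emph{Part (a).} Take expectations to kill the martingale term, then plug in the constant stepsize and use $1/\alpha\le\beta_\gamma+1/\overline\alpha$, exactly as before.
\item \emph{Part (b).} Apply Hoeffding's lemma to $\langle g_k,\xi_k\rangle$, which is bounded by $C_\gamma\|g_k\|_2$. The exponential supermartingale plus Markov's inequality, with $\lambda=(C_\gamma^2\eta_K)^{-1}$, gives $S_K\le 2\Delta_1+\frac{\beta_\gamma C_\gamma^2}{4}A_K+2C_\gamma^2\eta_K\log(1/\delta)$. Specialize to the constant stepsize for (i) and to $\alpha_k=1/(\beta_\gamma\sqrt k)$ with $A_K\le(1+\log K)/\beta_\gamma^2$ for (ii).
\item \emph{Part (c).} Apply Robbins--Siegmund to $Y_k=\sup J_\gamma-J_\gamma(\theta_k)$ to get $\sum_k\alpha_k\|g_k\|_2^2<\infty$ a.s. Then $|v_{k+1}-v_k|\le\frac{\beta_\gamma C_\gamma^2}{2}\alpha_k$ (from smoothness and the bounded estimator), and the upcrossing argument gives $v_k\to0$.
\end{itemize}

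\textbf{Main obstacle.} Nothing genuinely new is required. The only delicate point is Step~1, confirming that Theorem~\ref{thm:ihpg}'s hypotheses hold under Assumption~\ref{assum:smooth_parame_ih} and that the untruncated estimator is what the algorithm computes. The estimator bound must come out as exactly $C_\gamma/2$ so that the constants $5$, $12$, $3$ in the statement are inherited unchanged.
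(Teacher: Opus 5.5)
Your proposal is correct and matches the paper's proof. The paper likewise derives the almost-sure bound $\|\widehat g_{\gamma,k}\|_2 \le r_{\max}G/(1-\gamma)^2 = C_\gamma/2$ and then reuses the episodic argument verbatim, replacing $C_H$ by $C_\gamma$ and $\beta$ by $\beta_\gamma$. Your extra checks (unbiasedness via Theorem~\ref{thm:ihpg}, finiteness of $\Delta_1$, and the identity $\sum_{t'\ge t}\gamma^{t'-1}r_{t'}=\gamma^{t-1}R^\gamma_{t:\infty}$) spell out steps the paper leaves implicit.
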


\begin{proof} 
    We first derive a bound on $\norm{\widehat{g}_{\gamma,k}}_2$. Since $|r_t|\le r_{\max}$ and by Assumption~\ref{assum:smooth_parame_ih}, we have almost surely
    \begin{align*}
        \norm{\widehat{g}_{\gamma,k}}_2 &\leq
        \sum_{t=1}^{\infty}{\abs{\sum_{t\up=t}^{\infty}{\gamma^{t\up-1} r_{t\up}(O^k_{1:t\up},A^k_{1:t\up})}} \norm{\nabla_\theta \log\br{\pi_{\theta_k}(S^k_t,A^k_t \mid S^k_{t-1},A^k_{t-1},O^k_t)}}_2} \\
        &\leq \sum_{t=1}^{\infty}{\gamma^{t-1} \frac{r_{\max} G}{1-\gamma}} \\
        &= \frac{r_{\max} G}{(1-\gamma)^2} = C_\gamma/2.
    \end{align*}
    The above bound implies a bounded-variance (and sub-Gaussian) noise condition with parameter $C_\gamma$. From here the proof follows from the proof of Theorem~\ref{thm:conv_fhpg}, by substituting $C_H$ with $C_\gamma$ and $\beta$ with $\beta_\gamma$.
\end{proof}

\section{Condition for ASM policies to be optimal}
\label{app:asd_optimality}

In this section, we identify a sufficient condition under which the restriction to the class of ASM policies while optimizing for the expected cumulative reward incurs no loss of optimality. Equivalently, there is an ASM policy that is optimal. This condition is that the agent state generated by the agent state dynamics is a sufficient statistic of the history for both rewards and future observations.

\begin{definition}[Ideal agent state dynamics]
    \label{def:ideal_asd}
    Consider an agent state dynamics $\nu=\{\nu_t\}_{t=1}^H$ and let the agent state process
    $\{S_t\}_{t=1}^H$ be generated recursively as
    \begin{align*}
        S_t \sim \nu_t(\cdot \mid S_{t-1},A_{t-1},O_t),\quad t\in[H],
    \end{align*}
    with $(S_0,A_0)=(s_0,a_0)$, where $(s_0,a_0)$ is a designated initial
    state-action pair. We say that $\nu$ is an \emph{ideal agent state dynamics} if, for every $t\in[H]$, there exists $\bar r_t:S \times A \to \bR$ such that 
    \begin{align*}
        r_t(O_{1:t},A_{1:t}) = \bar{r}_t(S_t,A_t),
    \end{align*}
    and for every $t\in[H-1]$, there exists $\bar{p}_t: S \times A \to \Delta_\cO$ such that
    \begin{align*}
        p_t(\cdot\mid O_{1:t},A_{1:t}) = \bar{p}_t(\cdot\mid S_t,A_t).
    \end{align*}
\end{definition}

Definition~\ref{def:ideal_asd} states that, given the current action, the agent state $S_t$ contains all information from the history that is relevant for the current reward and for predicting the next observation. Thus, once $S_t$ is known, the rest of the past observations and actions can be discarded.

The following lemma shows that once an ideal agent state dynamics is known, it suffices to optimize over only those decision rules that use only the current agent state. A closely related result appears in \citep{subramanian2022approximate}.

\begin{lemma}[Optimality of ASM policies under an ideal agent state dynamics]
    \label{lem:asd_markov_optimal}
    Suppose there exists an ideal agent state dynamics
    $\nu\ust=\{\nu_t\ust\}_{t=1}^H$. Then, there is no loss of optimality in restricting attention to the class of ASM policies that use $\nu\ust$ while maximizing the expected cumulative reward. In
    particular,
    \begin{align*}
        \max_{\phi} \bE_{\phi\circ \nu\ust,\mu}\sqbr{R_{1:H}} = \max_{\pi\in \Pi_{\mathrm{HR}}} \bE_{\pi,\mu}\sqbr{R_{1:H}},
    \end{align*}
    where $\phi=\{\phi_t:\cS\to\Delta_{\cA}\}_{t=1}^H$ ranges over
    agent-state-dependent Markov decision rules.
\end{lemma}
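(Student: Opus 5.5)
The inequality $\le$ is immediate, because every ASM policy $\phi\circ\nu\ust$ is itself a history-dependent randomized policy: marginalizing the internal randomness of the agent state yields a decision rule $\rho_t(\cdot\mid o_{1:t},a_{1:t-1})$. The work is in the reverse inequality. I would prove it by dynamic programming on an augmented process that carries the agent state. Concretely, take any $\pi\in\Pi_{\mathrm{HR}}$. Without changing its trajectory law on $(O,A)$, run the recursion $S_t\sim\nu\ust_t(\cdot\mid S_{t-1},A_{t-1},O_t)$ alongside it, using independent randomization. The agent never acts on these states, so $\bE_{\pi,\mu}[R_{1:H}]$ is unchanged. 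The problem is then a sequential decision problem whose information at time $t$ is the full history $I_t=(O_{1:t},S_{0:t},A_{0:t-1})$. Enlarging the information available to the controller can only increase the optimal value, so it suffices to show that the optimal value over policies measurable in $I_t$ is attained by a policy depending on $S_t$ alone.

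The key step is to show that $\{(S_t,A_t)\}$ is a controlled Markov chain whose rewards depend only on $(S_t,A_t)$. By Definition~\ref{def:ideal_asd}, the reward is $r_t=\bar r_t(S_t,A_t)$. For the transition, condition on $I_t$ and $A_t$. The next observation has law $p_t(\cdot\mid O_{1:t},A_{1:t})=\bar p_t(\cdot\mid S_t,A_t)$. Given $O_{t+1}$, the next state $S_{t+1}$ is drawn from $\nu\ust_{t+1}(\cdot\mid S_t,A_t,O_{t+1})$. Hence
\begin{align*}
\bP(S_{t+1}=s'\mid I_t,A_t)=\sum_{o\in\cO}\bar p_t(o\mid S_t,A_t)\,\nu\ust_{t+1}(s'\mid S_t,A_t,o)=:\bar P_t(s'\mid S_t,A_t),
\end{align*}
which depends only on $(S_t,A_t)$. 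One point needs care here: the state update kernel conditions on the realized observation, and the history $I_t$ contains the earlier $S$'s. Even so, the identity holds for every policy, because $A_t$ depends on $I_t$ only through the policy and $O_{t+1}$ depends on the past only through $(S_t,A_t)$. For $t=1$, the initial state $S_1\sim\sum_o\mu(o)\nu\ust_1(\cdot\mid s_0,a_0,o)$ does not depend on the policy.

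Next I would define the backward recursion $W_{H+1}\equiv 0$ and
\begin{align*}
W_t(s)=\max_{a\in\cA}\Big\{\bar r_t(s,a)+\sum_{s'}\bar P_t(s'\mid s,a)W_{t+1}(s')\Big\},
\end{align*}
with $\phi\ust_t(\cdot\mid s)$ a point mass on a maximizer. By backward induction on $t$, I would show that for any policy measurable in $I_t$, $\bE[R_{t:H}\mid I_t]\le W_t(S_t)$, with equality under $\phi\ust\circ\nu\ust$. The inductive step is the tower property combined with the transition identity above: conditional on $(I_t,A_t)$, the expected reward-to-go equals $\bar r_t(S_t,A_t)+\bE[\bE[R_{t+1:H}\mid I_{t+1}]\mid I_t,A_t]$, which is at most $\bar r_t+\sum_{s'}\bar P_t W_{t+1}$, and this in turn is at most $W_t(S_t)$. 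Taking expectations at $t=1$ gives $\bE_{\pi,\mu}[R_{1:H}]\le\bE[W_1(S_1)]=\bE_{\phi\ust\circ\nu\ust,\mu}[R_{1:H}]$. This proves the reverse inequality and shows that both maxima are attained, since the spaces are finite.

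The main obstacle is the conditional-independence step. I need $\bP(O_{t+1}\mid I_t,A_t)=\bar p_t(\cdot\mid S_t,A_t)$ even though $S_t$ is a randomized function of the history. This follows because the environment kernel depends only on $(O_{1:t},A_{1:t})$, and the definition of ideal dynamics asserts the equality almost surely along generated trajectories. I will state explicitly that the definition's equality is meant on the support of the joint law of $(O,A,S)$, for any policy.
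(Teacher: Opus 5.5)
Your proof is correct, but it takes a different route from the paper's.

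\textbf{The paper's argument.} The paper does not use a Bellman recursion. For a fixed $\pi\in\Pi_{\mathrm{HR}}$ run alongside $\nu\ust$, it defines $\phi^\pi_t(a\mid s):=\bP^{\pi,\nu\ust}(A_t=a\mid S_t=s)$. It then shows by forward induction that the time-$t$ marginals of $(S_t,A_t)$ coincide under $\pi$ and $\phi^\pi\circ\nu\ust$. The induction uses the same kernel $\bar P_t(s'\mid s,a)=\sum_o \bar p_t(o\mid s,a)\,\nu\ust_{t+1}(s'\mid s,a,o)$ that you derive. Since $r_t=\bar r_t(S_t,A_t)$, the two values are equal.

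\textbf{What each approach buys.} The paper's marginal-matching argument gives a policy-by-policy equivalence. Every history-dependent policy has an ASM policy with the same state-action marginals at every $t$, so the two agree on any objective (or constraint) built from those marginals. It also needs the transition identity only conditional on $(S_t,A_t)$, for a $\pi$ that ignores the agent state. Your backward induction compares against controllers that may use the whole augmented history $I_t$. You therefore need the identity conditional on $(I_t,A_t)$, and you justify this correctly. In exchange you get an explicit deterministic optimal rule $\phi\ust$. You also get attainment of both maxima directly, whereas the paper writes $\max$ and leaves attainment implicit (it holds by compactness of the $\phi$-simplices). Finally, you show optimality even against policies that observe $S_{0:t}$.

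Both arguments rely on the same reading of Definition~\ref{def:ideal_asd}: the reward and transition identities must hold on the support of the trajectory law under the relevant policies. You are right to make that explicit. One cosmetic point: $\bar P_H$ (and $\nu\ust_{H+1}$) is undefined, which is harmless in your recursion since $W_{H+1}\equiv 0$.
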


\begin{proof}
    Every ASM policy $\phi\circ\nu\ust$ induces a history-dependent
    policy. Therefore,
    \begin{align}
        \max_{\phi} \bE_{\phi\circ \nu\ust,\mu}\sqbr{R_{1:H}} \leq \max_{\pi\in \Pi_{\mathrm{HR}}} \bE_{\pi,\mu}\sqbr{R_{1:H}}. \label{ineq:leq}
    \end{align}
    It remains to prove the reverse inequality. Fix an arbitrary history-dependent policy $\pi=\{\pi_t\}_{t=1}^H\in\Pi_{\mathrm{HR}}$. Now consider the ideal agent state dynamics $\nu\ust$ which uses
    $$S_t \sim \nu_t\ust(\cdot\mid S_{t-1},A_{t-1},O_t).$$
    
    We will now construct an ASM policy $\phi^\pi\circ\nu\ust$ such that it will induce the same probability as that done by $\pi \circ\nu\ust$. Let $\bP^{\pi,\nu\ust}$ denote the probability measure induced by policy $\pi$ and agent state dynamics $\nu\ust$. For each $t\in[H]$ and $s\in\cS$, define
    \begin{align*}
        \phi_t^\pi(a\mid s) := \bP^{\pi,\nu\ust}(A_t=a\mid S_t=s),
    \end{align*}
    whenever $\bP^{\pi,\nu\ust}(S_t=s)>0$. If $\bP^{\pi,\nu\ust}(S_t=s) = 0$, define $\phi_t^\pi(\cdot\mid s)$ arbitrarily. We will show that the following holds:
    \begin{align}
        \bP^{\pi,\nu\ust}(S_t=s,A_t=a) = \bP^{\phi^\pi\circ\nu\ust}(S_t=s,A_t=a), \quad \forall (s,a)\in\cS\times\cA, \forall t \in [H]. \label{eq:ind_hyp}
    \end{align}
    We prove the claim by induction on $t$.
    
    The base case $t=1$: the distribution of $S_1$ is the same under both policies because $S_1$ is generated from the same initial pair $(S_0,A_0)=(s_0,a_0)$, the same initial observation distribution, and the same kernel $\nu_1\ust$. Moreover, by construction,
    \begin{align*}
        \bP^{\phi^\pi\circ\nu\ust}(A_1=a\mid S_1=s) = \phi_1^\pi(a\mid s) = \bP^{\pi,\nu\ust}(A_1=a\mid S_1=s)
    \end{align*}
    on the support of $S_1$. Hence, the claim holds for $t=1$.
    
    Now suppose that the \eqref{eq:ind_hyp} holds at time $t$. Since $\nu\ust$ is ideal, the
    conditional law of $O_{t+1}$ depends on the history only via $(S_t,A_t)$:
    \begin{align*}
        \bP(O_{t+1}=o\mid O_{1:t},A_{1:t}) = \bar{p}_t(o\mid S_t,A_t).
    \end{align*}
    Therefore, under the fixed agent state dynamics $\nu\ust$, the one-step transition kernel of the agent state is
    \begin{align*}
        \bar{P}_t(s'\mid s,a) := \sum_{o\in\cO} \bar{p}_t(o\mid s,a) \nu_{t+1}\ust(s'\mid s,a,o).
    \end{align*}
    Consequently,
    \begin{align*}
        \bP^{\pi,\nu\ust}(S_{t+1}=s') &= \sum_{s,a}{\bP^{\pi,\nu\ust}(S_t=s,A_t=a) \bar{P}_t(s'\mid s,a)} \\
        &= \sum_{s,a}{\bP^{\phi^\pi\circ\nu\ust}(S_t=s,A_t=a)\,
        \bar{P}_t(s'\mid s,a)} \\
        &= \bP^{\phi^\pi\circ\nu\ust}(S_{t+1}=s'),
    \end{align*}
    where the second equality uses the induction hypothesis. Finally, by the
    definition of $\phi_{t+1}^\pi$,
    \begin{align*}
        \bP^{\phi^\pi\circ\nu\ust}(A_{t+1}=a'\mid S_{t+1}=s') = \phi_{t+1}^\pi(a'\mid s') = \bP^{\pi,\nu\ust}(A_{t+1}=a'\mid S_{t+1}=s')
    \end{align*}
    on the support of $S_{t+1}$. Hence, the joint distribution of
    $(S_{t+1},A_{t+1})$ also agrees under the two policies. This completes the
    induction.
    
    Using the reward condition in Definition~\ref{def:ideal_asd}, we obtain
    \begin{align*}
        \bE_{\pi,\mu}\sqbr{R_{1:H}}
        &=
        \bE_{\pi,\nu\ust}
        \left[
            \sum_{t=1}^H r_t(O_{1:t},A_{1:t})
        \right] \\
        &= \sum_{t=1}^{H}{\bE_{\pi,\nu\ust}\sqbr{\bar{r}_t(S_t,A_t)}} \\
        &= \sum_{t=1}^{H}{\bE_{\phi^\pi\circ\nu\ust}\sqbr{\bar{r}_t(S_t,A_t)}} \\
        &= \bE_{\phi^\pi \circ \nu\ust,\mu}\sqbr{R_{1:H}}.
    \end{align*}
    Thus, for every history-dependent policy $\pi\in\Pi_{\mathrm{HR}}$, there
    exists an ASM policy $\phi^\pi\circ\nu\ust$ with the same value.
    Therefore,
    \begin{align}
        \max_{\pi\in\Pi_{\mathrm{HR}}} \bE_{\pi,\mu}\sqbr{R_{1:H}} \leq \max_{\phi} \bE_{\phi\circ \nu\ust,\mu}\sqbr{R_{1:H}}. \label{ineq:geq}
    \end{align}
    Combining the two \eqref{ineq:leq} and \eqref{ineq:geq} yields
    \begin{align*}
        \max_{\phi} \bE_{\phi\circ \nu\ust,\mu}\sqbr{R_{1:H}} = \max_{\pi\in \Pi_{\mathrm{HR}}} \bE_{\pi,\mu}\sqbr{R_{1:H}}.
    \end{align*}
\end{proof}

Lemma~\ref{lem:asd_markov_optimal} implies that an ideal agent state dynamics reduces the optimization problem from the class of all history-dependent policies to the class of ASM policies. In particular, if $S_t$ takes values in a low-dimensional space, then the agent state dynamics acts as an efficient compressor of the trajectory.

\begin{remark}[Examples of Ideal agent state dynamics]
    A canonical example is a POMDP, where the hidden environment state is not observed directly. However, the belief state, i.e., the conditional distribution of the hidden state given the observation-action history, is an information state. The belief-update recursion is therefore an ideal agent state dynamics, and Lemma~\ref{lem:asd_markov_optimal} recovers the classical fact that it suffices to optimize over policies that depend only on the belief state.
    
    Another example is a velocity-only CartPole system. Suppose the agent observes only the linear and angular velocities, while the initial positions are known and the observations are noise-free. The position variables can then be reconstructed recursively by integrating the velocity over time. In this case, the agent state dynamics is a deterministic integrator that reconstructs a sufficient state from the observation history, and the resulting agent state is sufficient for predicting future observations and rewards.
\end{remark}
\section{Properties of softmax ASM policies}
\label{app:aux}
Let $\cS$, $\cA$, and $\cO$ be finite sets and let $\theta \in \bR^{\abs{\cS}^2 \abs{\cA}^2\abs{\cO}H}$. The softmax ASM policy $\pi_\theta$, parameterized by $\theta$, is defined as,
\begin{align*}
    \pi_{\theta,t}(s,a\mid \tilde s,\tilde a,o) = \frac{\exp(\theta(\tilde s,\tilde a,o,s,a,t))}
    {\sum_{(s',a')\in \cS\times \cA}\exp(\theta(\tilde s,\tilde a,o,s',a',t))},
\end{align*}
for every $(\tilde{s},\tilde{a},o)\in \cS \times \cA \times \cO$, $(s,a) \in \cS \times \cA$ and $t \in [H]$.~The class of softmax ASM policies satisfies the uniform score and Hessian bounds required in Assumption~\ref{assum:smooth_parame_fh}, as shown in the following lemma.

\begin{lemma}[Lipschitz continuity and smoothness]
\label{lem:softmax_asm}
    Consider the class of softmax ASM policies $\{\pi_\theta : \theta \in \bR^{\abs{\cS}^2 \abs{\cA}^2\abs{\cO}H}\}$. For every
    $t\in[H]$,
    \begin{align*}
        \norm{\nabla_{\theta} \log \pi_{\theta,t}(s,a\mid \tilde s,\tilde a,o)}_2 \leq \sqrt{2},
    \end{align*}
    and
    \begin{align*}
        \norm{\nabla_{\theta}^2 \log\br{\pi_{\theta,t}(s,a\mid \tilde s,\tilde a,o)}}_2 \leq 1.
    \end{align*}
    Consequently, the softmax parameterized ASM policy class satisfies
    Assumption~\ref{assum:smooth_parame_fh} with $G=\sqrt{2}$ and $M=1$.
\end{lemma}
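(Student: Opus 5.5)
The plan is to reduce the statement to a fixed-context softmax computation. Fix $t$ and a conditioning triple $(\tilde s,\tilde a,o)$. Then $\log \pi_{\theta,t}(s,a\mid\tilde s,\tilde a,o)$ depends only on the block
\[
z := \bigl(\theta(\tilde s,\tilde a,o,s',a',t)\bigr)_{(s',a')\in\cS\times\cA}\in\bR^{n},\qquad n=\abs{\cS}\abs{\cA}.
\]
All other coordinates of $\theta$ have zero partial derivatives, so both the gradient and the Hessian are supported on this block. Their norms therefore equal the norms of the gradient and Hessian of
\[
f(z)=z_i-\log\sum_j e^{z_j},
\]
where $i$ indexes the pair $(s,a)$.

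Write $p=\mathrm{softmax}(z)$.

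\textbf{Gradient.} We have $\nabla f=e_i-p$, so
\[
\norm{e_i-p}_2^2=(1-p_i)^2+\sum_{j\ne i}p_j^2\le (1-p_i)^2+\Bigl(\sum_{j\ne i}p_j\Bigr)^2=2(1-p_i)^2\le 2.
\]
This gives the bound $\sqrt2$.

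\textbf{Hessian.} We have $\nabla^2 f=-(\mathrm{diag}(p)-pp^\top)$. The matrix $\mathrm{diag}(p)-pp^\top$ is positive semidefinite, since it is the covariance of a one-hot random vector. Its spectral norm is therefore at most its largest eigenvalue. For any unit vector $v$,
\[
v^\top(\mathrm{diag}(p)-pp^\top)v=\sum_j p_jv_j^2-\Bigl(\sum_j p_jv_j\Bigr)^2\le \sum_j p_jv_j^2\le \max_j v_j^2\le 1.
\]
This gives $M=1$.

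The only point needing care is the first step: confirming that the tabular parameterization makes the gradient and Hessian block-supported. This holds because each of the $\abs{\cS}\abs{\cA}$ logits enters exactly once, via its own coordinate of $\theta$. The statement is a direct computation; the PSD/covariance argument is the only mildly nontrivial step, and I do not expect any genuine obstacle. Finally, observe that the bounds are uniform in $\theta$, $t$, and the arguments, so Assumption~\ref{assum:smooth_parame_fh} holds with $G=\sqrt2$ and $M=1$.
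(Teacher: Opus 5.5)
Your proposal is correct and follows essentially the same route as the paper's proof. It restricts to the active parameter block, bounds the log-softmax gradient $e_i-p$ via $\norm{e_i-p}_2^2\le 2(1-p_i)^2\le 2$, and bounds the positive semidefinite matrix $\mathrm{diag}(p)-pp^\top$ through its quadratic form (you use $\max_j v_j^2$ where the paper uses $\sum_j v_j^2=1$, which is immaterial).
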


\begin{proof}
    Fix $t\in[H]$ and an input triple
    $x=(\tilde s,\tilde a,o)\in\cS\times\cA\times\cO$.
    For notational convenience, write $y=(s,a)\in\cS\times\cA$ and
    \begin{align*}
        p_\theta(y\mid x;t) := \pi_{\theta,t}(s,a\mid \tilde s,\tilde a,o).
    \end{align*}
    The parameter block corresponding to the input $x$ is
    $\{\theta(x,y',t) : y' \in \cS \times \cA\}$.
    
    For any $y'\in\cS\times\cA$, the derivative of the log-softmax is
    \begin{align*}
        \frac{\partial}{\partial \theta(x,y',t)}\log\br{p_\theta(y\mid x)} = \mathbf 1\{y'=y\} - p_\theta(y'\mid x;t).
    \end{align*}
    For parameter coordinates corresponding to input triples $x'\neq x$, the
    derivative is zero. Hence
    \begin{align*}
        \norm{\nabla_\theta \log\br{p_\theta(y\mid x;t)}}_2^2 &=
        \sum_{y' \in \cS \times \cA} \br{\mathbf 1\{y'=y\} - p_\theta(y'\mid x;t)}^2  \\
        &= (1 - p_\theta(y\mid x;t))^2 + \sum_{y'\neq y} p_\theta(y'\mid x;t)^2  \\
        &\leq (1 - p_\theta(y\mid x;t))^2 + \br{\sum_{y'\neq y}{p_\theta(y'\mid x;t)}}^2 \\
        &= 2(1-p_\theta(y\mid x))^2 \leq 2.
    \end{align*}
    Therefore,
    \begin{align*}
        \norm{\nabla_\theta \log\br{p_\theta(y\mid x; t)}}_2 \leq \sqrt{2}.
    \end{align*}
    Next, the Hessian with respect to the active parameter block is
    \begin{align*}
        \nabla_{\theta(x,\cdot,t)}^2 \log\br{p_\theta(y\mid x;t)} = -\br{\operatorname{diag}(p_\theta(\cdot\mid x;t)) - p_\theta(\cdot \mid x; t) p_\theta(\cdot \mid x; t)^\top}.
    \end{align*}
    All mixed derivatives involving parameter blocks $x'\neq x$ are zero. Thus, it suffices to bound the spectral norm of
    \begin{align*}
        \operatorname{diag}(p_\theta(\cdot\mid x; t)) - p_\theta(\cdot\mid x; t)\, p_\theta(\cdot\mid x; t)^\top.
    \end{align*}
    For any vector $v$ with $\norm{v}_2=1$,
    \begin{align*}
        v^\top \br{\operatorname{diag}(p_\theta(\cdot\mid x; t)) - p_\theta(\cdot\mid x; t)\, p_\theta(\cdot\mid x; t)^\top} v &= \sum_{y'} p_\theta(y'\mid x)v_{y'}^2 - \br{\sum_{y'}p_\theta(y'\mid x)v_{y'}}^2  \\
        &\leq \sum_{y'}p_\theta(y'\mid x)v_{y'}^2  \\
        &\leq \sum_{y'}v_{y'}^2 = 1.
    \end{align*}
    Since the above matrix is positive semidefinite, its spectral norm is at most $1$. Therefore,
    \begin{align*}
        \norm{\nabla_{\theta}^2 \log\br{\pi_{\theta,t}(s,a\mid \tilde s,\tilde a,o)}}_2 \leq 1.
    \end{align*}
    This proves the claim.
\end{proof}

\section{Simulation details}
\label{app:simulation_details}
\subsection{Environment description}
In this section, we describe the simulation environments in greater detail.

\begin{wrapfigure}{!t}{0.4\textwidth}
    \centering
    \resizebox{\linewidth}{!}{\begin{tikzpicture}[x=1cm,y=0.9cm]

% Top row
\node[state] (s0) at (0,2) {0};
\node[state] (s1) at (1,2) {1};
\node[state] (s2) at (2,2) {2};
\node[state] (s3) at (3,2) {3};
\node[state] (s4) at (4,2) {4};

% Middle row
\node[state] (s5) at (0,1) {5};
\node[state] (s6) at (2,1) {6};
\node[state] (s7) at (4,1) {7};

% Bottom row
\node[state] (s8) at (0,0) {8};
\node[goalstate] (s10) at (2,0) {10};
\node[state] (s9) at (4,0) {9};

% Horizontal edges (top row)
\draw[thick] (s0) -- (s1);
\draw[thick] (s1) -- (s2);
\draw[thick] (s2) -- (s3);
\draw[thick] (s3) -- (s4);

% Vertical edges
\draw[thick] (s0) -- (s5);
\draw[thick] (s2) -- (s6);
\draw[thick] (s4) -- (s7);

\draw[thick] (s5) -- (s8);
\draw[thick] (s6) -- (s10);
\draw[thick] (s7) -- (s9);

% Observation labels
\node[obslabel, above=1mm of s0] {$o=0$};
\node[obslabel, above=1mm of s1] {$o=1$};
\node[obslabel, above=1mm of s2] {$o=2$};
\node[obslabel, above=1mm of s3] {$o=1$};
\node[obslabel, above=1mm of s4] {$o=3$};

\node[obslabel, left=1mm of s5] {$o=4$};
\node[obslabel, right=1mm of s6] {$o=4$};
\node[obslabel, right=1mm of s7] {$o=4$};

\node[obslabel, below=1mm of s8] {$o=5$};
\node[obslabel, below=1mm of s9] {$o=5$};
\node[obslabel, below=1mm of s10] {$o=6$};

% Start note
\node[align=center, font=\small] at (2,-1.3)
{Start states: $0,\dots,9$ uniformly \\ Goal state: $10$};

\end{tikzpicture}}
    \caption{CheeseMaze.}
    \label{fig:cheesemaze}
\end{wrapfigure}

\textbf{CheeseMaze.} CheeseMaze is a partially observed navigation problem based on the maze example in~\citet{mccallum1993overcoming}.~The environment has $11$ latent states, labeled $0,\ldots,10$, where state $10$ is the terminal goal state. The agent observes only $7$ distinct observation symbols. Thus, multiple maze locations are observationally aliased as shown in Figure~\ref{fig:cheesemaze}.~At the beginning of each episode, the agent starts uniformly at random from one of the ten non-terminal states. At each step, it chooses one of four movement actions: north, south, east, or west. The agent receives a reward of $1$ upon reaching the goal state and $0$ otherwise. The episode terminates when the goal is reached.

The key difficulty is that the same observation can correspond to different latent maze locations requiring different actions. Hence, a policy that depends solely on the current observation cannot be optimal. Successful navigation requires the agent to infer its latent location from the history of previous observations and actions.

\begin{wrapfigure}{!t}{0.4\textwidth}
    \centering
    \resizebox{\linewidth}{!}{\begin{tikzpicture}[x=1cm,y=1cm]

% Draw outer accessible cells (7 x 4)
\foreach \x in {0,...,6} {
    \foreach \y in {0,...,3} {
        \draw[gray!50] (\x,\y) rectangle ++(1,1);
    }
}

% Blocked cells
\fill[black!25] (1,1) rectangle ++(1,1);
\fill[black!25] (2,1) rectangle ++(1,1);
\fill[black!25] (1,2) rectangle ++(1,1);
\fill[black!25] (2,2) rectangle ++(1,1);

\fill[black!25] (4,1) rectangle ++(1,1);
\fill[black!25] (5,1) rectangle ++(1,1);
\fill[black!25] (4,2) rectangle ++(1,1);
\fill[black!25] (5,2) rectangle ++(1,1);

% Thick boundary around blocked cells
\draw[thick] (1,1) rectangle (3,3);
\draw[thick] (4,1) rectangle (6,3);

% Start positions (corners)
\node[font=\scriptsize] at (0.5,3.5) {$S$};
\node[font=\scriptsize] at (6.5,3.5) {$S$};
\node[font=\scriptsize] at (0.5,0.5) {$S$};
\node[font=\scriptsize] at (6.5,0.5) {$S$};

% Goal
\fill[yellow!40] (3,2) rectangle ++(1,1);
\draw[thick] (3,2) rectangle ++(1,1);
\node at (3.5,2.5) {$G$};

% Labels
\node[align=center, font=\small] at (3.5,-0.8)
{$s$ denotes possible start locations \\ Observation: local wall pattern in N/E/S/W \\ Actions: North, East, South, West};

\end{tikzpicture}}
    \caption{HallwayNavigation.}
    \label{fig:hallway_nav}
\end{wrapfigure}

\textbf{HallwayNavigation.} HallwayNavigation is a deterministic, partially observed maze-navigation task introduced by \citet{mccallum1995instance}. The latent state is the agent's position in a $7\times 4$ grid containing two interior blocked regions~(See Figure~\ref{fig:hallway_nav}). The agent starts uniformly at one of the four corners and must reach the central goal cell. The available actions are north, east, south, and west. If an action would move the agent into a wall or a blocked cell, the agent remains in the same place. The agent does not observe its absolute grid position. Instead, it observes whether there is a wall immediately to the north, east, south, and west, giving up to $2^4=16$ possible observations. Reaching the goal gives a reward of $5.0$, attempting to move into a wall gives a reward of $-1.0$, and all other transitions give a reward of $-0.1$.

Note that different grid cells can have identical local wall configurations. Thus, the agent needs to take different actions to reach the goal from cells with the same observation. Therefore, the current observation is not a sufficient statistic for control, and the agent must use history to disambiguate its position.

\textbf{HealthcareTreatment.} Reinforcement learning has been applied to treatment planning and dynamic treatment regimes, typically using Markovian state-based formulations~\citep{liu2017deep,choudharyicu,Choi2024}. In realistic treatment settings, however, improvement\slash deterioration of health can depend on intervention history, since repeated therapy may accumulate toxicity and induce resistance, thereby altering the effect of future treatment~\citep{strobl2024modulate}. Motivated by this, we consider a sequential treatment-planning environment in which the agent observes the patient's current health state and chooses one of three interventions: no treatment, mild treatment, or aggressive treatment. These interventions have different efficacy-cost trade-offs: aggressive treatment has the largest immediate effect but also the highest cost. The environment also maintains two hidden variables: an accumulated toxicity burden and a treatment-resistance variable. Toxicity refers to the cumulative side effects of previous interventions. It increases after treatment, especially after aggressive treatment, and decays gradually over time. Treatment resistance represents reduced effectiveness of future interventions after repeated aggressive treatment. It increases when aggressive treatment is used and also decays gradually over time.

At each step, the patient's health is updated by combining three effects. First, the chosen intervention provides a treatment benefit. Second, this benefit is attenuated by the current hidden resistance level, so repeated aggressive treatment can make future treatments less effective. Third, accumulated hidden toxicity contributes a negative effect on health. Thus, aggressive treatment can be useful in the short term because it has the largest nominal treatment effect, but excessive aggressive treatment may be harmful in the long term because it increases both toxicity and resistance. This environment is non-Markovian because the agent observes only the current health state, not the hidden toxicity or resistance variables.

The reward is the post-treatment health minus the intervention cost. The episode terminates if the patient's health crosses either a recovery or failure threshold. In our implementation, crossing the upper recovery threshold $h=2.0$ gives a terminal reward of $+50$, while crossing the lower failure threshold $h=-1.0$ gives a terminal penalty of $-50$. Episodes start from health $h=0.5$ with zero toxicity and zero resistance.

\textbf{MachineRepair.} MachineRepair is a maintenance-control environment~\citep{eckles1968optimum} for a single deteriorating machine. Maintenance models often go beyond Markovian dynamics by allowing the degradation of the machine to depend on accumulated age or wear, and by allowing repairs to be imperfect rather than as-good-as-new \citep{pham1996imperfect,giorgio2011age}. Motivated by this, we consider a machine maintenance environment in which the agent observes only a coarse machine condition, healthy or degraded, and chooses between continuing operation and performing repair. In addition to the observed condition, the simulator maintains an unobserved wear variable that evolves with the usage and repair history: wear increases during operation and is only partially reduced by repair. The transition probabilities depend on this hidden wear level, so that a healthy machine becomes more likely to degrade as wear increases, a degraded machine becomes more likely to remain degraded when wear is high, and repair becomes less effective as accumulated wear grows. Consequently, two machines with the same observed condition can exhibit different transition behavior depending on their past usage and maintenance history. Continuing to operate the machine in the healthy state yields a positive reward, whereas in the degraded state, it yields a negative reward; repairs incur a fixed unit cost. This provides a simple maintenance benchmark with history-dependent dynamics in the spirit of age- and wear-dependent degradation and partially observed deterioration models~\citep{pang2023condition}.

\textbf{VelocityOnlyCartPole.} 
VelocityOnlyCartPole~\citep{morad2023popgym} is a partially observed variant of the classic CartPole control problem \citep{barto1983neuronlike,brockman2016openai}. In the standard CartPole system, the state vector consists of the cart position, cart velocity, pole angle, and pole angular velocity. The agent applies either a leftward or rightward force to the cart, and the objective is to keep the pole balanced while keeping the cart within the allowed track.

In the velocity-only version, the agent observes only velocity information: the cart velocity and the pole angular velocity. The cart position and pole angle are hidden. The agent receives a positive reward at each time step for keeping the pole upright and the system within its stability limits, and the episode terminates if the pole angle becomes too large, the cart leaves the allowed track, or the maximum episode length is reached.

Two trajectories with the same velocity components may require different actions depending on the cart and pole positions. The task is therefore non-Markovian, and effective control requires a history of past observations and actions. We note that the position variables can be reconstructed iteratively by integrating the corresponding velocity component, which aligns with the recursive structure of the agent state dynamics.

\subsection{Experiment details}
Table~\ref{tab:params} lists the number of underlying agent states and the dimensions of the hidden layers we used for ASMPG in each simulation environment. We use the following $10$ randomly generated integer values as seeds for $10$ runs: $1952$, $5235$, $8234$, $8386$, $1682$, $3659$, $9848$, $9119$, $6892$, and $9381$.

\begin{table}[t]
    \centering
    \begin{tabular}{|c|c|c|c|c|c|}
        \hline
        Environment & CheeseMaze & Hallway- & Healthcare- & MachineRepair & Velocity- \\
         &  & Navigation & Treatment &  & OnlyCartPole \\
        \hline
        $\abs{\cS}$ & $8$ & $8$ & $3$ & $8$ & $4$ \\
        \hline
        $d_h$ & $128$ & $64$ & $64$ & $64$ & $128$ \\
        \hline
    \end{tabular}
    \caption{$\abs{\cS}$ and $d_h$ values used for each simulation environment.}
    \label{tab:params}
\end{table}

\paragraph{Computational resources.} All experiments were run on a single workstation with an AMD Ryzen Threadripper PRO 5965WX 24-core CPU with 48 hardware threads, 125 GiB of RAM, and one NVIDIA RTX A6000 GPU with 49,140 MiB of GPU memory. The machine used NVIDIA driver version 570.211.01 and CUDA version 12.8. Across five environments and 10 seeds, the reported ASMPG experiments required approximately 25 GPU-hours; baseline runs required approximately 50 GPU-hours in total.

\bibliography{refs}

\end{document}